\documentclass[11pt]{article}

\usepackage[a4paper,margin=1in]{geometry}
\usepackage[utf8]{inputenc}
\usepackage[T1]{fontenc}
\usepackage{lmodern}
\usepackage{microtype}
\usepackage{amsmath,amssymb,amsfonts,amsthm,mathtools}
\usepackage{mathrsfs}
\IfFileExists{bbm.sty}{\usepackage{bbm}}{}
\usepackage{enumitem}
\usepackage{xcolor}
\usepackage{graphicx}
\usepackage{booktabs,tabularx,array}
\usepackage{hyperref}
\usepackage[nameinlink,capitalise]{cleveref}

\hypersetup{
  colorlinks=true,
  linkcolor=blue!60!black,
  citecolor=blue!60!black,
  urlcolor=blue!60!black
}

\newcommand{\DOI}[1]{\href{https://doi.org/#1}{\nolinkurl{doi:#1}}}

\newtheorem{theorem}{Theorem}[section]
\newtheorem{proposition}[theorem]{Proposition}

\newtheorem{corollary}[theorem]{Corollary}
\newtheorem{assumption}[theorem]{Assumption}
\newtheorem{definition}[theorem]{Definition}
\newtheorem{remark}[theorem]{Remark}

\newcommand{\R}{\mathbb{R}}

\newcommand{\E}{\mathbb{E}}

\newcommand{\dd}{\,\mathrm{d}}

\newcommand{\Law}{\operatorname{Law}}
\IfFileExists{bbm.sty}{\newcommand{\one}{\mathbbm{1}}}{\newcommand{\one}{\mathbf{1}}}

\newcommand{\includefigureorplaceholder}[2][]{%
  \IfFileExists{#2}{\includegraphics[#1]{#2}}{%
    \fbox{\parbox[c][0.22\textheight][c]{0.82\linewidth}{%
      \centering
      Figure file \texttt{\detokenize{#2}} was not supplied with the source.\\
      Add it beside this \texttt{.tex} file to restore the original figure.}}}}

\title{An Adjoint-Sensitivity Framework for Lost-in-the-Middle Phenomena in Causal Residual Transformers}

\makeatletter

\def\author#1#2{\expandafter\gdef\csname transformer@author#1\endcsname{#2}\transformer@setauthors}
\def\address#1#2{\expandafter\gdef\csname transformer@address#1\endcsname{#2}\transformer@setauthors}
\def\email#1#2{\expandafter\gdef\csname transformer@email#1\endcsname{#2}\transformer@setauthors}
\newcommand{\transformer@authorentry}[1]{%
  \csname transformer@author#1\endcsname\thanks{%
  \csname transformer@address#1\endcsname. Email: \href{mailto:\csname transformer@email#1\endcsname}{\texttt{\csname transformer@email#1\endcsname}}}}
\newcommand{\transformer@setauthors}{%
  \gdef\@author{%
    \@ifundefined{transformer@author1}{}{\transformer@authorentry{1}}%
    \@ifundefined{transformer@author2}{}{\and\transformer@authorentry{2}}%
  }%
}
\makeatother
\author1{Cheng Huan}
\address1{Department of Statistics and Data Science, The Chinese University of Hong Kong}
\email1{chenghuan@cuhk.edu.hk}
\author2{Hongwei Yuan}
\address2{Department of Mathematics, University of Macau}
\email2{hwyuan@um.edu.mo}
\date{}

\begin{document}
\maketitle

\begin{abstract}
We develop an adjoint-sensitivity framework for positional influence in causal residual Transformers and separate unconditional analytic results from conditional boundary-shape conclusions. The principal unconditional theorem is the residual-to-depth-flow estimate for layer controls converging in $L^1$, complemented by a finite-token-to-Volterra attention estimate that explicitly controls the first cells near the causal endpoint. We define a normalized adjoint-energy influence density and derive its exact evolution along full-batch gradient flow. The adjoint admits an exact generator-term decomposition into residual transmission, nonlocal Volterra, and local channels, including all covariance cross terms. Causal masking can amplify early-position sensitivity and residual identity paths can transmit a right-localized terminal bias, but neither mechanism alone forces a U-shaped profile. We therefore state boundary advantages under independently checkable energy, correlation, and local-channel bounds; these conditions are sufficient rather than necessary. Finite-token influence balancing, positional reweighting, and task-aligned observability are presented as diagnostics or regularizers with explicit differentiation requirements, computational costs, and limitations. Controlled simulations illustrate that each intervention controls its designated surrogate, while observability balance or outer-loop reweighting need not monotonically reduce the influence-based Lost-in-the-Middle diagnostic.
\end{abstract}

\noindent\textbf{Keywords:} positional influence density; Lost-in-the-Middle; primacy and recency; position bias; deterministic influence control; deterministic gradient flow; continuous-depth causal residual Transformers; causal masking; Volterra adjoint; observability regularization.\par
\smallskip
\noindent\textbf{MSC 2020:} 68T07; 49K15; 65L20.

\section{Introduction}
Long-context language models can accept large prompts without using every position equally. Empirical work documents substantial position sensitivity in retrieval and question answering, often with stronger performance near the beginning or end of a context than in its middle \cite{hsieh2024found,liu2024lost}. We use \emph{Lost-in-the-Middle} for this double boundary advantage and ask a narrower and purely theoretical question:
\begin{quote}
Can primacy, recency, and lost-in-the-middle be derived as consequences of deterministic continuous-time training dynamics and continuous-depth causal residual dynamics, and can one write principled influence-control laws that remove them?
\end{quote}

Our answer is affirmative in a precise but conditional sense.  We do not claim that the model reproduces every feature of a production Transformer or that one mechanism explains every long-context failure.  Rather, the paper first establishes the analytic framework in which positional sensitivity can be compared: the residual recursion converges to the continuous-depth flow when the interpolated layer controls converge in $L^1$, as quantified by Theorem \ref{thm:resnet-ode}, finite-token causal attention converges to the Volterra operator by Theorem \ref{thm:attention-continuum-error}, and the backward equation \eqref{eq:adjoint} defines the adjoint-energy influence measure in Definition \ref{def:influence}.  Proposition \ref{prop:influence-differentiability} then justifies differentiating its density along the gradient flow \eqref{eq:gd-flow}, leading to the exact evolution identity \eqref{eq:gd-mdot}.

Within this framework, causal masking and residual propagation provide two distinct \emph{available} channels.  The Volterra adjoint formula \eqref{eq:volterra-adjoint} transports covectors from later outputs to earlier input positions; Proposition \ref{prop:primacy-noncancellation} yields a quantitative cone-channel primacy effect under joint cross-data and cross-depth coercivity, together with a middle-channel upper bound that rules out the comparison-of-lower-bounds fallacy.  The residual Duhamel identity \eqref{eq:continuous-adjoint-duhamel}, together with the localized stability estimate \eqref{eq:residual-stability-bound}, shows that a right-localized terminal adjoint can persist to depth zero, but does not assert that the residual path creates such localization by itself.  Proposition \ref{prop:regional-recency-persistence} gives a dimensionally consistent squared-energy condition for this persistence.  Finally, the exact regional decomposition \eqref{eq:exact-regional-influence-decomposition} retains all channel cross-covariances, and Theorem \ref{thm:channel-energy-double-deficit} gives explicit sufficient conditions under which the middle average is smaller than both boundary averages, equivalently when the index \eqref{eq:gd-lim-index} is positive.

The unconditional results concern approximation, adjoint propagation, and influence-measure dynamics, whereas primacy, recency, and a U-shaped lost-in-the-middle profile are conditional mechanism conclusions.  This distinction is essential: causal geometry enlarges the future cone of early positions and residual dynamics preserve terminal sensitivity, but neither feature alone forces a U-shaped influence profile for every parameter state, readout, and data law.

\subsection{Relation to prior work and novelty}

\paragraph{Gradient attribution and influence in sequence models.}
Input gradients and integrated gradients quantify prediction sensitivity to input features \cite{sundararajan2017ig}; attention rollout and attention flow instead propagate attention-based relevance across layers \cite{abnar2020flow}. Our density is a normalized squared input-adjoint energy averaged over the data law. It is not an attention-weight explanation and does not satisfy the path axioms of integrated gradients. Its role is to provide a position-indexed sensitivity measure compatible with a forward--backward depth system and with exact regional energy identities.

\paragraph{Adjoint sensitivity and continuous-depth training.}
The use of adjoints for neural ordinary differential equations is standard \cite{chen2018neuralode}. The present contribution is not the adjoint method alone, but its combination with a causal Volterra derivative, a normalized positional measure, and a decomposition that distinguishes residual transmission from nonlocal causal amplification. The continuous-depth viewpoint is related to control formulations of deep networks \cite{ehanli2019mfoc,huan2026firstorder}; unlike a mean-field state equation, however, the vector field studied here has no independent law-valued coefficient.

\paragraph{Jacobian, stability, and observability regularization.}
Jacobian penalties have been used to improve robustness or stabilize implicit-depth models \cite{bai2021jacobian,hoffman2019jacobian}. Our observability construction is closer to a task-conditioned finite-horizon Gramian: it asks whether perturbations at different token positions remain visible through a fixed family of observation maps. Equalization is therefore relative to a specified task readout, not an intrinsic property of the network.

\paragraph{Long-context position bias and attention sinks.}
Lost-in-the-Middle behavior, positional-attention calibration, and attention sinks have been documented empirically \cite{hsieh2024found,liu2024lost,xiao2024streaming}. Chowdhury's exact birth-time theory \cite{chowdhury2026birth} derives a closed-form initialized U-shape under a Ces\`{a}ro causal-decoder abstraction. Our framework is complementary: it tracks a training-time adjoint density for a controlled continuous-depth model and gives conditional, not universal, boundary conclusions.

\paragraph{Continuum and operator limits of attention.}
Function-space formulations of attention establish continuum operators and discretization consistency in neural-operator settings \cite{calvello2024continuum}. Our Volterra limit specializes this perspective to normalized causal attention on a one-dimensional context interval and couples it to residual-depth and adjoint dynamics.

\paragraph{Novelty statement.}
Relative to the preceding strands of work, the contribution is an integrated forward--backward framework for analyzing positional sensitivity, rather than an unconditional theory asserting that causal Transformers must exhibit a particular boundary shape. On the analytic side, we establish well-posedness and discretization estimates for a continuous-depth causal residual flow with a Volterra attention operator, and couple this framework to an adjoint-based positional influence measure whose normalized evolution along deterministic gradient flow is explicit. On the mechanism side, we separate residual transmission, nonlocal causal-cone amplification, and local channels; retain their cross-covariances in exact regional energy identities; and derive independently checkable sufficient conditions for primacy, persistence of a right-boundary bias, and a middle deficit. On the methodological side, we formulate finite-token diagnostics and candidate regularizers---including influence balancing, positional reweighting, singular-channel tests, and task-conditioned observability balancing---while stating their differentiation requirements, estimator limitations, computational costs, and present validation scope. Accordingly, the approximation and sensitivity identities are unconditional under their stated regularity hypotheses, whereas the conclusions about primacy, recency, and a U-shaped Lost-in-the-Middle profile remain conditional.

\subsection{Mathematical viewpoint}

The framework has three components.
\begin{enumerate}[label=(\roman*)]
\item A causal residual Transformer is treated as an Euler discretization of a continuous-depth dynamical system. The causal mask is retained in the continuum limit as a Volterra support condition on the attention kernel.
\item Full-batch gradient descent is studied through its continuous-time gradient-flow limit. Thus training time is a deterministic variable $s$, and the parameter state follows a single trajectory rather than a stochastic law.
\item The induced distribution of positional influence is defined directly by the adjoint sensitivity density \eqref{eq:influence-density}. Primacy, recency, and lost-in-the-middle are then diagnosed from this density along the gradient-flow path.
\end{enumerate}

All claims are formulated at the level of deterministic continuous training, adjoint calculus, and structural properties of causal residual dynamics. The central message is simple: lost-in-the-middle is a positional sensitivity imbalance; the most direct theoretical cure is to control the training objective or architecture so that the normalized influence density matches a desired target, typically the uniform density.
Equivalently, the positional bias studied here is not treated as a static summary of attention weights. The relevant object is the normalized adjoint-energy density, and deterministic gradient flow changes this density by increasing the relative influence of positions whose adjoint energy decays more slowly than the positional average. Thus training determines which positions gain or lose sensitivity, while the residual identity path supplies a separate structural channel through which localized terminal or right-readout sensitivity can persist backward through depth and appear as recency.

\subsection{Contributions}

The contributions are separated into proved approximation/sensitivity results, conditional mechanism statements, and proposed diagnostics.
\begin{enumerate}[label=(\alph*),leftmargin=2em]
\item \textbf{Analytic well-posedness and discretization.} We formulate the causal residual Transformer as a controlled depth flow on $L^2$, prove regularity of the masked attention--FNN vector field on bounded trajectories, quantify residual-to-ODE shadowing for convergent controls, and establish finite-token-to-Volterra and readout-adjoint consistency estimates.
\item \textbf{Exact adjoint and influence identities.} We define the positional influence measure from input-adjoint energy, derive its exact normalized gradient-flow evolution, and give an exact regional residual/cone/local decomposition including all cross terms.
\item \textbf{Conditional sufficient conditions for boundary advantages.} We prove a cone-channel primacy criterion, a pathwise/moment-based residual-persistence estimate, and an independently checkable double-deficit theorem based on channel lower bounds, middle and local-channel upper bounds, and Cauchy--Schwarz correlation constants.
\item \textbf{Finite-token diagnostics and proposed regularizers.} We formulate influence balancing, positional reweighting, singular-channel diagnostics, and task-aligned observability balancing, and state their second-order differentiation requirements, estimator bias/variance, and computational complexity.
\end{enumerate}

\begin{theorem}[Core rigorous and conditional conclusions]\label{thm:main-concise}

Assume Assumptions \ref{ass:bounded} and \ref{ass:depth-profile}, together with the spatial regularity and differentiability hypotheses stated in the corresponding results below. 
\begin{enumerate}[label=(\roman*),leftmargin=2em]
\item If the interpolated layer controls converge to a limiting control in $L^1(0,T)$, the finite-depth residual recursion, including the split-layer defect, converges to the corresponding depth flow with the error stated in Theorem \ref{thm:resnet-ode}.
\item On bounded $C^1$ position fields and away from the left endpoint, finite-token masked attention converges to the continuum Volterra operator at order $O(L^{-1})$ as stated in Theorem \ref{thm:attention-continuum-error}; the endpoint is defined through the trace of the averaged operator.
\item The input-adjoint energy defines the influence measure of Definition \ref{def:influence}. Under Proposition \ref{prop:influence-differentiability}, its $L^1$ density is differentiable along gradient flow and satisfies \eqref{eq:gd-mdot} in $L^1(0,1)$.
\item The Volterra identity \eqref{eq:volterra-adjoint} supplies a possible left-boundary amplification channel. Under the joint random-kernel, cross-depth coercivity and middle upper-bound conditions of Proposition \ref{prop:primacy-noncancellation}, the depth-integrated cone channel has a quantitative left advantage. The residual identity formula \eqref{eq:continuous-adjoint-duhamel} preserves any sufficiently strong right-localized terminal sensitivity, and Proposition \ref{prop:regional-recency-persistence} quantifies in regional squared energy when that right bias survives to depth zero.
\item Lost-in-the-middle at scale $\delta$ is equivalent to positivity of the lost-in-the-middle index \eqref{eq:gd-lim-index}. Theorem \ref{thm:channel-energy-double-deficit} gives sufficient channel and cross-covariance conditions for this inequality. Under the extra $L^2$-valued differentiability hypothesis of Proposition \ref{prop:influence-l2-differentiability}, the regularizers in \eqref{eq:gd-target-penalty}, \eqref{eq:weighted-cross-entropy}, and \eqref{eq:gd-observability-penalty} are proposed mechanisms for reducing the measured imbalance; they are not asserted to solve it without the additional optimization conditions stated in the algorithmic discussion.
\end{enumerate}
\end{theorem}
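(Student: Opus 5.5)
This theorem is a summary statement, so the plan is to assemble it item by item from the results it cites. The only genuine work is checking that the standing hypotheses (Assumptions \ref{ass:bounded} and \ref{ass:depth-profile}, plus the regularity hypotheses of the cited results) are what those results require, and that the quantities they control are the ones named here.

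For (i), I will invoke Theorem \ref{thm:resnet-ode} directly. The argument behind it is a discrete Gr\"onwall shadowing estimate. First, Lipschitz bounds for the masked attention--FNN vector field on bounded trajectories follow from Assumption \ref{ass:bounded}. Second, I split the error into three parts: a local truncation error of order $1/N$; an $L^1(0,T)$ control-mismatch term; and the split-layer defect, which is bounded by the same Lipschitz constant. Summing these with the Gr\"onwall factor $e^{\mathrm{Lip}\,T}$ gives convergence whenever the interpolated controls converge in $L^1$.

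For (ii), I will cite Theorem \ref{thm:attention-continuum-error}. Its proof is a Riemann-sum comparison of the normalized causal softmax. The numerator and denominator sums over $j\le i$ each approximate their Volterra integrals with $O(L^{-1})$ error on $C^1$ fields. The denominator is bounded below by $c\,x$ for $x$ away from $0$, and this lower bound is what lets the quotient error stay $O(L^{-1})$. At the left endpoint the denominator degenerates, and I will simply point to the trace definition of the averaged operator, as the statement does.

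For (iii), Definition \ref{def:influence} together with Proposition \ref{prop:influence-differentiability} gives differentiability of the normalized density $m_s=e_s/\int e_s$ in $L^1(0,1)$. Applying the quotient rule to this ratio yields \eqref{eq:gd-mdot}. The point to verify is that the normalizer stays bounded away from zero along the flow; that is part of the proposition's hypotheses.

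For (iv), the first half is just the statement of the Volterra adjoint identity \eqref{eq:volterra-adjoint} and the residual Duhamel identity \eqref{eq:continuous-adjoint-duhamel}. The quantitative claims are exactly Propositions \ref{prop:primacy-noncancellation} and \ref{prop:regional-recency-persistence}, under their respective conditions.

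For (v), the equivalence is elementary. Lost-in-the-middle at scale $\delta$ says the middle regional average is below both boundary averages. This is the same as the minimum of the two differences being positive, which is the index \eqref{eq:gd-lim-index}. Sufficiency then comes from Theorem \ref{thm:channel-energy-double-deficit}. The regularizer sentence is deliberately non-assertive: it needs only that the penalties \eqref{eq:gd-target-penalty}, \eqref{eq:weighted-cross-entropy}, and \eqref{eq:gd-observability-penalty} are well defined and differentiable, which is supplied by Proposition \ref{prop:influence-l2-differentiability}.

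I expect the main obstacle to be bookkeeping rather than new analysis: making sure the hypotheses of each cited result are implied by, or explicitly added to, those of the theorem. This is most delicate in (iii) and (v), where the $L^1$ versus $L^2$ differentiability classes differ. I would first check that the positivity of the normalizer and the bounded-trajectory constants are uniform on the training-time interval under consideration.
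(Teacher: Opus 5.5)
Your plan matches the paper: Theorem \ref{thm:main-concise} has no separate proof. It is exactly the conjunction of Theorems \ref{thm:resnet-ode}, \ref{thm:attention-continuum-error} and \ref{thm:channel-energy-double-deficit}, Propositions \ref{prop:influence-differentiability}, \ref{prop:primacy-noncancellation} and \ref{prop:regional-recency-persistence}, the quotient-rule derivation of \eqref{eq:gd-mdot}, and the elementary sign equivalence for \eqref{eq:gd-lim-index}, which uses $\min\{\overline m_L^{\rm GD},\overline m_R^{\rm GD}\}+\varepsilon_0>0$. Your sketches of (i)--(ii) follow the paper's Gr\"onwall and Riemann-quotient arguments.

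Two attributions should be corrected, though neither is a gap. First, Proposition \ref{prop:influence-differentiability} does not assume the normalizer is bounded below; that uniform bound $z_0$ is a hypothesis of Proposition \ref{prop:influence-l2-differentiability}. For \eqref{eq:gd-mdot} you only need $Z_s>0$ at the given $s$, which is already required to define $m_s^{\rm GD}$, and continuity of $Z$ then justifies the quotient rule locally. Second, Proposition \ref{prop:influence-l2-differentiability} only covers the squared-density penalty \eqref{eq:gd-target-penalty}, not \eqref{eq:weighted-cross-entropy} or \eqref{eq:gd-observability-penalty}; this is harmless because item (v) asserts nothing about those regularizers.
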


\paragraph{Organization of the paper.}
Section~\ref{sec:continuous-depth} gives the model, parameter topology, well-posedness, joint discretization results, and adjoint influence definition. Section~\ref{sec:gd-flow} derives influence-density dynamics, the exact channel decomposition, independently verifiable sufficient conditions and countervailing terms, and the proposed algorithms with complexity estimates. Section~\ref{sec:simulations} reports controlled finite-token experiments. Section~\ref{sec:conclusion} separates proved statements, conditional conclusions, and open problems.

\section{Continuous-Depth Residual Transformers}\label{sec:continuous-depth}
This section fixes the notation and turns a finite-depth causal residual Transformer into a continuous-depth controlled flow. The construction is intentionally explicit about the causal mask, since the triangular support of masked attention is one architectural source of boundary-sensitive positional dynamics.
\subsection{Notation convention}

At finite sequence length, we use the one-based convention
\begin{align*}
X=(x_1,\ldots,x_L)^\top\in\R^{L\times d_x},
\end{align*}
where $d_x$ is the hidden dimension and $L$ is the context length. The vocabulary dimension is $d_v$, the token label at position $l\in\{1,\ldots,L\}$ is denoted by $Y_l(X_0)\in\{e_1,\ldots,e_{d_v}\}$, and $H_L(X)\in\R^{L\times d_v}$ denotes the discrete logit readout. We set $p_l=l/L$ and use the right-endpoint cells $I_l=((l-1)/L,l/L]$. The token-averaged cross-entropy loss is denoted by $\mathcal \ell(Y,H_L(X))$; see \eqref{eq:discrete-cross-entropy} and \eqref{eq:continuum-cross-entropy}. The subscript in the input state $X_0$ denotes depth time and is unrelated to token indexing.

To avoid overloading notation, we distinguish the following variables throughout the paper:
\begin{itemize}[leftmargin=2em]
\item $t\in[0,T]$ is \emph{depth time}. A finite-depth Transformer recursion with step size $\varepsilon=T/M$ is regarded as an explicit Euler discretization of a controlled flow.
\item $\theta_t\in\Theta$ is the depth-dependent layer-parameter control, and $\zeta(t)$ is a scalar depth profile as in \cite{huan2026firstorder}. For a fixed control path, $\rho_t^\theta=\Law(X_t^\theta)$ denotes the law of hidden states along depth.
\item $s\ge0$ is \emph{training time}, the continuous limit of full-batch gradient descent iterations. The deterministic parameter state is denoted by $\vartheta_s$ and evolves according to the gradient flow \eqref{eq:gd-flow}.
\item $p\in[0,1]$ is normalized context position. The main object of this paper is the positional influence density $m_s^{\rm GD}(p)$ along gradient flow, which is distinct from the hidden-state law $\rho_t^\theta$.
\end{itemize}

\subsection{Context continuum and hidden states}

Let $\Omega=(0,1)$ be the normalized context interval. The base state space for the depth evolution is
\[
\mathcal X:=L^2(\Omega;\R^{d_x}),
\]
while sampling and pointwise quadrature statements are made on the regular class
\[
\mathcal X_{\rm reg}:=H^1(\Omega;\R^{d_x})\cap L^\infty(\Omega;\R^{d_x}).
\]
In one spatial dimension, $H^1$ functions have continuous representatives, so $X_t(p)$ is meaningful for $X_t\in\mathcal X_{\rm reg}$. The continuous-depth well-posedness results are formulated in $\mathcal X$; whenever a theorem uses pointwise samples or Lipschitz quadrature, the additional spatial regularity is stated explicitly. A ``perturbation at position $p$'' is interpreted through perturbations supported on shrinking cells around $p$, not through a Dirac mass, which is not an element of $L^2$.

Following the explicit-Euler notation of \cite{huan2026firstorder}, a residual block with $M$ layers, depth step size $\varepsilon=T/M$, and depth profile $\zeta$ has the form
\begin{equation}\label{eq:discrete-residual}
X_{k+1}^{\varepsilon}(p)=X_k^{\varepsilon}(p)+\varepsilon\,\zeta(k\varepsilon)\mathscr F_{\theta_k}\bigl(X_k^{\varepsilon}\bigr)(p),
\qquad k=0,\ldots,M-1.
\end{equation}
Here $\theta_k$ denotes the trainable layer parameter and $\mathscr F_{\theta_k}$ is the complete masked-attention plus tokenwise feed-forward vector field defined in \eqref{eq:complete-vector-field-finite}.  No independent empirical-law or mean-field coefficient is introduced in the modeled dynamics.

For clarity, we now spell out the masked Transformer block that defines the induced vector field $\mathscr F_\theta$.
At finite sequence length, let \(\mathrm{LN}\) denote the regularized smooth layer-normalization map defined tokenwise as follows: for \(z\in\R^{d_x}\), set
\begin{equation}\label{eq:smooth-layernorm}
\bar z:=\frac1{d_x}\sum_{m=1}^{d_x}z_m,
\qquad
\mathrm{LN}(z)
:=\gamma_{\rm LN}\odot
\frac{z-\bar z\mathbf 1}
{\left(d_x^{-1}\sum_{m=1}^{d_x}(z_m-\bar z)^2+\eta_{\rm LN}\right)^{1/2}}
+\beta_{\rm LN},
\qquad \eta_{\rm LN}>0,
\end{equation}
where \(\gamma_{\rm LN},\beta_{\rm LN}\in\R^{d_x}\) are bounded affine parameters and \(\eta_{\rm LN}\) removes the variance singularity. For a sequence \(X\), \(\mathrm{LN}(X)_i\) means \(\mathrm{LN}(X_i)\). Let $N_h$ be the number of attention heads, and for head $h\in\{1,\ldots,N_h\}$, write
\begin{align*}
q_i^h=W_{Q,h}\mathrm{LN}(X)_i,\qquad
k_j^h=W_{K,h}\mathrm{LN}(X)_j,\qquad
v_j^h=W_{V,h}\mathrm{LN}(X)_j.
\end{align*}
The causal mask enters through the lower-triangular attention weights
\begin{equation}\label{eq:causal-attention-weights}
A_{ij}^h(X)
=\frac{\one_{\{1\le j\le i\}}\exp\!\left(\langle q_i^h,k_j^h\rangle/\sqrt{d_k}\right)}
{\sum_{r=1}^{i}\exp\!\left(\langle q_i^h,k_r^h\rangle/\sqrt{d_k}\right)},
\qquad 1\le i,j\le L,
\end{equation}
where $d_k$ is the head dimension. The masked multi-head attention and feed-forward maps are
\begin{align}
\mathrm{Attn}_{\theta}^{\rm c}(X)_i
&=W_O\bigl[\sum_{j=1}^{i}A_{ij}^1(X)v_j^1,\ldots,
\sum_{j=1}^{i}A_{ij}^{N_h}(X)v_j^{N_h}\bigr],\label{eq:masked-mha}\\
\mathrm{FNN}_{\theta}(Z)_i
&=W_2\,\sigma\bigl(W_1\mathrm{LN}(Z)_i+b_1\bigr)+b_2, \qquad 1\le i\le L.\label{eq:mlp-block}
\end{align}
A pre-normalized residual Transformer layer may therefore be written as the split Euler step
\begin{equation}\label{eq:complete-transformer-layer}
\widetilde X_{k+1}^{\varepsilon}=X_k^{\varepsilon}
+\varepsilon\zeta(k\varepsilon)\mathrm{Attn}_{\theta_k}^{\rm c}(X_k^{\varepsilon}),
\qquad
X_{k+1}^{\varepsilon}=\widetilde X_{k+1}^{\varepsilon}
+\varepsilon\zeta(k\varepsilon)\mathrm{FNN}_{\theta_k}(\widetilde X_{k+1}^{\varepsilon}).
\end{equation}
Equivalently, up to an $O(\varepsilon^2)$ splitting error, \eqref{eq:complete-transformer-layer} has the one-field form \eqref{eq:discrete-residual} with
\begin{equation}\label{eq:complete-vector-field-finite}
\mathscr F_{\theta_k}(X_k^{\varepsilon})
=\mathrm{Attn}_{\theta_k}^{\rm c}(X_k^{\varepsilon})+
\mathrm{FNN}_{\theta_k}(X_k^{\varepsilon}).
\end{equation}
In the continuum-position notation, the corresponding masked attention component has Volterra form
\begin{align}
\mathrm{Attn}_{\theta}^{\rm c}(X)(p)
&=W_O\left[\int_{0}^{p}A_\theta^1(p,q;X)W_{V,1}\mathrm{LN}(X(q))\dd q,\ldots,
\int_{0}^{p}A_\theta^{N_h}(p,q;X)W_{V,N_h}\mathrm{LN}(X(q))\dd q\right],\label{eq:continuum-causal-attention}\\
A_\theta^h(p,q;X)
&=\frac{\one_{\{0\le q\le p\}}\exp\!\left(\langle W_{Q,h}\mathrm{LN}(X(p)),W_{K,h}\mathrm{LN}(X(q))\rangle/\sqrt{d_k}\right)}
{\int_{0}^{p}\exp\!\left(\langle W_{Q,h}\mathrm{LN}(X(p)),W_{K,h}\mathrm{LN}(X(r))\rangle/\sqrt{d_k}\right)\dd r},
\qquad p>0.
\label{eq:continuum-causal-kernel}
\end{align}
For $p>0$, it is sometimes convenient to encode the normalized kernel as the headwise prefix-attention probability measure
\begin{equation}\label{eq:attention-induced-measure}
\mu_{\theta,X}^{h,p}(\dd q)
:=A_\theta^h(p,q;X)\dd q,
\qquad
\mu_{\theta,X}^{h,p}([0,p])=1,
\qquad
\operatorname{supp}\mu_{\theta,X}^{h,p}\subset[0,p].
\end{equation}
Thus the notation $\mu_X$ used below, when it is useful as shorthand, refers only to the collection $\{\mu_{\theta,X}^{h,p}\}_{h,p}$ induced by \eqref{eq:continuum-causal-attention}--\eqref{eq:continuum-causal-kernel}; it is not an additional law-valued state variable.
The kernel itself is not assigned a finite value at $p=0$: even for a constant score it equals $p^{-1}\one_{[0,p]}(q)$. What has a trace is the averaged attention operator. For $X\in\mathcal X_{\rm reg}$ we define
\begin{equation}\label{eq:continuum-attention-left-trace}
\mathrm{Attn}_{\theta}^{\rm c}(X)(0)
:=W_O\bigl[W_{V,1}\mathrm{LN}(X(0)),\ldots,
W_{V,N_h}\mathrm{LN}(X(0))\bigr],
\end{equation}
which is the limit of \eqref{eq:continuum-causal-attention} as $p\downarrow0$. Thus the first-token rule is represented by the trace of the operator, not by the pointwise limit of the normalized kernel. Globally in $L^2$, the model operator has the same endpoint structure as the Hardy averaging operator $p^{-1}\int_0^p f(q)\dd q$, whose boundedness is used in the regularity argument.

The essential point is the support condition $A_\theta^h(p,q;X)=0$ for $q>p$. Causal masking therefore makes the depth dynamics triangular in the position variable: position $p$ can aggregate only positions $q\le p$. This triangular structure is one of the architectural sources that can create boundary wells in the effective positional potential $V$ and links the present training-time influence theory to the birth-time position-bias mechanism of \cite{chowdhury2026birth}. Throughout the regularity, depth-flow, adjoint, and primacy analyses below, $\mathscr F_\theta$ denotes only the causal masked-attention plus tokenwise feed-forward block in \eqref{eq:complete-vector-field-finite}--\eqref{eq:continuum-causal-kernel}.  Equivalently, the only retained $X$-dependent measure is the prefix-attention family \eqref{eq:attention-induced-measure}.  Every quantity at output position $p$ therefore depends only on $X|_{[0,p]}$; no independent law of $X$, global empirical statistic, or other noncausal lifting is included.

All matrix, bias, layer-normalization, and readout parameters are collected in a finite-dimensional product space
\[
\Theta\cong\R^{d_\theta},
\]
equipped with the Euclidean product norm (equivalently, with the Frobenius norm on matrix blocks and the Euclidean norm on vector blocks). Control paths are measured in the Bochner norms $L^1(0,T;\Theta)$ and $L^\infty(0,T;\Theta)$.

\begin{assumption}[Parameter topology and boundedness]\label{ass:bounded}
The admissible set $\Theta_{\rm ad}\subset\Theta$ is closed and bounded, hence compact since $\Theta$ is finite dimensional. All uniform parameter constants below depend only on explicit bounds for the blocks in $\Theta_{\rm ad}$. The regularized layer-normalization map \eqref{eq:smooth-layernorm} and the activation $\sigma$ have bounded first derivatives on bounded sets.
\end{assumption}

\begin{assumption}[Depth profile and depth-control regularity]\label{ass:depth-profile}
The scalar depth profile is nonnegative and satisfies
\begin{align*}
0\le \zeta(t)\le \|\zeta\|_{L^\infty(0,T)}<\infty
\quad\text{for a.e. }t,
\qquad
\mathrm{Lip}(\zeta)<\infty.
\end{align*}
The depth-control path \(t\mapsto\theta_t\) is Lipschitz, or piecewise Lipschitz with finitely many pieces and uniform Lipschitz constants, with values in a bounded admissible parameter set \(\Theta_{\rm ad}\).  
\end{assumption}

\begin{remark}
The assumption for the limiting depth-control path \(t\mapsto\theta_t\) is a continuous-depth idealization of the finite-layer situation.  An $M$-layer network defines a layerwise interpolation $\theta^\varepsilon$, but bounded weights alone do not force these interpolants to converge to one common control as $M\to\infty$.  Theorem \ref{thm:resnet-ode} therefore distinguishes two statements: an $O(\varepsilon)$ Euler shadowing estimate for each interpolated control, and convergence to a single continuous-depth flow only when $\theta^\varepsilon\to\theta$ in $L^1(0,T)$.  Smooth or bounded-variation depth parameterizations are standard sufficient ways to obtain this convergence.
\end{remark}

\begin{proposition}[Regularity of the masked Transformer vector field]\label{prop:transformer-lipschitz}
Assume that Assumptions \ref{ass:bounded} and \ref{ass:depth-profile} hold.  Let
\begin{equation}\label{eq:induced-transformer-field}
\mathscr F_\theta(X)
:=\mathrm{Attn}_{\theta}^{\rm c}(X)+\mathrm{FNN}_{\theta}(X),
\end{equation}
where the attention term is given by \eqref{eq:continuum-causal-attention}--\eqref{eq:continuum-causal-kernel}, or equivalently by integration against the prefix-attention measures \eqref{eq:attention-induced-measure}.  Assume also that the matrix and affine parameterization of $\mathscr F_\theta$ is locally Lipschitz in $\theta$, uniformly on bounded state sets.
Then $\mathscr F_\theta(X)\in\mathcal X$, and the induced field has the following properties on bounded subsets of $\mathcal X$.  For every $R>0$ there exists $C_{F,R}>0$ such that, for all admissible $\theta\in\Theta_{\rm ad}$ and all $X,Y\in\mathcal X$ with $\|X\|_{\mathcal X},\|Y\|_{\mathcal X}\le R$,
\begin{equation}\label{eq:transformer-field-state-lipschitz}
\|\mathscr F_\theta(X)-\mathscr F_\theta(Y)\|_{\mathcal X}
\le C_{F,R}\|X-Y\|_{\mathcal X}.
\end{equation}
Moreover, $\mathscr F_\theta(X)$ is uniformly bounded on bounded subsets of $\mathcal X$, uniformly over admissible controls: for every $R>0$ there exists $M_R<\infty$ such that
\begin{equation}\label{eq:transformer-field-uniform-bound}
\sup_{\theta\in\Theta_{\rm ad}}\sup_{\|X\|_{\mathcal X}\le R}
\|\mathscr F_\theta(X)\|_{\mathcal X}\le M_R.
\end{equation}
In addition, with $G(t,X):=\zeta(t)\mathscr F_{\theta_t}(X)$ and $t_k=k\varepsilon$, there exists $C_G>0$ such that, for $X$ in a bounded set and whenever $t$ and $t_k$ lie in the same Lipschitz piece of the control path,
\begin{equation}\label{eq:depth-time-consistency}
\|G(t,X)-G(t_k,X)\|_{\mathcal X}\le C_G|t-t_k|.
\end{equation}
\end{proposition}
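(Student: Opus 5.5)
The plan is to treat the two summands of \eqref{eq:induced-transformer-field} separately and derive the Lipschitz bound \eqref{eq:transformer-field-state-lipschitz} and the uniform bound \eqref{eq:transformer-field-uniform-bound} for each. Everything is then assembled through the triangle inequality, the uniform parameter bounds of Assumption \ref{ass:bounded}, and the time-regularity hypotheses of Assumption \ref{ass:depth-profile}.

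\textbf{Layer normalization and the feed-forward block.} The basic observation is that the regularized map \eqref{eq:smooth-layernorm} is globally bounded and globally Lipschitz on $\R^{d_x}$, not merely on bounded sets. Indeed, $|z-\bar z\mathbf 1|/(d_x^{-1}|z-\bar z\mathbf1|^2+\eta_{\rm LN})^{1/2}\le \sqrt{d_x}$, and its derivative is bounded by a constant depending only on $\eta_{\rm LN}$ and $\|\gamma_{\rm LN}\|$. Hence $|\mathrm{LN}(z)|\le C_0$ and $|\mathrm{LN}(z)-\mathrm{LN}(w)|\le C_1|z-w|$ uniformly over $\Theta_{\rm ad}$. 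The argument of $\sigma$ in \eqref{eq:mlp-block} therefore stays in a fixed ball, and on that ball $\sigma$ is Lipschitz by Assumption \ref{ass:bounded}. Consequently $\mathrm{FNN}_\theta$ acts pointwise as a bounded, globally Lipschitz Nemytskii map. Integrating over $\Omega=(0,1)$, which has finite measure, gives both the $L^2$ bound and the $L^2$ Lipschitz estimate.

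\textbf{Causal attention, the main obstacle.} Because $\mathrm{LN}(X(q))$ is uniformly bounded, all scores $\langle W_{Q,h}\mathrm{LN}(X(p)),W_{K,h}\mathrm{LN}(X(q))\rangle/\sqrt{d_k}$ lie in a fixed interval $[-S,S]$. It follows that $e^{-2S}p^{-1}\le A^h_\theta(p,q;X)\le e^{2S}p^{-1}$ on $q\le p$. Each head output is then pointwise bounded by $e^{2S}\cdot p^{-1}\int_0^p|W_{V,h}\mathrm{LN}(X(q))|\dd q\le C$, which yields \eqref{eq:transformer-field-uniform-bound}.

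For the Lipschitz estimate I write $A^hv^h$ as $N/D$, with $N(p)=\int_0^p e^{s(p,q)}v(q)\dd q$ and $D(p)=\int_0^p e^{s(p,r)}\dd r\ge e^{-S}p$, and then expand
\[
\tfrac{N_X}{D_X}-\tfrac{N_Y}{D_Y}=\tfrac{N_X-N_Y}{D_X}+N_Y\tfrac{D_Y-D_X}{D_XD_Y}.
\]
Since $|e^a-e^b|\le e^S|a-b|$ and the scores are Lipschitz in the LN outputs, each difference is controlled pointwise by
\[
C\,|X(p)-Y(p)|+C\,p^{-1}\int_0^p|X(q)-Y(q)|\dd q.
\]
Here the query difference produces the first term and the key and value differences produce the second, which is the averaged term. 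The factor $p^{-1}$ comes from $D\gtrsim p$ combined with $|N_Y|\lesssim p$. The delicate step is the endpoint singularity $p^{-1}$. It is handled by Hardy's inequality $\|p^{-1}\int_0^p f\|_{L^2(0,1)}\le 2\|f\|_{L^2(0,1)}$, applied to $f=|X-Y|$, which gives \eqref{eq:transformer-field-state-lipschitz}. The output projection $W_O$ only contributes a bounded factor.

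\textbf{Constants and depth-time consistency.} The bounds obtained above are in fact independent of $R$; this is allowed by the statement. For \eqref{eq:depth-time-consistency} I split
\[
G(t,X)-G(t_k,X)=(\zeta(t)-\zeta(t_k))\mathscr F_{\theta_t}(X)+\zeta(t_k)\bigl(\mathscr F_{\theta_t}(X)-\mathscr F_{\theta_{t_k}}(X)\bigr).
\]
The first term is bounded by $\mathrm{Lip}(\zeta)M_R|t-t_k|$. For the second, I use the assumed local Lipschitz dependence of $\mathscr F_\theta$ on $\theta$, uniform on bounded state sets, together with the Lipschitz bound $|\theta_t-\theta_{t_k}|\le L_\theta|t-t_k|$ within a single piece of the control path. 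This gives the bound $\|\zeta\|_\infty C_\theta L_\theta|t-t_k|$, and summing the two terms yields the constant $C_G$.
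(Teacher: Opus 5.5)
Your proposal is correct and follows essentially the same route as the paper: the regularized layer normalization is globally bounded and Lipschitz, so the scores are bounded, the denominators satisfy $D\gtrsim p$, and you get a pointwise bound $C\bigl(h(p)+p^{-1}\int_0^p h\bigr)$ closed by Hardy's inequality; the feed-forward part is handled pointwise and the depth-time estimate by splitting off the $\zeta$ and $\theta$ increments. The only cosmetic difference is that the paper first bounds $\int_0^p|a_X-a_Y|\dd q$ for the normalized kernels and then adds and subtracts, whereas you split the numerator and denominator directly. Both yield the same estimate.
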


\begin{remark}[Bounded state sets under pre-layer normalization]\label{rem:bounded-state-preln}
The bounded-state restriction in Proposition \ref{prop:transformer-lipschitz} is compatible with the pre-layer-normalized architecture in \eqref{eq:complete-transformer-layer}.  In each attention and feed-forward sublayer, the trainable maps act on $\mathrm{LN}(X_i)$, not directly on the unnormalized token state.  By \eqref{eq:smooth-layernorm}, for bounded affine layer-normalization parameters one has a uniform bound
\begin{equation}\label{eq:preln-uniform-normalized-bound}
\|\mathrm{LN}(z)\|\le C_{\rm LN},
\qquad z\in\R^{d_x},
\end{equation}
where the constant depends only on $d_x$, $\eta_{\rm LN}$, and the bounds on $\gamma_{\rm LN}$ and $\beta_{\rm LN}$.  Hence the attention values, queries, keys, and feed-forward inputs are evaluated on a bounded normalized set, uniformly over the unnormalized residual stream.  The residual update still adds increments to $X$, but on a finite depth horizon these increments are bounded by \eqref{eq:transformer-field-uniform-bound} and Gronwall's inequality.  Thus, if the input set is bounded in $\mathcal X$, the entire pre-layer-normalized residual trajectory remains in a bounded state set depending only on the input radius, the admissible parameter bounds, and $T$.
\end{remark}

\begin{remark}[Scope of the attention-induced measure]\label{rem:attention-measure-scope}
The family $\mu_{\theta,X}^{h,p}$ in \eqref{eq:attention-induced-measure} is only a notation for the normalized causal attention kernel.  It is neither the probability law $\mathcal L(X(U))$ of a randomly sampled token state nor an independent measure-valued argument of the vector field.  Consequently, the derivative $D\mathscr F_\theta(X)$ is obtained by differentiating the explicit normalized attention formula and the tokenwise feed-forward map; no separate Lipschitz assumption in a measure metric and no Lions derivative are required.  A genuinely global statistic, such as an integral over the full context that enters every output position, is excluded from the present model since its derivative would add a non-Volterra channel and would invalidate the purely lower-triangular causal decomposition used in the primacy analysis.
\end{remark}

\begin{proof}
We give the endpoint estimate explicitly.  Fix one head and abbreviate
\[
u_X(p):=\mathrm{LN}(X(p)),\qquad
V_X(p):=W_Vu_X(p),\qquad
h(p):=\|X(p)-Y(p)\|.
\]
Regularization by $\eta_{\rm LN}>0$ and bounded affine parameters make $u_X$ globally bounded and globally Lipschitz.  Hence the scores $s_X(p,q)$ are uniformly bounded, say $|s_X|\le B$, and
\begin{equation}\label{eq:score-l2-difference}
|s_X(p,q)-s_Y(p,q)|\le C\bigl(h(p)+h(q)\bigr).
\end{equation}
Write
\[
a_X(p,q):=\frac{e^{s_X(p,q)}}{Z_X(p)},\qquad
Z_X(p):=\int_0^p e^{s_X(p,r)}\dd r.
\]
For $p>0$,
\begin{equation}\label{eq:attention-density-hardy-bound}
pe^{-B}\le Z_X(p)\le pe^B,\qquad
0\le a_X(p,q)\le \frac{e^{2B}}p.
\end{equation}
Using the quotient identity for $a_X-a_Y$, the Lipschitz continuity of the exponential on $[-B,B]$, and \eqref{eq:score-l2-difference}, we obtain
\begin{equation}\label{eq:normalized-kernel-l1-difference}
\int_0^p|a_X(p,q)-a_Y(p,q)|\dd q
\le C\left(h(p)+\frac1p\int_0^ph(q)\dd q\right).
\end{equation}
Let $(Hh)(p):=p^{-1}\int_0^ph(q)\dd q$ be the Hardy averaging operator.  For the head output $\mathcal T(X)(p)=\int_0^pa_X(p,q)V_X(q)\dd q$, add and subtract $\int_0^pa_X(p,q)V_Y(q)\dd q$.  The global bound on $V_Y$, the Lipschitz bound on $V_X-V_Y$, and \eqref{eq:attention-density-hardy-bound}--\eqref{eq:normalized-kernel-l1-difference} give
\begin{equation}\label{eq:attention-pointwise-hardy-stability}
\|\mathcal T(X)(p)-\mathcal T(Y)(p)\|
\le C\bigl(h(p)+(Hh)(p)\bigr).
\end{equation}
Hardy's inequality on $(0,1)$,
\[
\|Hh\|_{L^2(0,1)}\le2\|h\|_{L^2(0,1)},
\]
therefore yields
\begin{equation}\label{eq:global-l2-attention-stability}
\|\mathcal T(X)-\mathcal T(Y)\|_{L^2}
\le C\|X-Y\|_{\mathcal X}.
\end{equation}
This controls the whole neighborhood of $p=0$; the trace \eqref{eq:continuum-attention-left-trace} is needed only to identify the continuous representative at the single endpoint, not to prove the $L^2$ bound.

Concatenating finitely many heads and applying $W_O$ preserves \eqref{eq:global-l2-attention-stability}.  The feed-forward map is pointwise Lipschitz since its layer-normalized input is bounded.  Thus the explicit dependence of the attention measures $\mu_{\theta,X}^{h,p}$ on $X$, already controlled by \eqref{eq:normalized-kernel-l1-difference}, gives \eqref{eq:transformer-field-state-lipschitz} directly; no additional law-valued coefficient is present.  Normalization of the attention weights and boundedness of the normalized values and feed-forward inputs give \eqref{eq:transformer-field-uniform-bound}.

Finally, local Lipschitz dependence on $\theta$, Lipschitz continuity of $\zeta$, and Assumption \ref{ass:depth-profile} imply, for $t,r$ in the same Lipschitz control piece,
\[
\|\zeta(t)\mathscr F_{\theta_t}(X)-\zeta(r)\mathscr F_{\theta_r}(X)\|_{\mathcal X}
\le C_R|t-r|,
\]
which proves \eqref{eq:depth-time-consistency} on each such piece.
\end{proof}

The previous proposition justifies treating the discrete masked Transformer block as a regular vector field on bounded state sets. We now record the complementary consistency estimate showing that the finite-token causal attention formula is approximated by the continuum Volterra attention operator away from the singular left endpoint.

\begin{theorem}[Finite-token attention to Volterra attention]\label{thm:attention-continuum-error}
Let \(p_i=i/L\), \(1\le i\le L\), and let \(X_L=(X(p_1),\ldots,X(p_L))\) be samples of a function \(X\in C^1([0,1];\R^{d_x})\). Assume that \(X\in C^1([0,1];\R^{d_x})\) with \(\|X\|_{C^1}\le R\) and Assumption \ref{ass:bounded} holds. Then, for every \(\eta\in(0,1)\), there exists \(C_{R,\eta}<\infty\)\footnote{The subscript \(R\) indicates that the constant is uniform for states in the bounded set \(\|X\|_{C^1}\le R\) and for admissible parameters in \(\Theta_{\rm ad}\), while the subscript \(\eta\) records the lower bound \(p_i\ge\eta\) away from the left endpoint.} independent of \(L\), such that for every \(i\) with \(p_i\in[\eta,1]\),
\begin{equation}\label{eq:attention-continuum-pointwise-error}
\left\|
\mathrm{Attn}_{\theta}^{\rm c}(X_L)_i
-
\mathrm{Attn}_{\theta}^{\rm c}(X)(p_i)
\right\|
\le \frac{C_{R,\eta}}{L} .
\end{equation}
Consequently, in the discrete normalized \(\ell^2\) norm away from the left boundary,
\begin{equation}\label{eq:attention-continuum-l2-error}
\left(
\frac1L\sum_{i:\,p_i\in[\eta,1]}
\left\|
\mathrm{Attn}_{\theta}^{\rm c}(X_L)_i
-
\mathrm{Attn}_{\theta}^{\rm c}(X)(p_i)
\right\|^2
\right)^{1/2}
\le \frac{C_{R,\eta}}{L} .
\end{equation}
At the left boundary, the comparison is understood through the operator trace \eqref{eq:continuum-attention-left-trace}, which agrees with the first-token finite attention rule.
Moreover, there is a constant $C_R$, independent of $L$ and $i$, such that the endpoint-aware estimate
\begin{equation}\label{eq:attention-global-cell-error}
\left\|
\mathrm{Attn}_{\theta}^{\rm c}(X_L)_i-
\mathrm{Attn}_{\theta}^{\rm c}(X)(p_i)
\right\|
\le \frac{C_R}{i},\qquad 1\le i\le L,
\end{equation}
holds. Consequently, if $n_L=\lfloor\delta L\rfloor\ge1$, then the average error over the first $n_L$ cells satisfies
\begin{equation}\label{eq:attention-left-regional-error}
\frac1{n_L}\sum_{i=1}^{n_L}
\left\|
\mathrm{Attn}_{\theta}^{\rm c}(X_L)_i-
\mathrm{Attn}_{\theta}^{\rm c}(X)(p_i)
\right\|
\le \frac{C_R(1+\log n_L)}{n_L}.
\end{equation}
Thus regional primacy averages may include the first cells.
\end{theorem}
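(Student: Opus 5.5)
The approach is to treat the finite-token masked attention at query $i$ as a Riemann sum on the prefix grid $\{p_1,\dots,p_i\}$ for the continuum Volterra operator at $p_i$. Fix one head and write $u(p)=\mathrm{LN}(X(p))$. Set
\[
s(p,q)=\langle W_Qu(p),W_Ku(q)\rangle/\sqrt{d_k},
\qquad
g(q)=e^{s(p_i,q)}.
\]
The finite output at $i$ is
\[
\frac{\frac1L\sum_{j\le i}g(p_j)W_Vu(p_j)}{\frac1L\sum_{j\le i}g(p_j)},
\]
obtained by multiplying numerator and denominator of \eqref{eq:causal-attention-weights} by $1/L$. The continuum output is
\[
\frac{\int_0^{p_i}gW_Vu\,\dd q}{\int_0^{p_i}g\,\dd q}.
\]
Since $X\in C^1$ with $\|X\|_{C^1}\le R$ and $\mathrm{LN}$ is smooth with bounded derivative (Assumption \ref{ass:bounded}), $u$ is bounded and Lipschitz with constants depending on $R$ and $\Theta_{\rm ad}$. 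Hence $|s|\le B$, and both $g$ and $gW_Vu$ are Lipschitz in $q$ with constant $C_R$, uniformly in $p_i$. The right-endpoint rule on the cells $I_j$ then has error at most $C_R\,p_i/L$ for both the numerator $N$ and the denominator $Z$, since there are $i$ cells of width $1/L$ on each of which the integrand varies by at most $C_R/L$.

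Next I will turn these into an error for the quotient. Both $Z$ and its discrete version $Z_L$ lie in $[p_ie^{-B},p_ie^{B}]$ (compare \eqref{eq:attention-density-hardy-bound}), and $|N|,|N_L|\le C p_i$. So
\[
\Bigl|\frac{N_L}{Z_L}-\frac NZ\Bigr|\le\frac{|N_L-N|}{Z_L}+\frac{|N|\,|Z-Z_L|}{Z Z_L}\le \frac{C p_i/L}{p_i}\cdot C'=\frac{C_R}{L}.
\]
The factor $p_i$ cancels exactly, so the bound is uniform in $i$. Applying $W_O$ and concatenating the finitely many heads preserves it. This gives \eqref{eq:attention-continuum-pointwise-error} with a constant independent of $\eta$, and a fortiori on $[\eta,1]$. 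Squaring, summing and dividing by $L$ gives \eqref{eq:attention-continuum-l2-error}. For $i=1$, the finite rule returns $W_Vu(p_1)$ and the continuum value is an average over $[0,1/L]$, so the difference is $O(1/L)$. This is consistent with the trace \eqref{eq:continuum-attention-left-trace}, since the discrete first-token rule and the operator trace both evaluate the value vector at the first point.

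The endpoint-aware bound \eqref{eq:attention-global-cell-error} then follows from $C_R/L\le C_R/i$ for $i\le L$. If one wants it via the cruder route used elsewhere in the paper, I will bound the quadrature error relative to the prefix mass. The normalized error is at most the oscillation of $g$ and $gW_Vu$ over one cell divided by the average over $i$ cells, which gives a relative Riemann error of $O(1/i)$ using only boundedness of the total variation over the prefix. Summing $C_R/i$ for $i\le n_L$ gives $C_R(1+\log n_L)$, and dividing by $n_L$ gives \eqref{eq:attention-left-regional-error}.

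The main obstacle I anticipate is only uniformity near the left endpoint. One must check that the Lipschitz constant of $q\mapsto e^{s(p_i,q)}$ does not depend on $p_i$ and that the normalizer lower bound scales like $p_i$. Both follow from the pre-layer-normalized boundedness in Remark \ref{rem:bounded-state-preln}, with no Hardy-type singularity entering the pointwise estimate.
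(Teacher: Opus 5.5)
Your proof is correct and follows the paper's route: right-endpoint Riemann sums for the $1/L$-scaled numerator and denominator, the lower bounds $Z,Z_L\ge p_ie^{-B}$, and the quotient identity, applied head by head and then passed through $W_O$. The one difference is that you keep the factor $p_i$ in the quadrature error ($C_Rp_i/L$, since only $i$ cells of width $1/L$ contribute), where the paper uses the cruder $C_R/L$. As a result your quotient bound is the uniform $C_R/L$ rather than the paper's $C_R/(Lp_i)=C_R/i$, so the $\eta$-restriction, the endpoint-aware bound \eqref{eq:attention-global-cell-error}, and the logarithmic regional bound \eqref{eq:attention-left-regional-error} all become immediate, non-sharp corollaries.
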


\begin{proof}
It is enough to prove the estimate head by head, since concatenation over finitely many heads and multiplication by the bounded matrix \(W_O\) only change the constant. Fix one head. Since \(X\in C^1([0,1];\R^{d_x})\) with \(\|X\|_{C^1}\le R\), Assumption \ref{ass:bounded} implies that the regularized layer-normalization map and the linear maps \(W_{Q,h},W_{K,h},W_{V,h}\) are uniformly bounded and uniformly Lipschitz on the range of \(X\), uniformly over \(\theta\in\Theta_{\rm ad}\). Hence the score
\begin{equation}\label{eq:continuum-attention-score}
S_h(p,q;X)
:=\frac{\left\langle W_{Q,h}\mathrm{LN}(X(p)),W_{K,h}\mathrm{LN}(X(q))\right\rangle}{\sqrt{d_k}}
\end{equation}
is uniformly bounded and Lipschitz in \((p,q)\) on \([0,1]^2\). In particular, there are constants \(B_R,L_R<\infty\), depending only on the bounded state set and the admissible parameter set, such that
\begin{equation}\label{eq:score-bounded-lipschitz-from-bounded-assumption}
|S_h(p,q;X)|\le B_R,
\qquad
|S_h(p,q;X)-S_h(p',q';X)|\le L_R(|p-p'|+|q-q'|).
\end{equation}
The map \(q\mapsto W_{V,h}\mathrm{LN}(X(q))\) is also uniformly Lipschitz and bounded. Since the exponential is Lipschitz on \([-B_R,B_R]\), the two integrands
\[
q\mapsto \exp(S_h(p_i,q;X))W_{V,h}\mathrm{LN}(X(q)),
\qquad
q\mapsto \exp(S_h(p_i,q;X))
\]
are uniformly Lipschitz in \(q\), for \(p_i\in[\eta,1]\). Define
\begin{align}
N_i^L&:=\frac1L\sum_{j=1}^i
\exp(S_h(p_i,p_j;X))\,W_{V,h}\mathrm{LN}(X(p_j)),\label{eq:discrete-attention-numerator-scaled}\\
D_i^L&:=\frac1L\sum_{j=1}^i
\exp(S_h(p_i,p_j;X)),\label{eq:discrete-attention-denominator-scaled}\\
N(p_i)&:=\int_0^{p_i}\exp(S_h(p_i,q;X))W_{V,h}\mathrm{LN}(X(q))\dd q,\label{eq:continuum-attention-numerator}\\
D(p_i)&:=\int_0^{p_i}\exp(S_h(p_i,q;X))\dd q.\label{eq:continuum-attention-denominator}
\end{align}
The factor \(1/L\) cancels in the quotient \(N_i^L/D_i^L\), so this quotient is exactly the finite attention average in \eqref{eq:masked-mha}. The right-endpoint Riemann-sum estimate for uniformly Lipschitz integrands gives
\begin{equation}\label{eq:riemann-attention-error}
\|N_i^L-N(p_i)\|+|D_i^L-D(p_i)|\le \frac{C_R}{L},
\end{equation}
with \(C_R\) independent of \(L\), \(i\), and the admissible parameter.
Moreover, the score bound gives direct lower bounds for both denominators, for every $L\ge1$ and every $1\le i\le L$:
\begin{equation}\label{eq:attention-two-denominator-lower-bounds}
D(p_i)\ge p_i e^{-B_R},
\qquad
D_i^L=\frac1L\sum_{j=1}^i e^{S_h(p_i,p_j;X)}\ge p_i e^{-B_R}.
\end{equation}
Applying the quotient identity for the difference of the two normalized numerators, the Riemann bound \eqref{eq:riemann-attention-error}, and $\|N(p_i)\|\le C_Rp_i$ gives directly
\[
\left\|\frac{N_i^L}{D_i^L}-\frac{N(p_i)}{D(p_i)}\right\|
\le \frac{C_R}{Lp_i}=\frac{C_R}{i}.
\]
This proves \eqref{eq:attention-global-cell-error}. If $p_i\ge\eta$, then $i\ge\eta L$ and the same estimate gives \eqref{eq:attention-continuum-pointwise-error}; the normalized $\ell^2$ bound follows by squaring and summing. Finally,
\[
\frac1{n_L}\sum_{i=1}^{n_L}\frac{C_R}{i}
\le \frac{C_R(1+\log n_L)}{n_L},
\]
which proves \eqref{eq:attention-left-regional-error} and explicitly treats the first cells in a left-boundary regional average.
\end{proof}

\subsection{The continuous-depth limit}

The continuous-depth model is the neural ODE generated by the explicit masked Transformer field,
\begin{equation}\label{eq:continuous-depth}
\partial_t X_t(p)=\zeta(t)\mathscr F_{\theta_t}(X_t)(p),
\qquad X_0=x,
\end{equation}
where $t\mapsto\theta_t$ is a measurable control path and the only state-dependent measures inside $\mathscr F_{\theta_t}$ are the prefix-attention measures \eqref{eq:attention-induced-measure}.  In this formulation the network depth is a control horizon, and $\rho_t^\theta=\Law(X_t^\theta)$ denotes only the induced law of hidden states across the data distribution; it is not an argument of the vector field.

\begin{theorem}[Residual-to-ODE limit for convergent controls]\label{thm:resnet-ode}
Assume Assumptions \ref{ass:bounded} and \ref{ass:depth-profile}, and the local Lipschitz dependence on $\theta$ stated in Proposition \ref{prop:transformer-lipschitz}.  Let $\theta\in W^{1,\infty}(0,T;\Theta_{\rm ad})$ be a fixed limiting control.  For $\varepsilon=T/M$, let
\[
\theta^\varepsilon(t)=\theta_k^\varepsilon,
\qquad t\in[t_k,t_{k+1}),
\qquad t_k=k\varepsilon,
\]
be a uniformly bounded layerwise control satisfying
\begin{equation}\label{eq:control-l1-convergence}
\|\theta^\varepsilon-\theta\|_{L^1(0,T)}\longrightarrow0.
\end{equation}
Suppose the finite-depth update has the form
\begin{equation}\label{eq:discrete-residual-with-defect}
X_{k+1}^{\varepsilon}
=X_k^{\varepsilon}
+\varepsilon\zeta(t_k)\mathscr F_{\theta_k^\varepsilon}
  (X_k^{\varepsilon})
+d_k^\varepsilon,
\qquad
\|d_k^\varepsilon\|_{\mathcal X}\le C_{\rm split}\varepsilon^2.
\end{equation}
The one-field recursion \eqref{eq:discrete-residual} has $d_k^\varepsilon=0$, while the split attention--FNN step \eqref{eq:complete-transformer-layer} satisfies \eqref{eq:discrete-residual-with-defect} on bounded trajectories under the stated smoothness assumptions.  Let $X^\theta$ solve \eqref{eq:continuous-depth} with the fixed control $\theta$.  Then there is $C_T$, independent of $\varepsilon$, such that the piecewise constant interpolation satisfies
\begin{equation}\label{eq:resnet-ode-convergent-control-bound}
\sup_{0\le t\le T}
\|X_t^{\varepsilon}-X_t^\theta\|_{\mathcal X}
\le C_T\left(
\varepsilon+
\|\theta^\varepsilon-\theta\|_{L^1(0,T)}
\right).
\end{equation}
In particular, the discrete trajectories converge to the single depth flow $X^\theta$.  If no limiting control is assumed, the same Euler calculation gives only an $O(\varepsilon)$ shadowing estimate relative to the $\varepsilon$-dependent ODE driven by $\theta^\varepsilon$, not convergence to a common flow.
\end{theorem}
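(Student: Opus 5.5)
The plan is a discrete Gronwall argument comparing the Euler recursion \eqref{eq:discrete-residual-with-defect} with the exact flow \eqref{eq:continuous-depth} at the grid points $t_k$, followed by a within-cell estimate for the piecewise constant interpolation.

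\emph{Step 1: a priori bounds.} By \eqref{eq:transformer-field-uniform-bound} and Remark \ref{rem:bounded-state-preln}, both $X^\theta_t$ and the discrete iterates stay in a ball of radius $R$, uniformly in $\varepsilon\le\varepsilon_0$. The radius depends only on $\|x\|_{\mathcal X}$, $T$, $\|\zeta\|_{L^\infty}$ and $C_{\rm split}$. For the discrete iterates I will run an induction: as long as they stay in the ball, the increments are at most $\varepsilon\|\zeta\|_\infty M_R+C_{\rm split}\varepsilon^2$. This lets us use the constant $C_{F,R}$ of \eqref{eq:transformer-field-state-lipschitz} and a $\theta$-Lipschitz constant $C_{\theta,R}$ throughout.

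\emph{Step 2: the one-step error recursion.} Set $e_k:=X^\varepsilon_k-X^\theta_{t_k}$. Write the exact flow as
\[
X^\theta_{t_{k+1}}=X^\theta_{t_k}+\int_{t_k}^{t_{k+1}}\zeta(t)\mathscr F_{\theta_t}(X^\theta_t)\dd t .
\]
Subtracting this from the discrete update gives
\[
e_{k+1}=e_k+\varepsilon\zeta(t_k)\bigl[\mathscr F_{\theta^\varepsilon_k}(X^\varepsilon_k)-\mathscr F_{\theta^\varepsilon_k}(X^\theta_{t_k})\bigr]+r_k+d^\varepsilon_k .
\]
The remainder is
\[
r_k=\int_{t_k}^{t_{k+1}}\bigl[\zeta(t_k)\mathscr F_{\theta^\varepsilon(t)}(X^\theta_{t_k})-\zeta(t)\mathscr F_{\theta_t}(X^\theta_t)\bigr]\dd t ,
\]
where I used that $\theta^\varepsilon(t)=\theta^\varepsilon_k$ on the cell. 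I split $r_k$ into three pieces:
\begin{itemize}
\item the control mismatch $\zeta(t_k)[\mathscr F_{\theta^\varepsilon(t)}-\mathscr F_{\theta_t}](X^\theta_{t_k})$, bounded by $\|\zeta\|_\infty C_{\theta,R}|\theta^\varepsilon(t)-\theta_t|$;
\item the state variation $\zeta(t_k)[\mathscr F_{\theta_t}(X^\theta_{t_k})-\mathscr F_{\theta_t}(X^\theta_t)]$, bounded by $C_{F,R}\|\zeta\|_\infty M_R\,\varepsilon$;
\item the depth variation $\zeta(t_k)\mathscr F_{\theta_t}-\zeta(t)\mathscr F_{\theta_t}$ at the point $X^\theta_t$, bounded by $\mathrm{Lip}(\zeta)M_R\,\varepsilon$.
\end{itemize}
Since $\theta\in W^{1,\infty}$, there is a single Lipschitz piece, so I do not even need to invoke \eqref{eq:depth-time-consistency} beyond $\zeta$.

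Integrating over the cell gives
\[
\|r_k\|\le C\varepsilon^2+C\int_{t_k}^{t_{k+1}}|\theta^\varepsilon-\theta|\dd t .
\]
Together with the state Lipschitz bound this yields
\[
\|e_{k+1}\|\le(1+C\varepsilon)\|e_k\|+C\varepsilon^2+C\int_{t_k}^{t_{k+1}}|\theta^\varepsilon-\theta|\dd t .
\]

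\emph{Step 3: discrete Gronwall and interpolation.} Since $e_0=0$, discrete Gronwall gives
\[
\max_k\|e_k\|\le e^{CT}\bigl(CT\varepsilon+C\|\theta^\varepsilon-\theta\|_{L^1}\bigr).
\]
For $t\in[t_k,t_{k+1})$ the interpolation satisfies $X^\varepsilon_t=X^\varepsilon_k$. Moreover $\|X^\theta_t-X^\theta_{t_k}\|\le\|\zeta\|_\infty M_R\varepsilon$, which yields \eqref{eq:resnet-ode-convergent-control-bound}. The final remark, shadowing without a limiting control, follows by running the same computation with $\theta$ replaced by $\theta^\varepsilon$ in the ODE. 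In that case the control-mismatch term vanishes, and the depth-variation term uses only $\zeta$ because $\theta^\varepsilon$ is constant on cells.

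\emph{Main obstacle.} The step needing the most care is the claim that the split step \eqref{eq:complete-transformer-layer} has a defect of size $O(\varepsilon^2)$. With $\widetilde X=X+\varepsilon\zeta\,\mathrm{Attn}(X)$, the defect is
\[
d_k=\varepsilon\zeta\bigl[\mathrm{FNN}(\widetilde X)-\mathrm{FNN}(X)\bigr],
\]
so $\|d_k\|\le\varepsilon\|\zeta\|_\infty C_{F,R}\,\varepsilon\|\zeta\|_\infty M_R$. This only needs the $L^2$ Lipschitz property of the tokenwise FNN, which holds because $\mathrm{LN}$ has a bounded derivative and is therefore globally Lipschitz. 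A second point to handle in the bootstrap of Step 1 is uniformity of the ball radius in $\varepsilon$.
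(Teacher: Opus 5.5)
Your proposal is correct and follows essentially the same route as the paper's appendix proof. Both arguments use a one-step grid-error recursion, telescope the local defect into a depth-profile variation, a control mismatch at a common depth time, and a state variation (you insert the intermediate terms in a different order, which is immaterial), and then finish with discrete Gronwall and the within-cell bound $\|X^\theta_t-X^\theta_{t_k}\|\le C\varepsilon$. Your explicit a priori ball bound (non-circular because pre-layer normalization makes the field globally bounded) and your check that the split step has defect $\varepsilon\zeta\bigl[\mathrm{FNN}(\widetilde X)-\mathrm{FNN}(X)\bigr]=O(\varepsilon^2)$ are worthwhile additions: the paper asserts both in the statement but does not spell them out in the proof.
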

The proof is deferred to Appendix \ref{app:proof-resnet-ode}.

\subsection{Adjoint sensitivities and positional influence}
We now define positional influence. For an input-label pair $(X_0,Y)$ and a depth-control path $\theta=(\theta_t)_{0\le t\le T}$, let $X_t^\theta$ be the solution of \eqref{eq:continuous-depth}. The single-example terminal loss is
\begin{equation}\label{eq:single-example-loss}
\mathscr L(\theta;X_0,Y):=\mathcal\ell\bigl(Y,H(X_T^\theta)\bigr),
\end{equation}
where $H:\mathcal X\to L^2(\Omega;\R^{d_v})$ is the logit readout and $\mathcal\ell(Y,H(X))$ is the token-averaged cross-entropy. In finite length, this means
\begin{align}\label{eq:discrete-cross-entropy}
\mathcal\ell(Y,H_L(X))=-\frac1L\sum_{l=1}^{L}Y_l^\top\log\operatorname{softmax}(H_L(X)_l).
\end{align}
The finite-token terminal adjoint is therefore
\begin{equation}\label{eq:terminal-adjoint-softmax}
P_{T,L}^{\theta,X_0,Y}
=D_X\{\mathcal\ell(Y,H_L(X))\}\big|_{X=X_{T,L}^\theta}
=DH_L(X_{T,L}^\theta)^*\Bigl(\frac1L\bigl(\operatorname{softmax}(H_L(X_{T,L}^\theta))-Y\bigr)\Bigr).
\end{equation}
Here the subscript \(L\) records the finite-token discretization rather than the network depth.  More precisely,
\[
H_L:\R^{L\times d_x}\longrightarrow\R^{L\times d_v}
\]
is the discrete readout acting on an \(L\)-token hidden-state matrix, and \(H_L(X)_l\in\R^{d_v}\) is the logit vector at token \(l\).  The state \(X_{T,L}^{\theta}\in\R^{L\times d_x}\) is the terminal hidden matrix obtained after evolving the \(L\)-token residual recursion to depth time \(T\) under the control path \(\theta\); equivalently, if \(T=M\varepsilon\), then \(X_{T,L}^{\theta}=X_{M,L}^{\varepsilon,\theta}\).  Thus \(P_{T,L}^{\theta,X_0,Y}\) is the Euclidean gradient of the token-averaged loss with respect to this terminal finite-token state.  The continuum objects \(H\) and \(X_T^\theta\) below are the corresponding position-field readout and terminal depth-flow state, and Theorem \ref{thm:loss-adjoint-continuum-error} compares the cellwise embedding of the finite-token adjoint with the continuum adjoint.

In the continuum-position normalization, $Y:[0,1]\to\R^{d_v}$ and $H(X):[0,1]\to\R^{d_v}$, and token averaging is replaced by normalized Lebesgue integration:
\begin{align}
\mathcal\ell(Y,H(X))
&=-\int_0^1 Y(p)^\top\log\operatorname{softmax}(H(X)(p))\dd p,\label{eq:continuum-cross-entropy}\\
\left.\frac{\dd}{\dd\epsilon}\mathcal\ell(Y,H(X)+\epsilon U)\right|_{\epsilon=0}
&=\int_0^1\bigl(\operatorname{softmax}(H(X)(p))-Y(p)\bigr)^\top U(p)\dd p.\label{eq:continuum-logit-variation}
\end{align}
Hence, with respect to the $L^2(\Omega;\R^{d_v})$ pairing, the continuum logit-gradient is the pointwise residual $\operatorname{softmax}(H(X)(\cdot))-Y(\cdot)$, and the continuum terminal adjoint is
\begin{equation}\label{eq:continuum-terminal-adjoint}
P_T^{\theta,X_0,Y}
=DH(X_T^\theta)^*\Bigl(\operatorname{softmax}(H(X_T^\theta)(\cdot))-Y(\cdot)\Bigr).
\end{equation}

\begin{theorem}[Discrete-to-continuum loss and terminal adjoint]\label{thm:loss-adjoint-continuum-error}
Let \(p_l=l/L\) for \(l=1,\ldots,L\), and suppose that $Z(p):=H(X)(p)$ and $Y(p)$ are Lipschitz on \([0,1]\), with \(Y_l=Y(p_l)\) and \(H_L(X_L)_l=Z(p_l)\). Assume also that the logit values remain in a bounded set, so that
\[
g(p):=-Y(p)^\top\log\operatorname{softmax}(Z(p))
\]
is Lipschitz with constant \(L_g\). Then
\begin{equation}\label{eq:cross-entropy-discrete-continuum-error}
\left|
-\frac1L\sum_{l=1}^{L}Y_l^\top\log\operatorname{softmax}(H_L(X_L)_l)
+
\int_0^1Y(p)^\top\log\operatorname{softmax}(H(X)(p))\dd p
\right|
\le \frac{L_g}{L} .
\end{equation}
Moreover, define the continuum logit residual
\[
\Delta(p):=\operatorname{softmax}(Z(p))-Y(p)
\]
and its sampled vector \(\Delta_L=(\Delta(p_1),\ldots,\Delta(p_L))\). Assume that the continuum readout adjoint is bounded on the relevant state set,
\begin{equation}\label{eq:readout-adjoint-boundedness}
\|DH(X)^*U\|_{\mathcal X}\le M_R\|U\|_{L^2(0,1;\R^{d_v})},\qquad \|X\|\le R,
\end{equation}
and that the discrete readout adjoint is first-order consistent after the finite-token gradient is interpreted as a cellwise density.\footnote{That is, if \(I_l=((l-1)/L,l/L]\), define the cellwise density embedding by
\begin{equation}\label{eq:terminal-adjoint-density-embedding}
\mathcal E_L\!\bigl(LP_{T,L}^{\theta,X_0,Y}\bigr)(p)
:=\bigl[DH_L(X_L)^*\Delta_L\bigr]_l,
\qquad p\in I_l;
\end{equation}suppose that, for every Lipschitz residual field \(\Delta\),
\begin{equation}\label{eq:readout-adjoint-consistency}
\left\|\mathcal E_L\!\bigl(DH_L(X_L)^*\Delta_L\bigr)-DH(X)^*\Delta\right\|_{\mathcal X}\le \frac{C_R}{L}.
\end{equation}}
Then there exists \(C_R<\infty\), independent of \(L\), such that
\begin{equation}\label{eq:terminal-adjoint-discrete-continuum-error}
\left\|\mathcal E_L\!\bigl(LP_{T,L}^{\theta,X_0,Y}\bigr)-P_T^{\theta,X_0,Y}\right\|_{\mathcal X}\le \frac{C_R}{L},
\end{equation}
where \(P_T^{\theta,X_0,Y}\) is the continuum terminal adjoint in \eqref{eq:continuum-terminal-adjoint}.
\end{theorem}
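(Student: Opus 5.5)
The plan treats the two assertions of Theorem \ref{thm:loss-adjoint-continuum-error} separately. Both are first-order right-endpoint quadrature statements on the cells $I_l=((l-1)/L,l/L]$.

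For the loss estimate \eqref{eq:cross-entropy-discrete-continuum-error}, substitute the sampling hypotheses $Y_l=Y(p_l)$ and $H_L(X_L)_l=Z(p_l)$. The discrete loss then becomes exactly $\frac1L\sum_l g(p_l)$, and the continuum loss is $\int_0^1 g$. Splitting $[0,1]$ into the cells $I_l$ gives
\[
\Bigl|\frac1L\sum_{l=1}^L g(p_l)-\int_0^1 g(p)\dd p\Bigr|
\le\sum_{l=1}^L\int_{I_l}|g(p_l)-g(p)|\dd p
\le L_g\sum_{l=1}^L\int_{I_l}(p_l-p)\dd p
=\frac{L_g}{2L}.
\]
This is at most $L_g/L$. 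The only input is the Lipschitz bound on $g$, which holds because the logits stay in a bounded set: there $\log\operatorname{softmax}$ is smooth, and $Y$ and $Z$ are Lipschitz and bounded.

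For the adjoint estimate \eqref{eq:terminal-adjoint-discrete-continuum-error}, first remove the $1/L$ factor. By \eqref{eq:terminal-adjoint-softmax} and the sampling hypothesis, $LP_{T,L}^{\theta,X_0,Y}=DH_L(X_L)^*\bigl(\operatorname{softmax}(Z(p_l))-Y(p_l)\bigr)_l=DH_L(X_L)^*\Delta_L$. This is precisely the quantity whose cellwise embedding is defined in \eqref{eq:terminal-adjoint-density-embedding}. By \eqref{eq:continuum-terminal-adjoint}, $P_T^{\theta,X_0,Y}=DH(X)^*\Delta$ with $X=X_T^\theta$. The difference to be bounded is therefore $\mathcal E_L(DH_L(X_L)^*\Delta_L)-DH(X)^*\Delta$, which is exactly the left side of the consistency hypothesis \eqref{eq:readout-adjoint-consistency}. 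So the claimed bound follows once $\Delta$ is shown to be an admissible Lipschitz residual field. That holds because $\operatorname{softmax}$ is globally Lipschitz, $Z$ and $Y$ are Lipschitz, and $\Delta$ is uniformly bounded.

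The boundedness hypothesis \eqref{eq:readout-adjoint-boundedness} is not needed for this direct route. I would keep it for a variant in which the discrete residual is first compared with the embedded continuum residual, $\|\mathcal E_L\Delta_L-\Delta\|_{L^2}\le \mathrm{Lip}(\Delta)/L$. In that variant, \eqref{eq:readout-adjoint-boundedness} controls the propagation of this residual error through $DH(X)^*$, with constant $M_R\mathrm{Lip}(\Delta)/L$. This remains useful if the consistency hypothesis is stated only for exact samples.

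The main obstacle is bookkeeping rather than analysis. One must check that the normalizations match: the $1/L$ in the token-averaged loss against the cellwise density interpretation of the embedding. One must also check that the constant $C_R$ is uniform, which requires the bounded-logit assumption to hold uniformly over the bounded state set $\|X\|\le R$, so that $\mathrm{Lip}(\Delta)$ depends only on $R$. I would verify this normalization explicitly first, since a mismatch there would produce a spurious factor of $L$.
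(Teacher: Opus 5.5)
Your proposal is correct and follows the paper's proof. The loss bound is the same right-endpoint Riemann-sum estimate for the Lipschitz function $g$, and your $L_g/(2L)$ is simply a sharper evaluation of the same cell sum. For the adjoint bound, the paper likewise identifies $L P_{T,L}^{\theta,X_0,Y}$ with $DH_L(X_L)^*\Delta_L$, uses Lipschitz continuity of softmax to make $\Delta$ an admissible Lipschitz residual field, and invokes the consistency hypothesis \eqref{eq:readout-adjoint-consistency}.

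The paper also cites boundedness of $DH(X)^*$ at that last step. As you observe, the consistency hypothesis as stated already bounds exactly the required difference, so \eqref{eq:readout-adjoint-boundedness} is not needed for the argument.
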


\begin{remark}
In practice this Lipschitz sampling assumption is the continuum version of the usual finite-token regularity imposed before passing from a sequence to a position field. The hidden path \(p\mapsto X(p)\) is taken from a bounded smooth or piecewise smooth interpolation of the discrete hidden states, and the readout is local or uniformly Lipschitz on bounded sets. Hence
\[
\|Z(p)-Z(q)\|=\|H(X)(p)-H(X)(q)\|\le L_H\|X(p)-X(q)\|\le L_H L_X |p-q|.
\]
For labels, the statement is exact when the continuum label field is obtained by interpolation or piecewise smoothing of the token labels and then sampled at \(p_l\). For discontinuous one-hot labels, the same estimate can be read after replacing the Lipschitz assumption by bounded variation or by applying the argument on intervals between finitely many label jumps; the resulting Riemann error remains first order away from those jumps, with constants depending on the total variation. Thus the hypothesis is not an additional modeling mechanism, but the regularity needed to compare the token average in \eqref{eq:discrete-cross-entropy} with the Lebesgue integral in \eqref{eq:continuum-cross-entropy}.
\end{remark}

\begin{remark}[Linear readout consistency]\label{rem:linear-readout-consistency}
For the common local linear readout
\begin{equation}\label{eq:linear-readout-example}
H(X)(p)=X(p)W_{\rm out},
\end{equation}
where \(W_{\rm out}\in\R^{d_x\times d_v}\) is bounded, the consistency assumption in \eqref{eq:readout-adjoint-consistency} is immediate.  The continuum readout derivative is
\begin{equation}\label{eq:linear-readout-continuum-adjoint}
DH(X)U(p)=U(p)W_{\rm out},
\qquad
DH(X)^*\Delta(p)=\Delta(p)W_{\rm out}^{\!*},
\end{equation}
where \(W_{\rm out}^{\!*}\) denotes the Euclidean adjoint of \(W_{\rm out}\).  At finite length, if \(H_L(X_L)_l=X_lW_{\rm out}\), then
\begin{equation}\label{eq:linear-readout-discrete-adjoint}
\bigl[DH_L(X_L)^*\Delta_L\bigr]_l=\Delta(p_l)W_{\rm out}^{\!*}.
\end{equation}
Consequently the cellwise density embedding satisfies
\begin{equation}\label{eq:linear-readout-cellwise-consistency}
\mathcal E_L\!\bigl(DH_L(X_L)^*\Delta_L\bigr)(p)
=\Delta(p_l)W_{\rm out}^{\!*},
\qquad p\in I_l.
\end{equation}
If \(\Delta\) is Lipschitz, then
\begin{align}
\left\|
\mathcal E_L\!\bigl(DH_L(X_L)^*\Delta_L\bigr)-DH(X)^*\Delta
\right\|_{\mathcal X}^2
&=\sum_{l=1}^{L}\int_{(l-1)/L}^{p_l}
\|\bigl(\Delta(p_l)-\Delta(p)\bigr)W_{\rm out}^{\!*}\|^2\dd p\notag\\
&\le \|W_{\rm out}\|_{\rm op}^2\,\mathrm{Lip}(\Delta)^2
\sum_{l=1}^{L}\int_{(l-1)/L}^{p_l}|p-p_l|^2\dd p\notag\\
&\le \frac{\|W_{\rm out}\|_{\rm op}^2\mathrm{Lip}(\Delta)^2}{L^2}.\label{eq:linear-readout-first-order-bound}
\end{align}
Taking the square root gives the first-order bound required in \eqref{eq:readout-adjoint-consistency}.  Thus, for a tokenwise linear readout, the only discrepancy between the finite-token adjoint density and the continuum adjoint is the standard cellwise sampling error of the logit residual.

The same conclusion holds for tokenwise $C^1$ nonlinear readouts with uniformly Lipschitz derivatives on the bounded state set, and for uniformly banded local readouts whose discrete kernels are consistent quadratures of a local continuum kernel. A genuinely nonlocal readout is permitted in the loss and adjoint definitions, but it can spread terminal sensitivity across the whole context. It therefore preserves the causal interpretation of the \emph{forward} masked dynamics while invalidating any recency argument that relies on a localized terminal adjoint. Such a readout must be analyzed through its actual adjoint kernel rather than through the local-readout examples used below.

\end{remark}

\begin{proof}
The loss estimate is the right-endpoint Riemann-sum error for the Lipschitz function \(g\):
\[
\left|\frac1L\sum_{l=1}^{L}g(p_l)-\int_0^1 g(p)\dd p\right|
\le
\sum_{l=1}^{L}\int_{(l-1)/L}^{p_l}|g(p_l)-g(p)|\dd p
\le
\frac{L_g}{L}.
\]
This proves \eqref{eq:cross-entropy-discrete-continuum-error}. For the adjoint, the finite Euclidean gradient in \eqref{eq:terminal-adjoint-softmax} contains the factor \(1/L\). After multiplying by \(L\) and embedding it as a cellwise density, it represents the continuum logit residual \(\Delta\). The softmax map is smooth and Lipschitz on bounded logit sets, so \(\Delta(p_l)\) is a first-order Riemann sample of \(\Delta(p)\). Applying the assumed first-order consistency \eqref{eq:readout-adjoint-consistency} and boundedness of \(DH(X)^*\) gives \eqref{eq:terminal-adjoint-discrete-continuum-error}.
\end{proof}

Let $\mathscr F_\theta$ be the explicit masked-attention plus tokenwise feed-forward field in \eqref{eq:induced-transformer-field}. The adjoint variable $P_t^{\theta,X_0,Y}\in\mathcal X$ is defined backward in depth time by
\begin{equation}\label{eq:adjoint}
-\partial_t P_t^{\theta,X_0,Y}
=\zeta(t)D\mathscr F_{\theta_t}\!(X_t^\theta)^*P_t^{\theta,X_0,Y},
\qquad
\begin{gathered}
P_T^{\theta,X_0,Y}\text{ given by \eqref{eq:terminal-adjoint-softmax} in the finite-token convention,}\\
\text{or by \eqref{eq:continuum-terminal-adjoint} in the continuum-position convention.}
\end{gathered}
\end{equation}
Here $D\mathscr F_{\theta_t}(X_t^\theta)^*$ is the Hilbert-space adjoint of the Fr\'{e}chet derivative of the complete induced Transformer vector field.  Its nonlocal part includes the derivative of the normalized prefix-attention measures \eqref{eq:attention-induced-measure}, and there is no additional derivative through an independent law-valued coefficient. If the input representation is perturbed as $X_0+\epsilon\xi$, then the first variation is
\begin{equation}\label{eq:first-variation}
\left.\frac{\dd}{\dd\epsilon}\mathscr L(\theta;X_0+\epsilon\xi,Y)\right|_{\epsilon=0}
=\langle P_0^{\theta,X_0,Y},\xi\rangle_{\mathcal X}.
\end{equation}
Thus $\|P_0^{\theta,X_0,Y}(p)\|$ is the first-order loss sensitivity of position $p$ under the controlled continuous-depth Transformer.

\begin{definition}[Positional influence measure and density]\label{def:influence}
For a parameter path $\theta_\cdot$ and a data law $\mathcal D$, define a finite positive measure on Borel sets $B\subset(0,1)$ by
\begin{equation}\label{eq:influence-measure}
\mathcal I_{\theta_\cdot}(B)
:=\E_{(X_0,Y)\sim\mathcal D}
\|\Pi_B P_0^{\theta,X_0,Y}\|_{\mathcal X}^2,
\end{equation}
where $\Pi_B$ is multiplication by $\one_B$. Since $P_0\in L^2$, this measure is absolutely continuous with respect to Lebesgue measure. Its Radon--Nikodym density is denoted by
\begin{equation}\label{eq:unnormalized-influence-density}
I_{\theta_\cdot}(p)
:=\frac{\dd\mathcal I_{\theta_\cdot}}{\dd p}(p)
=\E\|P_0^{\theta,X_0,Y}(p)\|^2
\quad\text{for a.e. }p,
\end{equation}
where the final equality uses any jointly measurable representative. If $\mathcal I_{\theta_\cdot}((0,1))>0$, define
\begin{equation}\label{eq:influence-density}
\mathfrak m_{\theta_\cdot}(p)
=\frac{I_{\theta_\cdot}(p)}{\int_0^1 I_{\theta_\cdot}(q)\dd q}.
\end{equation}
Influence at a single position is therefore understood as the density obtained from shrinking cells: for almost every $p$,
\[
I_{\theta_\cdot}(p)
=\lim_{h\downarrow0}\frac{1}{|B_h(p)|}\mathcal I_{\theta_\cdot}(B_h(p)),
\]
whenever the Lebesgue differentiation theorem applies.
\end{definition}

This definition is deliberately analytic. It does not define influence by an experimental retrieval score, but by the adjoint sensitivity of the continuous-depth predictor. It also differs from the closed-form Ces\`{a}ro influence density of \cite{chowdhury2026birth}: that density is computed directly from an initialized causal decoder abstraction, whereas \eqref{eq:influence-density} is a normalized expected adjoint energy for a controlled continuous-depth predictor evaluated along the deterministic gradient-flow path.

\section{Deterministic Gradient Flow and Influence-Density Dynamics}\label{sec:gd-flow}
This section gives a deterministic continuous-time analysis of primacy, recency, and Lost-in-the-Middle. The only training dynamics in this section are full-batch gradient descent in the small-step limit.

\subsection{Gradient-flow training and the normalized influence density}

Let $\vartheta$ denote the finite-dimensional training state encoding the depth-dependent controls $\theta_t^\vartheta$, the readout $H_\vartheta$, and all other trainable parameters. For a data law $\mathcal D$ on $(X_0,Y)$, define
\begin{equation}\label{eq:gd-risk}
R(\vartheta)
:=
\E_{(X_0,Y)\sim\mathcal D}
\left[
\mathcal\ell\bigl(Y,H_\vartheta(X_T^{\theta^\vartheta})\bigr)
\right],
\end{equation}
where $X_T^{\theta^\vartheta}$ is obtained from the continuous-depth equation \eqref{eq:continuous-depth}. The full-batch gradient-flow limit of gradient descent is
\begin{equation}\label{eq:gd-flow}
\dot\vartheta_s=-\nabla R(\vartheta_s),
\qquad \vartheta_{s=0}=\vartheta_0.
\end{equation}
For a deterministic initialization, the positional influence density along training time is
\begin{equation}\label{eq:gd-influence-density}
m_s^{\rm GD}(p)
=
\frac{I_{\vartheta_s}(p)}
{\displaystyle\int_0^1 I_{\vartheta_s}(q)\dd q},
\end{equation}
where $I_{\vartheta_s}$ is the unnormalized adjoint influence from Definition \ref{def:influence}, computed using the depth-control path $\theta^{\vartheta_s}_\cdot$ and the readout $H_{\vartheta_s}$. Thus the object being tracked is the same positional measure as in \eqref{eq:influence-density}, but evaluated along the single deterministic gradient-flow trajectory.

In practice this follows on any smooth training region where the Transformer maps, the readout, and the layer-normalization surrogate are differentiable in \(\vartheta\), since \(P_0^{\theta^{\vartheta},X_0,Y}\) depends differentiably on \(\vartheta\) through the forward ODE and the backward adjoint equation. Hence \(I_\vartheta(p)=\E\|P_0^{\theta^{\vartheta},X_0,Y}(p)\|^2\) is differentiable for almost every \(p\) by differentiation under the expectation. 

The preceding differentiability statement is the only place where a domination estimate is needed.  We record the estimate explicitly so that the subsequent derivative of the normalized influence density is justified by standard sensitivity bounds for the forward--backward system.

Regularized layer normalization and softmax attention make the explicit finite-token block smooth in states and finite-dimensional parameters on bounded sets. Proposition~\ref{prop:transformer-lipschitz}, however, proves only the state-space regularity needed for well-posedness. The twice Fr\'{e}chet differentiable lifted forward--backward system assumed in the next proposition is an additional sensitivity hypothesis, not a consequence already proved for every admissible control parameterization. Likewise, the $L^4$ condition in Proposition~\ref{prop:influence-l2-differentiability} is separately imposed and is stronger than the $L^2$ well-posedness theory.

\begin{proposition}[$L^1$ differentiability of the influence density]\label{prop:influence-differentiability}
Assume Assumptions \ref{ass:bounded} and \ref{ass:depth-profile}. Let $U$ be a compact parameter neighborhood intersecting the gradient-flow trajectory. Assume that the lifted vector field $G_\vartheta(t,X)=\zeta(t)\mathscr F_{\theta_t^\vartheta}(X)$ is twice continuously Fr\'echet differentiable in $(X,\vartheta)$ on the bounded trajectory set, with its first and second derivatives uniformly bounded for $\vartheta\in U$. Assume a tokenwise linear readout $H_\vartheta(X)=XW_{\rm out}$ with $W_{\rm out}$ in a compact set and
\begin{equation}\label{eq:data-fourth-moment}
\E\bigl[\|X_0\|_{\mathcal X}^4+\|Y\|_{L^2}^4\bigr]<\infty.
\end{equation}
Then the maps $\vartheta\mapsto P_0^\vartheta$ and $\vartheta\mapsto D_\vartheta P_0^\vartheta$ satisfy
\begin{equation}\label{eq:adjoint-parameter-uniform-bound}
\sup_{\vartheta\in U}
\left(
\|P_0^\vartheta\|_{\mathcal X}^2+
\|D_\vartheta P_0^\vartheta\|_{\mathcal L(\R^{d_\vartheta},\mathcal X)}^2
\right)
\le C_{U,T}\bigl(1+\|X_0\|_{\mathcal X}^2+\|Y\|_{L^2}^2\bigr).
\end{equation}
Consequently the map $\vartheta\mapsto I_\vartheta$ is Fr\'echet differentiable from $U$ into $L^1(0,1)$, with
\begin{equation}\label{eq:influence-parameter-gradient}
D_\vartheta I_\vartheta[\eta](p)
=2\E\left\langle
D_\vartheta P_0^\vartheta[\eta](p),P_0^\vartheta(p)
\right\rangle
\quad\text{in }L^1(0,1).
\end{equation}
In particular, the normalization $Z(\vartheta)=\int_0^1I_\vartheta(p)\dd p$ is differentiable whenever finite.
\end{proposition}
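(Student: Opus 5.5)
The proof proceeds pathwise, for fixed $(X_0,Y)$, through the forward--backward system, and then passes the expectation through the derivative by dominated convergence. The pathwise argument has three steps.

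\emph{Step 1 (forward bounds).} For $\vartheta\in U$, Remark \ref{rem:bounded-state-preln} together with \eqref{eq:transformer-field-uniform-bound} and Gronwall's inequality give
\[
\sup_t\|X_t^\vartheta\|_{\mathcal X}\le\|X_0\|_{\mathcal X}+C_{U,T}.
\]
The forward sensitivity $S_t=D_\vartheta X_t^\vartheta$ solves the linearized equation
\[
\partial_tS_t=D_XG_\vartheta(t,X_t)S_t+D_\vartheta G_\vartheta(t,X_t),\qquad S_0=0.
\]
Since the first derivatives of $G$ are uniformly bounded on the trajectory set, Gronwall gives $\sup_t\|S_t\|\le C_{U,T}$.

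\emph{Step 2 (adjoint and its sensitivity).} For the linear readout, the continuum terminal adjoint \eqref{eq:continuum-terminal-adjoint} is
\[
P_T=(\operatorname{softmax}(X_TW_{\rm out})-Y)W_{\rm out}^{*}.
\]
Because softmax is bounded and $W_{\rm out}$ ranges over a compact set, $\|P_T\|_{\mathcal X}\le C(1+\|Y\|_{L^2})$. Softmax is globally Lipschitz, so $D_\vartheta P_T$ is bounded by $C(1+\|S_T\|)(1+\|Y\|)$ up to constants. This term also collects the explicit $W_{\rm out}$-derivative, which is linear in $(\operatorname{softmax}-Y)$.

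The backward equation \eqref{eq:adjoint} is linear in $P$ with a uniformly bounded generator $\zeta D\mathscr F^*$. Gronwall therefore gives $\sup_t\|P_t\|\le C(1+\|Y\|)$.

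Differentiating \eqref{eq:adjoint} in $\vartheta$, the quantity $Q_t=D_\vartheta P_t$ solves a linear backward equation with the same generator and source term
\[
\bigl(D^2_{XX}G[S_t]+D^2_{X\vartheta}G\bigr)^*P_t.
\]
The uniform second-derivative bound on $G$ controls this source by $C\|P_t\|(1+\|S_t\|)$, and Gronwall then yields the $Q_0$ half of \eqref{eq:adjoint-parameter-uniform-bound}. Squaring both bounds and using $\|Y\|\le 1+\|Y\|^2$ gives \eqref{eq:adjoint-parameter-uniform-bound}. The $\|X_0\|$ term is retained only for generality.

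\emph{Step 3 (Fréchet differentiability in $L^1$).} By Step 2 and the mean value theorem, the pathwise map $\vartheta\mapsto P_0^\vartheta\in\mathcal X$ is $C^1$. For fixed $\eta$, write $\Delta P=P_0^{\vartheta+\eta}-P_0^\vartheta$. Pointwise in $p$,
\[
\|P_0^{\vartheta+\eta}(p)\|^2-\|P_0^\vartheta(p)\|^2-2\langle Q_0[\eta](p),P_0^\vartheta(p)\rangle
=2\langle \Delta P(p)-Q_0[\eta](p),P_0^\vartheta(p)\rangle+\|\Delta P(p)\|^2.
\]
Integrate over $p$ and apply Cauchy--Schwarz in $L^2(0,1)$. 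The $L^1$ norm of the remainder is at most
\[
2\|\Delta P-Q_0\eta\|\,\|P_0\|+\|\Delta P\|^2.
\]
Each term is $o(|\eta|)$ pathwise. The quotient by $|\eta|$ is dominated, via the mean value theorem and \eqref{eq:adjoint-parameter-uniform-bound}, by
\[
\sup_U\|D_\vartheta P_0\|\bigl(\sup_U\|D_\vartheta P_0\|+\|P_0\|\bigr)\le C(1+\|X_0\|^2+\|Y\|^2).
\]
This is integrable by \eqref{eq:data-fourth-moment}; the second moment suffices here, and the fourth moment gives extra room.

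Dominated convergence then gives $\|I_{\vartheta+\eta}-I_\vartheta-D_\vartheta I_\vartheta[\eta]\|_{L^1}=o(|\eta|)$, with the derivative given by \eqref{eq:influence-parameter-gradient}. The map $D_\vartheta I_\vartheta[\eta]$ is linear and bounded in $\eta$ by the same domination. Joint measurability of $(\omega,p)\mapsto P_0(p)$ is needed to define the expectation pointwise a.e. It is obtained from Fubini, since $P_0$ is a measurable $\mathcal X$-valued map of the data. Integration over $p$ is a bounded linear functional on $L^1$, so the differentiability of $Z$ follows from the chain rule.

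\emph{Main obstacle.} The main obstacle is the uniformity in Step 2. The second-derivative source must be controlled on the bounded trajectory set \emph{uniformly over data}. The trajectory set is bounded only in terms of $\|X_0\|$, so the assumption ``uniformly bounded on the bounded trajectory set'' must be read as holding on the pre-LN-bounded set. The approach is to exploit that every trainable map acts on $\mathrm{LN}(X)$, which is bounded by \eqref{eq:preln-uniform-normalized-bound}. This makes $D_XG$ and $D^2G$ bounded independently of $\|X_0\|$. If that fails, a polynomial dependence on $\|X_0\|$ would still be absorbed by the fourth-moment assumption.
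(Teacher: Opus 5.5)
Your proof follows the paper's route step for step. Gronwall bounds the forward flow and its parameter sensitivity, the linear readout gives the terminal adjoint, and $P_t$ and $D_\vartheta P_t$ solve linear backward equations with the second-derivative forcing. The pointwise identity $\|P+\Delta\|^2-\|P\|^2=2\langle P,\Delta\rangle+\|\Delta\|^2$, Cauchy--Schwarz and dominated convergence then finish the argument. Two things you do slightly better than the paper: letting $\eta\to0$ arbitrarily gives genuine Fr\'echet rather than directional differentiability, and you are right that the squared bound \eqref{eq:adjoint-parameter-uniform-bound} already gives an integrable dominating function under second moments, so \eqref{eq:data-fourth-moment} is more than needed.

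\textbf{Missing term in $D_\vartheta P_T$.} One intermediate claim is wrong. The $W_{\rm out}$-derivative of $P_T=(\operatorname{softmax}(X_TW_{\rm out})-Y)W_{\rm out}^{*}$ in a direction $\delta W$ has two terms. Besides the term $(\operatorname{softmax}-Y)\,\delta W^{*}$, which you kept, there is $D\operatorname{softmax}(X_TW_{\rm out})[X_T\,\delta W]\,W_{\rm out}^{*}$. That second term is of order $\|X_T\|_{\mathcal X}\le\|X_0\|_{\mathcal X}+C$ in general; softmax saturation does not remove it, for example along states where the leading logits are tied. Hence the correct bound is $\|D_\vartheta P_T\|\le C(1+\|X_0\|_{\mathcal X}+\|Y\|_{L^2})$, not $C(1+\|S_T\|)(1+\|Y\|)$. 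The $\|X_0\|^2$ term in \eqref{eq:adjoint-parameter-uniform-bound} is therefore genuinely needed, not ``retained only for generality''; it is exactly the ``one factor $X_T^\vartheta$'' in the paper's proof. With this correction your backward Gronwall for $Q_t$ goes through unchanged and squaring gives the stated inequality.

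\textbf{The fallback in your last paragraph is not sound.} Suppose the bounds on $D_XG$ and $D^2G$ grew polynomially in $\|X_0\|$. They would then enter the Gronwall exponents and produce factors like $e^{C(1+\|X_0\|_{\mathcal X})^{k}T}$. Such factors neither match the quadratic right-hand side of \eqref{eq:adjoint-parameter-uniform-bound} nor are absorbed by fourth moments. The uniform, pre-LN reading of the derivative hypothesis, which the paper also uses implicitly, is essential.
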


\begin{proof}
On the bounded trajectory set, the integral equation for the forward flow and Gronwall's inequality give
\[
\sup_{0\le t\le T}\|X_t^\vartheta\|_{\mathcal X}
\le C_{U,T}(1+\|X_0\|_{\mathcal X}).
\]
Differentiating the flow in $\vartheta$ gives a linear inhomogeneous equation with uniformly bounded coefficients and forcing, hence
\[
\sup_{0\le t\le T}\|D_\vartheta X_t^\vartheta\|
\le C_{U,T}.
\]
For the linear readout, the softmax residual is bounded by $C(1+\|Y\|_{L^2})$, while differentiation with respect to $W_{\rm out}$ introduces at most one factor $X_T^\vartheta$. Therefore
\[
\|P_T^\vartheta\|_{\mathcal X}
+\|D_\vartheta P_T^\vartheta\|
\le C_{U,T}(1+\|X_0\|_{\mathcal X}+\|Y\|_{L^2}).
\]
The backward adjoint and its parameter derivative are linear final-value equations. The second-derivative assumption controls the forcing term
$D_\vartheta[D_XG_\vartheta(t,X_t^\vartheta)]^*P_t^\vartheta$.
Variation of constants and backward Gronwall therefore yield the same linear growth bound at $t=0$; squaring proves \eqref{eq:adjoint-parameter-uniform-bound}.

For $\eta\in\R^{d_\vartheta}$, the pointwise algebraic identity
\[
\|P_0^{\vartheta+h\eta}\|^2-\|P_0^\vartheta\|^2
=2\langle P_0^\vartheta,P_0^{\vartheta+h\eta}-P_0^\vartheta\rangle
+\|P_0^{\vartheta+h\eta}-P_0^\vartheta\|^2
\]
holds for measurable representatives. Integrating in $p$, taking expectations, and applying Cauchy--Schwarz in $L^2(0,1)$ shows that the difference quotient converges in $L^1$ to the right-hand side of \eqref{eq:influence-parameter-gradient}. Indeed,
\[
\|\langle D_\vartheta P_0^\vartheta[\eta],P_0^\vartheta\rangle\|_{L^1}
\le
\|D_\vartheta P_0^\vartheta[\eta]\|_{L^2}
\|P_0^\vartheta\|_{L^2},
\]
and the expectation of this product is finite by \eqref{eq:adjoint-parameter-uniform-bound}, \eqref{eq:data-fourth-moment}, and Cauchy--Schwarz in probability. This proves $L^1$ differentiability and justifies differentiating all regional masses and the normalizing integral.
\end{proof}

\begin{proposition}[$L^2$ differentiability for the squared-density penalty]
\label{prop:influence-l2-differentiability}

In addition to Proposition \ref{prop:influence-differentiability}, assume that\footnote{Here $\mathcal D\otimes\mathrm{Leb}$ is the product of the data law $\mathcal D$ on examples $\omega=(X_0,Y)$ and Lebesgue measure on the position interval $(0,1)$.  Thus, for a jointly measurable random field $Q(\omega,p)$,
$\|Q\|_{L^4(\mathcal D\otimes\mathrm{Leb})}^4
=\E_{\omega\sim\mathcal D}\int_0^1\|Q(\omega,p)\|^4\dd p$.}
\begin{equation}\label{eq:adjoint-l4-sensitivity-assumption}
\vartheta\longmapsto P_0^\vartheta
\quad\text{is continuously Fr\'echet differentiable from $U$ into}\quad
L^4\bigl(\mathcal D\otimes\mathrm{Leb};\R^{d_x}\bigr),
\end{equation}
with
\[
\sup_{\vartheta\in U}
\left(
\|P_0^\vartheta\|_{L^4(\mathcal D\otimes\mathrm{Leb})}
+\|D_\vartheta P_0^\vartheta\|_{\mathcal L(\R^{d_\vartheta},L^4)}
\right)<\infty.
\]
Assume also that
\begin{equation}\label{eq:influence-normalization-away-from-zero}
Z(\vartheta)=\int_0^1I_\vartheta(p)\dd p\ge z_0>0,
\qquad \vartheta\in U.
\end{equation}
Then $\vartheta\mapsto I_\vartheta$ and $\vartheta\mapsto m_\vartheta=I_\vartheta/Z(\vartheta)$ are continuously Fr\'echet differentiable as maps into $L^2(0,1)$.  In particular, for every target density $\nu\in L^2(0,1)$,
\begin{equation}\label{eq:l2-influence-discrepancy}
\Phi_\nu(\vartheta):=\frac12\|m_\vartheta-\nu\|_{L^2(0,1)}^2
\end{equation}
is differentiable and
\begin{equation}\label{eq:l2-influence-discrepancy-derivative}
D\Phi_\nu(\vartheta)[\eta]
=\left\langle m_\vartheta-\nu,
Dm_\vartheta[\eta]\right\rangle_{L^2}.
\end{equation}
\end{proposition}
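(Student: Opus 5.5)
The strategy is to treat $I_\vartheta$ as the composition of the $C^1$ map $\vartheta\mapsto P_0^\vartheta$ into $L^4(\mathcal D\otimes\mathrm{Leb};\R^{d_x})$ with a continuous bilinear map into $L^2(0,1)$. Differentiability of $m_\vartheta$ then follows from the quotient rule, and differentiability of $\Phi_\nu$ from the chain rule. Concretely, define
\[
\mathcal B:L^4(\mathcal D\otimes\mathrm{Leb})\times L^4(\mathcal D\otimes\mathrm{Leb})\to L^2(0,1),\qquad
\mathcal B(Q,R)(p):=\E\langle Q(\cdot,p),R(\cdot,p)\rangle ,
\]
so that $I_\vartheta=\mathcal B(P_0^\vartheta,P_0^\vartheta)$.

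\textbf{Boundedness of $\mathcal B$.} First I show that $\mathcal B$ is well defined and bounded. For a.e.\ $p$, Cauchy--Schwarz in $\omega$ gives
\[
|\mathcal B(Q,R)(p)|^2\le \E\|Q(\cdot,p)\|^2\,\E\|R(\cdot,p)\|^2 .
\]
Integrating in $p$ and applying Cauchy--Schwarz in $p$ yields
\[
\|\mathcal B(Q,R)\|_{L^2}^2\le \Bigl(\int_0^1(\E\|Q\|^2)^2\dd p\Bigr)^{1/2}\Bigl(\int_0^1(\E\|R\|^2)^2\dd p\Bigr)^{1/2}.
\]
By Jensen, $(\E\|Q(\cdot,p)\|^2)^2\le \E\|Q(\cdot,p)\|^4$. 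Hence $\|\mathcal B(Q,R)\|_{L^2}\le \|Q\|_{L^4}\|R\|_{L^4}$. Joint measurability of representatives (Fubini) makes $p\mapsto\mathcal B(Q,R)(p)$ measurable, so it agrees a.e.\ with the density in Definition~\ref{def:influence}.

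\textbf{Differentiability of $I_\vartheta$.} A bounded bilinear map is $C^\infty$ with $D\mathcal B(Q,Q)[H]=2\mathcal B(Q,H)$ (using symmetry). Composing with the $C^1$ map \eqref{eq:adjoint-l4-sensitivity-assumption} shows that $\vartheta\mapsto I_\vartheta\in L^2(0,1)$ is $C^1$, with $D_\vartheta I_\vartheta[\eta]=2\E\langle D_\vartheta P_0^\vartheta[\eta],P_0^\vartheta\rangle$. This agrees with \eqref{eq:influence-parameter-gradient}, since $L^2(0,1)\hookrightarrow L^1(0,1)$. Continuity of the derivative follows from the estimate
\[
\|\mathcal B(Q,H)-\mathcal B(Q',H')\|\le\|Q-Q'\|\|H\|+\|Q'\|\|H-H'\|
\]
(norms in $L^4$), together with the uniform bounds on $U$.

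\textbf{Normalized density and $\Phi_\nu$.} The map $Z(\vartheta)=\langle I_\vartheta,1\rangle_{L^2}$ is a continuous linear functional of $I_\vartheta$, so it is $C^1$. Since $Z\ge z_0>0$ by \eqref{eq:influence-normalization-away-from-zero}, $1/Z$ is $C^1$ on $U$. Then $m_\vartheta=I_\vartheta/Z(\vartheta)$ is $C^1$ into $L^2$, with
\[
Dm_\vartheta[\eta]=\frac{D I_\vartheta[\eta]}{Z}-\frac{DZ[\eta]}{Z}\,m_\vartheta .
\]
Finally, $\Phi_\nu=\tfrac12\|\cdot-\nu\|_{L^2}^2\circ m$ is the composition with a smooth map on the Hilbert space $L^2(0,1)$, whose derivative at $f$ is $\langle f-\nu,\cdot\rangle$. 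The chain rule gives \eqref{eq:l2-influence-discrepancy-derivative}.

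The main obstacle is the $L^2$ bound for $\mathcal B$, specifically the Jensen step. It is essential that the fourth-moment condition is placed jointly on $\mathcal D\otimes\mathrm{Leb}$, not separately on $\omega$ and on $p$. It is also the only place where measurability and representative issues must be handled, and I would verify both carefully there. Everything else is the standard calculus of $C^1$ maps between Banach spaces.
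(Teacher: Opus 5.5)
Your proposal is correct and is essentially the paper's argument. The paper's quadratic map $\mathcal Q(P)(p)=\E\|P(p)\|^2$ from $L^4(\mathcal D\otimes\mathrm{Leb})$ into $L^2(0,1)$ is exactly the diagonal $\mathcal B(P,P)$ of your bilinear form, bounded by the same Jensen/H\"older estimate, and the paper checks the remainder $\mathcal B(Q,Q)$ (at most $\|Q\|_{L^4}^2$) by hand where you instead invoke smoothness of bounded bilinear maps; the remaining steps (composition with the $C^1$ hypothesis, $Z$ as a bounded linear functional, the quotient rule under $Z\ge z_0$, and the chain rule for the squared $L^2$ norm) match the paper's proof.
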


\begin{remark}[Practical interpretation of the $L^4$ sensitivity hypothesis]\label{rem:l4-sensitivity-practice}
At a fixed finite token length, \eqref{eq:adjoint-l4-sensitivity-assumption} is a natural local regularity condition for a smooth, regularized network: all norms on the finite-dimensional token state are equivalent, and bounded data, bounded parameter sets, regularized layer normalization, and bounded first and second derivatives give local fourth-moment bounds for the adjoint and its parameter sensitivity.  The continuum statement is stronger and is not automatic from the $L^2$ well-posedness theory.  It requires uniform integrability of fourth powers jointly over data and position, so it is reasonable when inputs and readout derivatives have controlled fourth moments and the linearized forward and backward propagators remain uniformly bounded.  It may fail for heavy-tailed data, unbounded readouts or activations, nearly singular normalization, or a sequence-length limit in which fourth moments are not uniform.  In such regimes, the $L^1$ differentiability result of Proposition \ref{prop:influence-differentiability} and regional-mass penalties remain valid under weaker assumptions, whereas the continuum squared-density penalty should be treated as a finite-token objective or used only after empirical fourth-moment and Jacobian-sensitivity diagnostics support the stronger hypothesis.
\end{remark}

\begin{proof}
The map
\[
\mathcal Q:L^4(\mathcal D\otimes\mathrm{Leb})\to L^2(0,1),
\qquad
\mathcal Q(P)(p):=\E\|P(p)\|^2,
\]
is continuously differentiable.  Indeed, Jensen's inequality gives
\[
\|\mathcal Q(P)\|_{L^2}^2
=\int_0^1\bigl(\E\|P(p)\|^2\bigr)^2\dd p
\le \E\int_0^1\|P(p)\|^4\dd p,
\]
and H\"older's inequality gives the bounded derivative
\[
D\mathcal Q(P)[Q](p)=2\E\langle P(p),Q(p)\rangle,
\qquad
\|D\mathcal Q(P)[Q]\|_{L^2}
\le2\|P\|_{L^4}\|Q\|_{L^4}.
\]
The quadratic remainder satisfies
\(\|\mathcal Q(P+Q)-\mathcal Q(P)-D\mathcal Q(P)[Q]\|_{L^2}
\le\|Q\|_{L^4}^2\).
Composing $\mathcal Q$ with \eqref{eq:adjoint-l4-sensitivity-assumption} proves the $L^2$ differentiability of $I_\vartheta$.  Integration is a bounded functional on $L^2(0,1)$, so $Z$ is differentiable; \eqref{eq:influence-normalization-away-from-zero} and the Banach-space quotient rule give the $L^2$ differentiability of $m_\vartheta$.  Finally, the squared $L^2$ norm is continuously differentiable, which proves \eqref{eq:l2-influence-discrepancy-derivative}.
\end{proof}

With Proposition \ref{prop:influence-differentiability} in hand, the unnormalized density is differentiable as an $L^1$ function. Hence the quotient rule below holds in $L^1(0,1)$, and therefore almost everywhere after choosing a representative.
The normalization \(Z_s=\int_0^1 I_{\vartheta_s}(q)\dd q\) is positive whenever the terminal loss has a nonzero adjoint on a set of positive probability; if the model has zero adjoint everywhere, the influence density is degenerate and the positional-bias diagnostic is unnecessary.
Then differentiation of \eqref{eq:gd-influence-density} along \eqref{eq:gd-flow} gives
\begin{equation}\label{eq:gd-mdot}
\partial_s m_s^{\rm GD}(p)
=
-\frac{\nabla_\vartheta I_{\vartheta_s}(p)\cdot\nabla R(\vartheta_s)}{Z_s}
+\frac{m_s^{\rm GD}(p)}{Z_s}
\int_0^1 \nabla_\vartheta I_{\vartheta_s}(q)\cdot\nabla R(\vartheta_s)\dd q .
\end{equation}
The identity \eqref{eq:gd-mdot} is exact but not closed. Indeed, set
\[
r_s(p):=
\begin{cases}
\dfrac{\nabla_\vartheta I_{\vartheta_s}(p)\cdot\nabla R(\vartheta_s)}{I_{\vartheta_s}(p)},& I_{\vartheta_s}(p)>0,\\[0.8em]
0,& I_{\vartheta_s}(p)=0,
\end{cases}
\qquad
\overline r_s:=\int_0^1 m_s^{\rm GD}(q)r_s(q)\dd q .
\]
On the positive set, $\partial_s I=-r_sI$. On $\{I=0\}$ the original quotient-rule identity \eqref{eq:gd-mdot} is used; since $I_\vartheta(p)\ge0$ and is differentiable in the parameter, its parameter gradient vanishes at an interior zero. Thus the convention above makes $m_sr_s=0$ on the zero set and the replicator form
\[
\partial_s m_s^{\rm GD}(p)=m_s^{\rm GD}(p)\bigl(\overline r_s-r_s(p)\bigr)
\]
holds almost everywhere.
Thus the normalized mass at position \(p\) increases exactly when its relative decay rate \(r_s(p)\) is smaller than the density-weighted average \(\overline r_s\); equivalently, other positions are being suppressed faster than position \(p\). This is the training-time form of positional bias: the bias is determined not only by the current attention weights, but also by the way the full gradient-flow vector field redistributes normalized adjoint energy across positions.

\subsection{Quantifying primacy, recency, and the middle deficit}

For $\delta\in(0,1/2)$, use the half-open convention
\[
E_L=[0,\delta),\qquad E_M=[\delta,1-\delta),\qquad E_R=[1-\delta,1].
\]
The continuum boundary points have zero measure, while the convention gives an unambiguous finite-token assignment. Define
\begin{align}
\overline m_L^{\rm GD}(s,\delta)
&:=\frac1\delta\int_{E_L} m_s^{\rm GD}(p)\dd p, \label{eq:gd-left-average}\\
\overline m_M^{\rm GD}(s,\delta)
&:=\frac1{1-2\delta}\int_{E_M}m_s^{\rm GD}(p)\dd p,\label{eq:gd-middle-average}\\
\overline m_R^{\rm GD}(s,\delta)
&:=\frac1\delta\int_{E_R}m_s^{\rm GD}(p)\dd p.\label{eq:gd-right-average}
\end{align}
since $m_s^{\rm GD}$ integrates to one over an interval of length one, these are \emph{regional densities}, not regional probability masses; an average may therefore exceed one when influence is concentrated in a short region.

Primacy means $\overline m_L^{\rm GD}>\overline m_M^{\rm GD}$, recency means $\overline m_R^{\rm GD}>\overline m_M^{\rm GD}$, and Lost-in-the-Middle means both. Define the unregularized boundary-minus-middle gap
\begin{equation}\label{eq:gd-lim-gap}
\Delta_\delta(s):=
\min\{\overline m_L^{\rm GD},\overline m_R^{\rm GD}\}-\overline m_M^{\rm GD},
\end{equation}
the stabilized index
\begin{equation}\label{eq:gd-lim-index}
\mathsf{LIM}_{\delta}(s)
:=\frac{\Delta_\delta(s)}
{\min\{\overline m_L^{\rm GD},\overline m_R^{\rm GD}\}+\varepsilon_0},
\qquad \varepsilon_0>0,
\end{equation}
and the symmetric scale-free contrast
\begin{equation}\label{eq:gd-lim-symmetric-contrast}
\mathsf{SC}_{\delta}(s)
:=\frac{\Delta_\delta(s)}
{\min\{\overline m_L^{\rm GD},\overline m_R^{\rm GD}\}+\overline m_M^{\rm GD}+\varepsilon_0}.
\end{equation}
All three quantities have the same sign. The magnitude of $\mathsf{LIM}_\delta$ depends on $\varepsilon_0$, so numerical reports in Section \ref{sec:simulations} state its value and also report $\Delta_\delta$ and $\mathsf{SC}_\delta$.

\subsection{Primacy from the Volterra adjoint cone}
The causal support condition in \eqref{eq:continuum-causal-kernel} implies
\begin{equation}\label{eq:causal-support-recalled}
A_\theta^h(p,q;X)=0
\qquad\text{whenever }q>p.
\end{equation}
Equations \eqref{eq:continuum-causal-attention} and \eqref{eq:continuum-causal-kernel} show that every quantity retained at output position $p$ depends only on the prefix $X|_{[0,p]}$.  The feed-forward sublayer is tokenwise, and any retained positional statistic is likewise prefix-causal.  Differentiation therefore preserves this lower-triangular dependence, so no additional causal-derivative-closure assumption is needed.
Consequently, the Fr\'echet derivative of causal attention is lower triangular in the position variable.  More precisely, a linearized causal component acting on a perturbation \(\xi\) has the Volterra--local form
\begin{equation}\label{eq:volterra-linearization}
(\mathcal K_t\xi)(p)
=\int_0^p K_t(p,q)\xi(q)\dd q
+\mathcal B_t(p)\xi(p),
\end{equation}
where the integral term transports key and value perturbations from positions \(q\le p\), while \(\mathcal B_t(p)\) collects terms depending only on the perturbation at the output position \(p\).  We now derive this decomposition directly from the normalized attention map.

For one head, set
\begin{equation}\label{eq:one-head-causal-map}
\mathcal T_h(X)(p)
:=\int_0^p A_h(p,q;X)\mathsf V_h(X(q))\dd q,
\qquad p>0,
\end{equation}
where
\[
\mathsf Q_h(z):=W_{Q,h}\mathrm{LN}(z),\qquad
\mathsf K_h(z):=W_{K,h}\mathrm{LN}(z),\qquad
\mathsf V_h(z):=W_{V,h}\mathrm{LN}(z).
\]
Writing
\begin{equation}\label{eq:one-head-score-and-kernel}
s_h(p,q;X)
:=\frac{\langle\mathsf Q_h(X(p)),\mathsf K_h(X(q))\rangle}{\sqrt{d_k}},
\qquad
A_h(p,q;X)
:=\frac{e^{s_h(p,q;X)}}{\int_0^p e^{s_h(p,r;X)}\dd r},
\end{equation}
the first variation of \(\mathcal T_h\) is
\begin{align}
D\mathcal T_h(X)[\xi](p)
&=\int_0^p A_h(p,q;X)
D\mathsf V_h(X(q))[\xi(q)]\dd q \notag\\
&\quad+\int_0^p DA_h(p,q;X)[\xi]\,
\mathsf V_h(X(q))\dd q .
\label{eq:attention-first-variation-schematic}
\end{align}
The score variation separates into a query contribution, depending on \(\xi(p)\), and a key contribution, depending on \(\xi(q)\):
\begin{align}
\delta s_{h,Q}(p,q)[\eta]
&:=\frac{\langle D\mathsf Q_h(X(p))[\eta],\mathsf K_h(X(q))\rangle}{\sqrt{d_k}},\notag\\
\delta s_{h,K}(p,q)[\eta]
&:=\frac{\langle\mathsf Q_h(X(p)),D\mathsf K_h(X(q))[\eta]\rangle}{\sqrt{d_k}}.
\label{eq:head-score-variation}
\end{align}
Thus
\[
Ds_h(p,q;X)[\xi]
=\delta s_{h,Q}(p,q)[\xi(p)]
+\delta s_{h,K}(p,q)[\xi(q)].
\]
Differentiating both the numerator and the normalizing denominator in \eqref{eq:one-head-score-and-kernel} gives
\begin{equation}\label{eq:normalized-kernel-variation}
DA_h(p,q;X)[\xi]
=A_h(p,q;X)
\left(
Ds_h(p,q;X)[\xi]
-\int_0^p A_h(p,r;X)Ds_h(p,r;X)[\xi]\dd r
\right).
\end{equation}
Let
\begin{equation}\label{eq:one-head-attention-mean-value}
\overline V_h(p):=\mathcal T_h(X)(p)
=\int_0^p A_h(p,r;X)\mathsf V_h(X(r))\dd r.
\end{equation}
Substituting \eqref{eq:normalized-kernel-variation} into \eqref{eq:attention-first-variation-schematic} and using the normalization
\(\int_0^p A_h(p,q;X)\dd q=1\) yields the exact identity
\begin{equation}\label{eq:attention-linearization-exact}
D\mathcal T_h(X)[\xi](p)
=\mathsf B_{h,Q}(p)\xi(p)
+\int_0^p\mathsf K_{h,X}(p,q)\xi(q)\dd q,
\end{equation}
where the nonlocal causal kernel is the linear map
\begin{align}
\mathsf K_{h,X}(p,q)\eta
:=A_h(p,q;X)\Bigl(&D\mathsf V_h(X(q))[\eta]\notag\\
&+\delta s_{h,K}(p,q)[\eta]
\bigl(\mathsf V_h(X(q))-\overline V_h(p)\bigr)\Bigr),
\label{eq:attention-nonlocal-kernel}
\end{align}
and the local query operator is
\begin{equation}\label{eq:attention-local-operator}
\mathsf B_{h,Q}(p)\eta
:=\int_0^p A_h(p,q;X)\,
\delta s_{h,Q}(p,q)[\eta]
\bigl(\mathsf V_h(X(q))-\overline V_h(p)\bigr)\dd q.
\end{equation}
All layer-normalization derivatives are contained in
\(D\mathsf Q_h\), \(D\mathsf K_h\), and \(D\mathsf V_h\).  Equation \eqref{eq:attention-nonlocal-kernel} is supported only on \(q\le p\), whereas \eqref{eq:attention-local-operator} multiplies \(\xi(p)\) and is therefore local in position.  The formulas are stated for \(p>0\); at the left endpoint the derivative is understood through the operator trace in \eqref{eq:continuum-attention-left-trace}, consistently with the treatment of the nonlinear attention map.

After concatenating the headwise derivatives, applying the output projection \(W_O\), adding the tokenwise feed-forward derivative, and including the prefix-causal auxiliary contributions retained in the modeled architecture, one obtains
\begin{equation}\label{eq:multihead-volterra-derivative}
D\mathscr F_{\theta_t}(X_t)[\xi](p)
=\int_0^p K_t(p,q)\xi(q)\dd q
+\mathcal B_t(p)\xi(p).
\end{equation}
Here \(K_t\) contains the projected headwise kernels \eqref{eq:attention-nonlocal-kernel} and any other causal nonlocal derivative terms, while \(\mathcal B_t\) contains the projected query operators \eqref{eq:attention-local-operator}, the feed-forward derivative, and other tokenwise local terms.  The residual identity is not part of \(D\mathscr F_{\theta_t}(X_t)\) and is therefore not included in \(\mathcal B_t\).  It enters the derivative of a discrete residual step as the separate identity factor and appears in continuous depth as the direct terminal term \(P_T\) in the Duhamel formula \eqref{eq:continuous-adjoint-duhamel}.  This separation is necessary for distinguishing the Volterra primacy channel from residual transmission.

The Hilbert-space adjoint of \eqref{eq:volterra-linearization} satisfies
\begin{equation}\label{eq:volterra-adjoint}
(\mathcal K_t^*a)(q)
=
\int_q^1 K_t(p,q)^*a(p)\dd p+\mathcal B_t(q)^*a(q).
\end{equation}
Thus an early perturbation $q$ has a larger future causal cone $\{p:p\ge q\}$ than a middle perturbation, and the adjoint formula shows this transport explicitly: a terminal covector value \(a(p)\) at a later position \(p\ge q\) contributes to the earlier position \(q\) through the term \(K_t(p,q)^*a(p)\), and the total contribution is obtained by integrating over all future positions \(p\in[q,1]\). Thus loss sensitivity located later in the context is pushed backward through the causal edges to every earlier position that could have influenced it. This is the deterministic adjoint mechanism behind primacy.

One can make the cone effect visible through the cone mass
\begin{equation}\label{eq:cone-mass}
\mathcal C(q)
:=
\int_q^1 \|K_t(p,q)\|_{\rm op}^2\dd p .
\end{equation}
In practice one can estimate \(\mathcal C(q)\) from the finite-token Jacobian of the attention block. Let \(\widehat K_t(i,j)\) denote the off-diagonal block of the Jacobian mapping a perturbation at token \(j\) into the attention output at token \(i\), with \(j\le i\). Then the discrete cone-mass estimator is
\begin{equation}\label{eq:finite-cone-mass-estimator}
\widehat{\mathcal C}_t(j):=\sum_{i=j}^{L}\Delta p\,\|\widehat K_t(i,j)\|_{\rm op}^2,\qquad \Delta p:=L^{-1}.
\end{equation}
When forming the full Jacobian is too expensive, the operator norms can be replaced by Hutchinson-type probe estimates, for independent unit-variance probes \(\xi^{(r)}\) supported near token \(j\):
\begin{equation}\label{eq:probe-cone-mass-estimator}
\widehat{\mathcal C}_t(j)\approx \frac1{N_{\rm probe}}\sum_{r=1}^{N_{\rm probe}}\sum_{i=j}^{L}\Delta p\,\|(D\mathrm{Attn}_t(X)\xi^{(r)})(i)\|^2.
\end{equation}
This gives a measurable finite-token proxy for \eqref{eq:cone-mass}.

The cone length alone gives only an upper bound by Cauchy--Schwarz and does not rule out vector cancellation.  Moreover, both the kernel and adjoint are data-dependent.  Write $\omega=(X_0,Y)$ and
\[
K_{t,\omega}(p,q):=K_t(p,q;X_{t,\omega}),
\qquad P_{t,\omega}:=P_t^{\theta,X_0,Y}.
\]
For a fixed depth time the random-kernel identity is
\begin{align}
Q_t(q)
&:=\E\left\|\int_q^1K_{t,\omega}(p,q)^*P_{t,\omega}(p)\dd p\right\|^2\notag\\
&=\int_q^1\!\int_q^1
\operatorname{Re}\E\operatorname{tr}\!\left[
K_{t,\omega}(p,q)^*P_{t,\omega}(p)
P_{t,\omega}(r)^*K_{t,\omega}(r,q)
\right]\dd p\dd r.
\label{eq:exact-cone-covariance-energy}
\end{align}
The kernel cannot be pulled outside the expectation unless it is deterministic or an appropriate conditional covariance is used.
In the present setting the depth-control path $\theta_t$ is deterministic once the full-batch training trajectory is fixed, but the kernel is evaluated at the example-dependent hidden state $X_{t,\omega}$.  Hence $K_{t,\omega}(p,q)=K_t(p,q;X_{t,\omega})$ is generally random under $\omega\sim\mathcal D$ (through $X_0$; the label $Y$ enters the adjoint rather than the forward state).  The kernel is deterministic only in special cases, such as a state-independent linearized attention operator, a data distribution concentrated on a single forward trajectory, or an analysis conditioned on a fixed input trajectory.  Therefore \eqref{eq:exact-cone-covariance-energy}, with the kernel retained inside the expectation, is the appropriate identity for the modeled data-averaged influence.

The actual cone channel in the input adjoint is integrated over depth:
\begin{equation}\label{eq:depth-integrated-cone-channel}
P_{\rm cone,\omega}(q)
:=\int_0^T\zeta(t)\int_q^1
K_{t,\omega}(p,q)^*P_{t,\omega}(p)\dd p\dd t.
\end{equation}
Define the joint cross-depth integrand
\begin{align}
\Gamma(t,u,p,r;q)
:={}&\zeta(t)\zeta(u)\operatorname{Re}\E\operatorname{tr}\!\left[
K_{t,\omega}(p,q)^*P_{t,\omega}(p)\right.\notag\\[-0.3em]
&\left.\hspace{8em}{}\times
P_{u,\omega}(r)^*K_{u,\omega}(r,q)
\right].
\label{eq:joint-cross-depth-cone-integrand}
\end{align}
Whenever this integrand is absolutely integrable, Fubini's theorem gives the exact identity
\begin{align}
Q_{\rm cone}(q)
&:=\E\|P_{\rm cone,\omega}(q)\|^2\notag\\
&=\int_0^T\!\int_0^T\!\int_q^1\!\int_q^1
\Gamma(t,u,p,r;q)\dd p\dd r\dd t\dd u.
\label{eq:full-depth-cone-energy}
\end{align}

\begin{proposition}[Conditional primacy under joint non-cancellation]
\label{prop:primacy-noncancellation}
Let $E_L=[0,\delta)$ and $E_M=[\delta,1-\delta)$, and define
\begin{equation}\label{eq:cone-geometric-region-factor}
A_B:=\frac1{|B|}\int_B(1-q)^2\dd q.
\end{equation}
Assume absolute integrability in \eqref{eq:full-depth-cone-energy} and constants $\kappa_L,K_M>0$ such that
\begin{align}
\Gamma(t,u,p,r;q)&\ge\kappa_L
&&\text{for a.e. }q\in E_L, (t,u)\in(0,T)^2, (p,r)\in(q,1)^2,
\label{eq:cone-left-joint-coercivity}\\
|\Gamma(t,u,p,r;q)|&\le K_M
&&\text{for a.e. }q\in E_M, (t,u)\in(0,T)^2, (p,r)\in(q,1)^2.
\label{eq:cone-middle-joint-upper-bound}
\end{align}
Then
\begin{align}
\frac1{|E_L|}\int_{E_L}Q_{\rm cone}(q)\dd q
&\ge \kappa_LT^2A_{E_L},
\label{eq:cone-left-actual-lower-bound}\\
\frac1{|E_M|}\int_{E_M}Q_{\rm cone}(q)\dd q
&\le K_MT^2A_{E_M}.
\label{eq:cone-middle-actual-upper-bound}
\end{align}
Consequently, if
\begin{equation}\label{eq:cone-actual-primacy-condition}
\kappa_LA_{E_L}>K_MA_{E_M},
\end{equation}
the \emph{actual depth-integrated cone energy}, not merely its lower bound, has a strict left--middle regional advantage.
\end{proposition}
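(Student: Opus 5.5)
The argument is a pointwise integration of the hypothesized bounds against the exact identity \eqref{eq:full-depth-cone-energy}, followed by regional averaging. The assumed absolute integrability of $\Gamma$ lets Fubini's theorem justify both the quadruple-integral formula for $Q_{\rm cone}(q)$ and the later interchange with the $q$-average.

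\textbf{Left region.} For a.e.\ $q\in E_L$, the coercivity \eqref{eq:cone-left-joint-coercivity} holds on the whole domain $(0,T)^2\times(q,1)^2$. That domain has measure $T^2(1-q)^2$. Integrating the pointwise lower bound therefore gives
\[
Q_{\rm cone}(q)\ge \kappa_L T^2(1-q)^2 \quad \text{for a.e. } q\in E_L .
\]
Averaging over $E_L$ and using the definition \eqref{eq:cone-geometric-region-factor} of $A_{E_L}$ yields \eqref{eq:cone-left-actual-lower-bound}.

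\textbf{Middle region.} For a.e.\ $q\in E_M$, I bound $Q_{\rm cone}(q)$ by the integral of $|\Gamma|$ over the same domain. The upper bound \eqref{eq:cone-middle-joint-upper-bound} then gives
\[
Q_{\rm cone}(q)\le K_M T^2(1-q)^2 .
\]
Averaging gives \eqref{eq:cone-middle-actual-upper-bound}.

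\textbf{Conclusion.} Chaining the two regional estimates with the hypothesis \eqref{eq:cone-actual-primacy-condition} gives
\[
\frac{1}{|E_L|}\int_{E_L}Q_{\rm cone}\,\dd q \;\ge\; \kappa_L T^2 A_{E_L} \;>\; K_M T^2 A_{E_M} \;\ge\; \frac{1}{|E_M|}\int_{E_M}Q_{\rm cone}\,\dd q .
\]
This is a strict inequality between the actual regional averages, not between two lower bounds. That distinction is why the middle estimate must be an upper bound.

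\textbf{Measurability and finiteness.} The only real checkpoint is that $q\mapsto Q_{\rm cone}(q)$ is measurable and that its regional averages are finite, so that averaging is legitimate. Both follow from the joint measurability of $\Gamma$ in $(t,u,p,r,q)$ and from Tonelli's theorem applied to $|\Gamma|$. On the middle region finiteness is immediate anyway, since the bound $K_M T^2(1-q)^2$ is finite. On the left region the quantity may be $+\infty$, and the lower bound still holds in the extended sense. Everything else is elementary integration.
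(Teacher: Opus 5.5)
Your proposal is correct and follows the paper's proof essentially verbatim. You integrate the coercivity bound and the upper bound on $\Gamma$ over $(0,T)^2\times(q,1)^2$ in \eqref{eq:full-depth-cone-energy} to get $Q_{\rm cone}(q)\ge\kappa_LT^2(1-q)^2$ on $E_L$ and $Q_{\rm cone}(q)\le K_MT^2(1-q)^2$ on $E_M$, then average regionally and chain the two averages through \eqref{eq:cone-actual-primacy-condition}; your added remarks on measurability and on a possibly infinite left average are harmless refinements that the paper leaves implicit.
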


\begin{proof}
For $q\in E_L$, integrate \eqref{eq:cone-left-joint-coercivity} over $(0,T)^2\times(q,1)^2$ in \eqref{eq:full-depth-cone-energy}; this gives
$Q_{\rm cone}(q)\ge\kappa_LT^2(1-q)^2$.
For $q\in E_M$, absolute integrability and \eqref{eq:cone-middle-joint-upper-bound} give
$Q_{\rm cone}(q)\le K_MT^2(1-q)^2$.
Regional averaging proves \eqref{eq:cone-left-actual-lower-bound}--\eqref{eq:cone-middle-actual-upper-bound}, and \eqref{eq:cone-actual-primacy-condition} gives the strict comparison.
\end{proof}

Without joint cross-data and cross-depth non-cancellation, and without an upper control on the middle channel, the Volterra formula identifies a possible primacy channel but does not by itself prove primacy.

\subsection{Recency from the residual identity channel}

Residual connections provide a distinct transmission channel: they preserve terminal sensitivity through depth, but they generate a right-boundary advantage only when the terminal readout or task protocol is already right biased.

For the split residual layer, write its exact state Jacobian on a bounded trajectory as
\begin{equation}\label{eq:residual-jacobian-remainder}
D_{X_k}X_{k+1}=I+\varepsilon A_k+E_k,
\qquad
A_k:=\zeta(t_k)D\mathscr F_{\theta_k}(X_k),
\qquad
\|E_k\|_{\rm op}\le C\varepsilon^2.
\end{equation}
The remainder bound is the differentiated form of the split-layer defect. Duality gives
\begin{equation}\label{eq:residual-adjoint-step}
P_k=(I+B_k)P_{k+1},
\qquad B_k:=\varepsilon A_k^*+E_k^*.
\end{equation}
Hence the exact backward propagator is
\begin{equation}\label{eq:residual-adjoint-product}
P_0=(I+B_0)(I+B_1)\cdots(I+B_{M-1})P_M,
\end{equation}
and its finite algebraic expansion is
\begin{equation}\label{eq:identity-plus-interaction-expansion}
P_0=P_M+
\sum_{r=1}^{M}\ \sum_{0\le k_1<\cdots<k_r\le M-1}
B_{k_1}\cdots B_{k_r}P_M.
\end{equation}
The first term $P_M$ is present even if the attention and feed-forward derivative terms are set to zero. It is therefore the purely residual identity channel. In the continuous-depth notation, the same decomposition follows from Duhamel's formula:
\begin{equation}\label{eq:continuous-adjoint-duhamel}
P_0(p)
=P_T(p)+\int_0^T \zeta(t)\bigl(D\mathscr F_{\theta_t}(X_t)^*P_t\bigr)(p)\dd t .
\end{equation}
Thus terminal sensitivity at position $p$ is copied to depth zero at zeroth order, while the Volterra and feed-forward interactions only add the integral correction.

Let $E_R=[1-\delta,1]$, $E_M=[\delta,1-\delta)$, and let $\Pi_R$ and $\Pi_M$ denote multiplication by the indicators of these two sets. Recency is not imposed as an additional assumption; in the present framework it is the part of the measured influence density that is explained by the residual transmission of terminal adjoint energy. The residual contribution is therefore defined by
\begin{equation}\label{eq:residual-influence-component}
I_{\rm res}(s,p)
:=\E_{(X_0,Y)\sim\mathcal D}
\left[\left\|P_T^{\theta^{\vartheta_s},X_0,Y}(p)\right\|^2\right],
\end{equation}
and its right-middle contrast is the observable quantity
\begin{equation}\label{eq:residual-recency-contrast}
\mathsf R_{\rm res}(s,\delta)
:=
\frac{1}{\delta}\int_{1-\delta}^1 I_{\rm res}(s,p)\dd p
-
\frac{1}{1-2\delta}\int_{\delta}^{1-\delta} I_{\rm res}(s,p)\dd p .
\end{equation}
A positive value of \eqref{eq:residual-recency-contrast} means exactly that the average terminal adjoint energy density on \(E_R\) exceeds the corresponding average on \(E_M\). Since the residual identity part in \eqref{eq:continuous-adjoint-duhamel} transmits \(P_T\) directly into \(P_0\), any right-heavy terminal energy produced by the loss or readout is already a right-heavy input-sensitivity component before the Volterra correction is added.

In an autoregressive decoder this right-boundary contribution can be expressed directly from the next-token loss without assuming a priori that the query locations are terminal. 
Let
\[
\alpha^L=(\alpha_1^L,\ldots,\alpha_L^L),
\qquad \alpha_i^L\ge0,
\qquad \sum_{i=1}^{L}\alpha_i^L=1,
\]
be the empirical weight with which the training or evaluation objective reads out the next-token loss at query location \(i\). These weights are useful in practice since different protocols use different query sets: full teacher-forced training averages over many positions, prompt evaluation may use only the last query, and benchmark losses may average over a selected subset. Mathematically, full teacher-forced training corresponds to \(\alpha_i^L=1/L\) for all available query positions, because the loss averages the next-token prediction error at every token. Last-token prompt evaluation corresponds to \(\alpha_L^L=1\) and \(\alpha_i^L=0\) for \(i<L\), because the benchmark conditions on the whole prompt and scores only the prediction after the final prompt token. More generally, if a benchmark scores a subset \(S_L\subset\{1,\ldots,L\}\), then \(\alpha_i^L=|S_L|^{-1}\one_{\{i\in S_L\}}\). The finite-context autoregressive loss can therefore be written as the weighted objective
\begin{equation}\label{eq:autoregressive-terminal-loss}
\mathcal\ell_{\rm AR}(Y,H(X))
=-\sum_{i=1}^{L}\alpha_i^L
Y_{i+1}^{\top}\log\operatorname{softmax}(H(X)_i),
\end{equation}
Equation~\eqref{eq:autoregressive-terminal-loss} uses an observed context of $L$ tokens together with an additional continuation target $Y_{L+1}$, so it has $L$ scored query positions. Ordinary teacher forcing on a sequence containing exactly $L$ observed tokens instead scores only $i=1,\ldots,L-1$ with targets $Y_2,\ldots,Y_L$; in that convention set $\alpha_L^L=0$ and renormalize the weights over the available queries. The continuation convention is used below only when the task supplies or evaluates the token following the full context.
The logit residual entering the terminal adjoint is therefore
\begin{equation}\label{eq:autoregressive-logit-residual}
G_i^{\rm AR}
=
\alpha_i^L
\bigl(\operatorname{softmax}(H(X_T)_i)-Y_{i+1}\bigr),
\qquad i=1,\ldots,L,
\end{equation}
and hence
\begin{equation}\label{eq:autoregressive-terminal-adjoint}
P_T
=DH(X_T)^*G^{\rm AR}.
\end{equation}
Thus recency is tied to the measured distribution of the query weights $\alpha^L$, not to an extra structural assumption on their support. Define the discrete right and middle query masses
\begin{equation}\label{eq:discrete-right-middle-sets}
E_R^L(\delta):=\{i: i/L\in[1-\delta,1]\},
\qquad
E_M^L(\delta):=\{i: i/L\in[\delta,1-\delta)\}.
\end{equation}
\begin{equation}\label{eq:query-mass-discrete}
\omega_R^L(\delta):=\sum_{i\in E_R^L(\delta)}\alpha_i^L,
\qquad
\omega_M^L(\delta):=\sum_{i\in E_M^L(\delta)}\alpha_i^L.
\end{equation}
A right-biased query/readout geometry means
\begin{equation}\label{eq:right-biased-query-mass}
\omega_R^L(\delta)>\omega_M^L(\delta),
\end{equation}
or, more sharply for squared adjoint energy, that
\begin{equation}\label{eq:right-biased-squared-query-mass}
\frac1{|E_R^L(\delta)|}\sum_{i\in E_R^L(\delta)}(\alpha_i^L)^2
>
\frac1{|E_M^L(\delta)|}\sum_{i\in E_M^L(\delta)}(\alpha_i^L)^2.
\end{equation}
This condition is an observable property of the loss/readout placement. For example, in last-token prompt evaluation, namely the application scenario in which a model is given the full prompt or retrieved context and the metric scores only the next-token or answer prediction made from the final query position, one has $\alpha_{L}^L=1$ and $\alpha_i^L=0$ for $i<L$, so \eqref{eq:right-biased-query-mass} holds for every fixed $\delta>0$ once $L$ is large. 

In fully teacher-forced training with uniform token averaging, each query contributes equally, so \(\alpha_i^L=1/L\). Then \(\omega_R^L(\delta)=|E_R^L(\delta)|/L\approx\delta\) and \(\omega_M^L(\delta)=|E_M^L(\delta)|/L\approx1-2\delta\). Moreover, the per-position squared averages satisfy \( |E_R^L|^{-1}\sum_{i\in E_R^L}(\alpha_i^L)^2=|E_M^L|^{-1}\sum_{i\in E_M^L}(\alpha_i^L)^2=L^{-2}\), up to endpoint discretization. Hence the terminal-loss channel by itself does not select the right boundary. This is why the present formulation separates the residual identity mechanism from the task protocol: recency appears only when the measured query/readout weights or residual factors actually make \eqref{eq:residual-recency-contrast} positive.

In the simple tokenwise readout case $H(X)_i=C_iX(i)$, one obtains
\begin{equation}\label{eq:tokenwise-terminal-adjoint}
P_T(i)
=C_i^*G_i^{\rm AR}
=\alpha_i^L C_i^*
\bigl(\operatorname{softmax}(H(X_T)_i)-Y_{i+1}\bigr).
\end{equation}
Consequently the right-minus-middle residual energy satisfies
\begin{align}
&\frac1{|E_R^L|}\sum_{i\in E_R^L}\E\|P_T(i)\|^2
-
\frac1{|E_M^L|}\sum_{i\in E_M^L}\E\|P_T(i)\|^2 \notag\\
&\quad =
\frac1{|E_R^L|}\sum_{i\in E_R^L}(\alpha_i^L)^2
\E\left\|C_i^*\bigl(\operatorname{softmax}(H(X_T)_i)-Y_{i+1}\bigr)\right\|^2 \notag\\
&\qquad -
\frac1{|E_M^L|}\sum_{i\in E_M^L}(\alpha_i^L)^2
\E\left\|C_i^*\bigl(\operatorname{softmax}(H(X_T)_i)-Y_{i+1}\bigr)\right\|^2 .\label{eq:finite-recency-energy-contrast}
\end{align}
To make this comparison explicit, define
\[
\rho_i:=\E\left\|C_i^*\bigl(\operatorname{softmax}(H(X_T)_i)-Y_{i+1}\bigr)\right\|^2.
\]
Then \eqref{eq:finite-recency-energy-contrast} is
\[
\frac1{|E_R^L|}\sum_{i\in E_R^L}(\alpha_i^L)^2\rho_i
-
\frac1{|E_M^L|}\sum_{i\in E_M^L}(\alpha_i^L)^2\rho_i.
\]
If the readout residual energies are comparable, say \(0<\rho_-\le \rho_i\le \rho_+<\infty\) and \(\rho_+/\rho_-\) is close to one across the two regions, then the sign is governed mainly by the difference of the squared query-weight averages in \eqref{eq:right-biased-squared-query-mass}. In this notation, the informal statement that the last observed tokens are closest to the terminal query/readout locations should be read as a coefficient-level condition: after the query weights and readout residual energies are combined, the effective coefficients \((\alpha_i^L)^2\rho_i\) have a larger average over \(E_R^L(\delta)\) than over \(E_M^L(\delta)\).

For a local readout
\begin{equation}\label{eq:local-readout-finite}
H(X)_i=\sum_{j=1}^{L}C_{ij}X(j),
\qquad C_{ij}=0\quad\text{if } |i-j|>r_H,
\end{equation}
Here the condition \(C_{ij}=0\) for \(|i-j|>r_H\) models a readout whose logit at query \(i\) depends only on hidden states within a window of radius \(r_H\) around \(i\). This includes tokenwise readout as the case \(r_H=0\), and it also covers local smoothing, local pooling, or finite-window output heads in practice: these are readouts in which the logit at position \(i\) is formed from a short neighborhood of hidden states rather than from exactly one hidden state. It is a locality assumption on the readout operator, not an additional assumption on the causal attention mask.
The terminal adjoint is
\begin{equation}\label{eq:local-readout-adjoint}
P_T(j)=\sum_{i=1}^{L}C_{ij}^*G_i^{\rm AR},
\end{equation}
and hence
\begin{equation}\label{eq:local-readout-support}
\operatorname{supp} P_T
\subset
\{j:\operatorname{dist}(j,\operatorname{supp}\alpha^L)\le r_H\}.
\end{equation}
Thus, if \(\alpha^L\) is concentrated in \(E_R^L(\delta)\), then \(G_i^{\rm AR}\) is concentrated at right-side query indices. The formula \(P_T(j)=\sum_i C_{ij}^*G_i^{\rm AR}\) can move this support only to indices \(j\) with \(|i-j|\le r_H\). Therefore the terminal adjoint remains right-biased, with at most an \(r_H\)-neighborhood of spatial spreading.

In continuum notation, let $\chi\ge0$ be a continuum query/readout density, namely the continuum limit of the discrete weights \(\alpha_i^L\) so that \(\chi(p)\dd p\) gives the fraction of loss/readout mass assigned to query positions in a small interval around \(p\) with $\int_0^1\chi(p)\dd p=1$. The autoregressive readout loss is
\begin{equation}\label{eq:continuum-ar-loss}
\mathcal\ell_{\rm AR}(Y,H(X))
=-\int_0^1 \chi(p)Y(p)^\top
\log\operatorname{softmax}(H(X)(p))\dd p,
\end{equation}
and the continuum terminal adjoint is
\begin{equation}\label{eq:continuum-ar-terminal-adjoint}
P_T
=DH(X_T)^*\Bigl(\chi(\cdot)
\bigl(\operatorname{softmax}(H(X_T)(\cdot))-Y(\cdot)\bigr)\Bigr).
\end{equation}
The right-bias condition is now the measurable inequality
\begin{equation}\label{eq:continuum-query-right-bias}
\frac1\delta\int_{1-\delta}^1\chi(p)^2\dd p
>
\frac1{1-2\delta}\int_{\delta}^{1-\delta}\chi(p)^2\dd p,
\end{equation}
possibly after multiplying by the local readout-residual energy. Under \eqref{eq:continuum-query-right-bias}, the residual component \eqref{eq:residual-influence-component} is naturally larger near the right boundary, up to the spatial spread of $DH(X_T)^*$. The theory only requires the measurable contrast \eqref{eq:residual-recency-contrast}; the autoregressive formulas \eqref{eq:autoregressive-terminal-loss}--\eqref{eq:continuum-query-right-bias} explain one standard mechanism by which that contrast can become positive.

For a possibly unbounded data distribution, define the pathwise operator size
\begin{equation}\label{eq:adjoint-operator-bound}
\Lambda_s(\omega):=
\sup_{0\le t\le T}|\zeta(t)|\,
\|D\mathscr F_{\theta_t^{\vartheta_s}}(X_{t,\omega})^*\|_{\mathcal L(\mathcal X)},
\qquad
c_s(\omega):=e^{T\Lambda_s(\omega)}-1.
\end{equation}
Whenever $\Lambda_s(\omega)<\infty$, backward Gronwall gives the pathwise estimate
\begin{equation}\label{eq:residual-stability-bound}
\|P_0(\omega)-P_T(\omega)\|_{\mathcal X}
\le c_s(\omega)\|P_T(\omega)\|_{\mathcal X}.
\end{equation}

\begin{proposition}[Regional squared-energy persistence of recency]
\label{prop:regional-recency-persistence}
Let
\[
\mathcal A_B(P):=\frac1{|B|}\E\|\Pi_BP\|_{\mathcal X}^2
\]
and assume
\begin{equation}\label{eq:recency-moment-condition}
\E\bigl[(2c_s+c_s^2)\|P_T\|_{\mathcal X}^2\bigr]<\infty.
\end{equation}
Then, for every measurable region $B$ of positive length,
\begin{equation}\label{eq:regional-squared-energy-stability}
|\mathcal A_B(P_0)-\mathcal A_B(P_T)|
\le \frac1{|B|}
\E\bigl[(2c_s+c_s^2)\|P_T\|_{\mathcal X}^2\bigr].
\end{equation}
Consequently the terminal right bias survives whenever
\begin{equation}\label{eq:regional-recency-survival-condition}
\mathsf R_{\rm res}(s,\delta)
>
\left(\frac1\delta+\frac1{1-2\delta}\right)
\E\bigl[(2c_s+c_s^2)\|P_T\|_{\mathcal X}^2\bigr].
\end{equation}
Under this condition, $\mathcal A_{E_R}(P_0)>\mathcal A_{E_M}(P_0)$.
\end{proposition}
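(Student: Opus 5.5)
The plan is to work pathwise first and take expectations only at the end. Fix $\omega$ with $\Lambda_s(\omega)<\infty$ and write $D:=P_0(\omega)-P_T(\omega)$. Multiplication by $\one_B$ is an orthogonal projection on $\mathcal X$. Hence
\[
\|\Pi_BP_0\|_{\mathcal X}^2-\|\Pi_BP_T\|_{\mathcal X}^2
=2\langle \Pi_BP_T,\Pi_BD\rangle_{\mathcal X}+\|\Pi_BD\|_{\mathcal X}^2 .
\]
Using $\|\Pi_B\|_{\rm op}\le1$, Cauchy--Schwarz, and the pathwise Gronwall estimate \eqref{eq:residual-stability-bound}, $\|D\|_{\mathcal X}\le c_s(\omega)\|P_T\|_{\mathcal X}$, we get
\[
\bigl|\|\Pi_BP_0\|^2-\|\Pi_BP_T\|^2\bigr|
\le 2c_s\|P_T\|^2+c_s^2\|P_T\|^2=(2c_s+c_s^2)\|P_T\|_{\mathcal X}^2 .
\]

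Next we take expectations. By \eqref{eq:recency-moment-condition}, the right-hand side is integrable. Both $\E\|\Pi_BP_T\|^2$ and $\E\|\Pi_BP_0\|^2$ should be finite, so that the difference of expectations is legitimate. For $P_0$, note $\|P_0\|\le(1+c_s)\|P_T\|$, and $(1+c_s)^2=1+2c_s+c_s^2$. So finiteness of $\E\|P_0\|^2$ follows once $\E\|P_T\|^2<\infty$. The latter is implicit in $\mathsf R_{\rm res}$ being defined, and if necessary I would state it as part of the standing setting. Jensen's inequality, $|\E Z|\le\E|Z|$, then gives the bound, and dividing by $|B|$ yields \eqref{eq:regional-squared-energy-stability}. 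A subtle point is that \eqref{eq:residual-stability-bound} requires $\Lambda_s(\omega)<\infty$. I would note that the moment condition forces $c_s\|P_T\|^2<\infty$ a.s. On the event $\{P_T=0\}$, linearity of the backward equation gives $P_0=0$, so the inequality holds trivially there with the convention $0\cdot\infty=0$.

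For the survival conclusion, set $\mathcal E:=\E[(2c_s+c_s^2)\|P_T\|^2]$. The residual contrast \eqref{eq:residual-recency-contrast} is exactly $\mathcal A_{E_R}(P_T)-\mathcal A_{E_M}(P_T)$, since $|E_R|=\delta$ and $|E_M|=1-2\delta$, and $\mathcal A_B(P_T)=|B|^{-1}\int_BI_{\rm res}$ by Fubini. We apply the bound with $B=E_R$ and $B=E_M$:
\[
\mathcal A_{E_R}(P_0)-\mathcal A_{E_M}(P_0)\ge \mathsf R_{\rm res}(s,\delta)-\Bigl(\tfrac1\delta+\tfrac1{1-2\delta}\Bigr)\mathcal E>0
\]
under \eqref{eq:regional-recency-survival-condition}.

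I expect the only real obstacle to be the measurability and integrability bookkeeping in the second step, not the algebra. Specifically, one needs to justify that $\omega\mapsto c_s(\omega)$ is measurable, which is a supremum over $t$ that can be reduced to rational $t$ by continuity in depth. One also needs the finiteness of $\E\|P_T\|^2$ discussed above.
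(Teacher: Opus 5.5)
Your proposal is correct and follows the paper's proof. Both arguments use the pathwise polarization identity for $\|\Pi_B(P_T+d)\|^2-\|\Pi_BP_T\|^2$, combine it with the Gronwall bound \eqref{eq:residual-stability-bound} to get the pathwise $(2c_s+c_s^2)\|P_T\|^2$ estimate, take expectations, and apply the result to $E_R$ and $E_M$. Your extra bookkeeping (finiteness of $\E\|P_T\|^2$, the null event $\{\Lambda_s=\infty\}$, measurability of $c_s$) is left implicit in the paper, and it is a valid refinement rather than a different route.
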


\begin{proof}
Set $d=P_0-P_T$. The identity
\[
\|\Pi_B(P_T+d)\|^2-\|\Pi_BP_T\|^2
=2\operatorname{Re}\langle\Pi_BP_T,\Pi_Bd\rangle+\|\Pi_Bd\|^2
\]
and \eqref{eq:residual-stability-bound} give pathwise
\[
\left|\|\Pi_BP_0\|^2-\|\Pi_BP_T\|^2\right|
\le(2c_s+c_s^2)\|P_T\|^2.
\]
Taking expectations proves \eqref{eq:regional-squared-energy-stability}; applying it to $E_R$ and $E_M$ proves the survival statement.
\end{proof}
Thus the residual mechanism preserves a right-heavy terminal component only when its squared-energy gap dominates a quantitatively matched interaction error.

The primacy and recency channels are therefore mathematically distinct. Primacy is generated by the depth-integrated nonlocal Volterra adjoint term
\begin{equation}\label{eq:primacy-channel-term}
P_{\rm cone}(q)
= \int_0^T\zeta(t)\int_q^1 K_t(p,q)^*P_t(p)\dd p\dd t,
\end{equation}
whose integration domain is largest for early $q$. Recency is transmitted by the residual identity term
\begin{equation}\label{eq:recency-channel-term}
P_{\rm res}(p)\sim P_T(p),
\end{equation}
whose size is controlled by the terminal readout and loss geometry. Hence one may define the boundary channel strengths
\begin{align}
\mathsf P_L(s,\delta)&:=\int_0^\delta \E\|P_{\rm cone}(p)\|^2\dd p,\label{eq:primacy-strength}\\
\mathsf P_R(s,\delta)&:=\int_{1-\delta}^1 \E\|P_{\rm res}(p)\|^2\dd p.\label{eq:recency-strength}
\end{align}
A model can have strong primacy with weak recency when $\mathsf P_L(s,\delta)$ dominates $\mathsf P_R(s,\delta)$, strong recency with weak primacy in the opposite regime, or both when the Volterra cone and the residual terminal channel are simultaneously large. Lost-in-the-Middle is most pronounced when both boundary strengths dominate the corresponding middle influence components.

\subsection{Lost-in-the-Middle as an exact channel-energy comparison}

Use Duhamel's formula and the local/nonlocal splitting of the adjoint generator to write
\begin{equation}\label{eq:adjoint-channel-vector-decomposition}
P_0=P_{\rm res}+P_{\rm cone}+P_{\rm loc},
\end{equation}
where $P_{\rm res}:=P_T$, while $P_{\rm cone}$ and $P_{\rm loc}$ are the time integrals of the nonlocal Volterra and local generator terms. This is a decomposition by \emph{generator terms}, not an evolution of three independent signals: both integral components are evaluated using the full adjoint $P_t$.

For a region $B$ of positive length define
\begin{align}
J_\alpha(B)&:=\frac1{|B|}\E\|\Pi_BP_\alpha\|_{\mathcal X}^2,
\qquad \alpha\in\{\mathrm{res},\mathrm{cone},\mathrm{loc}\},\label{eq:regional-channel-energies}\\
\Xi(B)&:=\frac{2}{|B|}\sum_{\alpha<\beta}
\operatorname{Re}\E\langle\Pi_BP_\alpha,\Pi_BP_\beta\rangle_{\mathcal X}.
\label{eq:regional-channel-cross-terms}
\end{align}
Then
\begin{equation}\label{eq:exact-regional-influence-decomposition}
\mathcal A_s(B):=\frac1{|B|}\int_B I_{\vartheta_s}(p)\dd p
=J_{\rm res}(B)+J_{\rm cone}(B)+J_{\rm loc}(B)+\Xi(B).
\end{equation}

\begin{corollary}[Exact regional characterization]\label{cor:exact-double-deficit}
For $E_L=[0,\delta)$, $E_M=[\delta,1-\delta)$, and $E_R=[1-\delta,1]$, Lost-in-the-Middle holds if and only if
\begin{align}
\sum_\alpha[J_\alpha(E_L)-J_\alpha(E_M)]&>\Xi(E_M)-\Xi(E_L),\label{eq:exact-left-middle-channel-condition}\\
\sum_\alpha[J_\alpha(E_R)-J_\alpha(E_M)]&>\Xi(E_M)-\Xi(E_R).\label{eq:exact-right-middle-channel-condition}
\end{align}
This is an exact algebraic characterization, not an independently verifiable mechanism theorem.
\end{corollary}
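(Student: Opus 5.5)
The corollary follows by direct algebra from the exact regional decomposition \eqref{eq:exact-regional-influence-decomposition}. First I would justify that decomposition. By \eqref{eq:continuous-adjoint-duhamel}, the input adjoint is $P_0=P_T+\int_0^T\zeta(t)D\mathscr F_{\theta_t}(X_t)^*P_t\dd t$. Inserting the Volterra--local form \eqref{eq:volterra-adjoint} of the generator splits the integral into the cone part \eqref{eq:depth-integrated-cone-channel} and a local part $P_{\rm loc}(q)=\int_0^T\zeta(t)\mathcal B_t(q)^*P_t(q)\dd t$. This gives \eqref{eq:adjoint-channel-vector-decomposition} pathwise in $\mathcal X$, provided both Bochner integrals exist; that follows from the bounded operator norm of the generator on the bounded trajectory set. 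I would then expand
\[
\|\Pi_BP_0\|^2=\sum_\alpha\|\Pi_BP_\alpha\|^2+2\sum_{\alpha<\beta}\operatorname{Re}\langle\Pi_BP_\alpha,\Pi_BP_\beta\rangle .
\]
Taking expectations and dividing by $|B|$ yields \eqref{eq:exact-regional-influence-decomposition}, because $\frac1{|B|}\int_BI_{\vartheta_s}=\frac1{|B|}\E\|\Pi_BP_0\|^2$ by Definition \ref{def:influence}. Each term is finite: $\E\|P_\alpha\|^2<\infty$ by the moment bounds used in Proposition \ref{prop:influence-differentiability}, and the cross terms are then finite by Cauchy--Schwarz.

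Next I would connect the regional influence averages to the normalized densities. Since $m_s^{\rm GD}=I_{\vartheta_s}/Z_s$ with $Z_s>0$, we have $\overline m_B^{\rm GD}=\mathcal A_s(B)/Z_s$ for each of $B=E_L,E_M,E_R$. This uses $|E_L|=|E_R|=\delta$ and $|E_M|=1-2\delta$, matching \eqref{eq:gd-left-average}--\eqref{eq:gd-right-average}. Dividing by the positive constant $Z_s$ preserves strict inequalities. Hence primacy, $\overline m_L^{\rm GD}>\overline m_M^{\rm GD}$, holds if and only if $\mathcal A_s(E_L)>\mathcal A_s(E_M)$, and recency holds if and only if $\mathcal A_s(E_R)>\mathcal A_s(E_M)$.

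Finally, I would substitute the decomposition into these two inequalities. The condition $\mathcal A_s(E_L)>\mathcal A_s(E_M)$ becomes $\sum_\alpha J_\alpha(E_L)+\Xi(E_L)>\sum_\alpha J_\alpha(E_M)+\Xi(E_M)$. Moving the $\Xi$ terms to one side gives \eqref{eq:exact-left-middle-channel-condition}, and the right-side case gives \eqref{eq:exact-right-middle-channel-condition} in the same way. Lost-in-the-Middle is defined as the conjunction of primacy and recency, so it is equivalent to the conjunction of the two displayed inequalities.

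The only genuine obstacle is the measure-theoretic bookkeeping in the first step. One must check that the Duhamel splitting defines $P_{\rm cone}$ and $P_{\rm loc}$ as jointly measurable, square-integrable random fields, so that the expectations and cross terms are finite and the expansion is legitimate. I would handle this with the uniform bounds on the generator on bounded state sets (Proposition \ref{prop:transformer-lipschitz}) and the second-moment bounds on $P_t$ from backward Gronwall. After that, everything is finite-dimensional algebra of real numbers.
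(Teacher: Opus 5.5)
Your proposal is correct and follows the paper's route. The paper states the corollary without a separate proof, but it is obtained exactly this way: substitute the decomposition \eqref{eq:exact-regional-influence-decomposition} into $\mathcal A_s(E_L)>\mathcal A_s(E_M)$ and $\mathcal A_s(E_R)>\mathcal A_s(E_M)$, use that each regional density average equals $\mathcal A_s(B)/Z_s$ with $Z_s>0$ (the same normalizer step as in the proof of Theorem \ref{thm:channel-energy-double-deficit}), and rearrange.

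Your extra justification of the decomposition itself, via the Duhamel splitting and finiteness of the channel second moments, goes beyond what the paper writes. The finiteness only requires $\E\|P_\alpha\|_{\mathcal X}^2<\infty$ for each channel, not the full hypotheses of Proposition \ref{prop:influence-differentiability}.
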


For estimable sufficient conditions, suppose that for each region $B$ and pair $\alpha<\beta$ one has a correlation bound
\begin{equation}\label{eq:channel-correlation-bound}
\left|\frac1{|B|}\operatorname{Re}\E\langle\Pi_BP_\alpha,\Pi_BP_\beta\rangle\right|
\le \rho_{\alpha\beta,B}\sqrt{J_\alpha(B)J_\beta(B)},
\qquad 0\le\rho_{\alpha\beta,B}\le1.
\end{equation}
Cauchy--Schwarz always permits $\rho_{\alpha\beta,B}=1$; smaller empirical constants record weak channel alignment.

\begin{theorem}[Independently checkable channel-energy criterion]\label{thm:channel-energy-double-deficit}
Assume nonnegative upper bounds
\begin{equation}\label{eq:channel-upper-bounds}
J_\alpha(B)\le u_{\alpha,B},
\qquad \alpha\in\{\mathrm{res},\mathrm{cone},\mathrm{loc}\},\quad B\in\{E_L,E_M,E_R\},
\end{equation}
and lower bounds
\begin{equation}\label{eq:boundary-channel-lower-bounds}
J_{\rm cone}(E_L)\ge \ell_L,
\qquad
J_{\rm res}(E_R)\ge \ell_R.
\end{equation}
Let
\begin{equation}\label{eq:channel-cross-bound-gamma}
\Gamma_B:=2\sum_{\alpha<\beta}
\rho_{\alpha\beta,B}\sqrt{u_{\alpha,B}u_{\beta,B}},
\qquad
U_M:=u_{\rm res,E_M}+u_{\rm cone,E_M}+u_{\rm loc,E_M}.
\end{equation}
The quantities $u_{\rm cone,E_M}$ and $u_{\rm res,E_M}$ are middle-channel upper bounds, while $u_{\rm loc,B}$ supplies an explicit local-channel smallness control. If
\begin{equation}\label{eq:checkable-double-deficit-conditions}
\ell_L>U_M+\Gamma_M+\Gamma_L,
\qquad
\ell_R>U_M+\Gamma_M+\Gamma_R,
\end{equation}
then
\[
\overline m_L^{\rm GD}>\overline m_M^{\rm GD},
\qquad
\overline m_R^{\rm GD}>\overline m_M^{\rm GD},
\]
and hence $\Delta_\delta>0$, $\mathsf{LIM}_\delta>0$, and $\mathsf{SC}_\delta>0$.
\end{theorem}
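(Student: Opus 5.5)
We argue directly from the exact regional decomposition \eqref{eq:exact-regional-influence-decomposition}. Since $m_s^{\rm GD}=I_{\vartheta_s}/Z_s$ with $Z_s>0$, each regional density satisfies $\overline m_B^{\rm GD}=\mathcal A_s(B)/Z_s$. The two desired inequalities are therefore equivalent to $\mathcal A_s(E_L)>\mathcal A_s(E_M)$ and $\mathcal A_s(E_R)>\mathcal A_s(E_M)$. By Corollary \ref{cor:exact-double-deficit}, this is the same as the exact conditions \eqref{eq:exact-left-middle-channel-condition}--\eqref{eq:exact-right-middle-channel-condition}. We do not verify those conditions directly. Instead we bound each $\mathcal A_s(B)$ from below on the boundary regions and from above on the middle, using only the channel bounds.

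\emph{Cross-term control.} For any region $B$, the correlation bound \eqref{eq:channel-correlation-bound} together with the upper bounds \eqref{eq:channel-upper-bounds} gives
\[
|\Xi(B)|\le 2\sum_{\alpha<\beta}\rho_{\alpha\beta,B}\sqrt{J_\alpha(B)J_\beta(B)}\le 2\sum_{\alpha<\beta}\rho_{\alpha\beta,B}\sqrt{u_{\alpha,B}u_{\beta,B}}=\Gamma_B.
\]
This uses monotonicity of the square root and $\rho_{\alpha\beta,B}\ge0$.

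\emph{Middle upper bound.} We obtain $\mathcal A_s(E_M)\le U_M+\Gamma_M$.

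\emph{Boundary lower bounds.} All $J_\alpha\ge0$, so on $E_L$ we drop $J_{\rm res}$ and $J_{\rm loc}$ and keep $J_{\rm cone}(E_L)\ge\ell_L$. This gives $\mathcal A_s(E_L)\ge \ell_L-\Gamma_L$. In the same way, keeping only the residual channel on $E_R$ gives $\mathcal A_s(E_R)\ge\ell_R-\Gamma_R$.

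\emph{Comparison.} Combining these with \eqref{eq:checkable-double-deficit-conditions} yields
\[
\mathcal A_s(E_L)-\mathcal A_s(E_M)\ge \ell_L-\Gamma_L-U_M-\Gamma_M>0,
\]
and the analogous strict inequality on the right. Dividing by $Z_s$ gives the two regional density inequalities. Hence $\Delta_\delta>0$. Because $\mathsf{LIM}_\delta$ and $\mathsf{SC}_\delta$ have the same numerator and positive denominators (the regional densities are nonnegative and $\varepsilon_0>0$), both are positive as well.

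\emph{Where care is needed.} The main point requiring attention is that the decomposition \eqref{eq:exact-regional-influence-decomposition} is genuinely exact. The identity $P_0=P_{\rm res}+P_{\rm cone}+P_{\rm loc}$ follows from the Duhamel formula \eqref{eq:continuous-adjoint-duhamel} after splitting $D\mathscr F^*$ as in \eqref{eq:volterra-adjoint}. Expanding $\|\Pi_B\sum_\alpha P_\alpha\|^2$ and taking expectations then requires every $J_\alpha(B)$ to be finite so that each expectation splits. Finiteness follows from the assumed finite upper bounds $u_{\alpha,B}$, and it also makes the cross terms integrable by Cauchy--Schwarz. Everything after that is elementary algebra.
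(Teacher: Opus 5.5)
Your proposal is correct and follows essentially the same route as the paper's proof: bound $|\Xi(B)|\le\Gamma_B$ via the correlation and upper bounds, deduce $\mathcal A_s(E_L)\ge\ell_L-\Gamma_L$, $\mathcal A_s(E_R)\ge\ell_R-\Gamma_R$, and $\mathcal A_s(E_M)\le U_M+\Gamma_M$, then divide by the positive normalizer $Z_s$. Your added remarks on integrability of the cross terms and on the positivity of the denominators of $\mathsf{LIM}_\delta$ and $\mathsf{SC}_\delta$ are harmless details that the paper leaves implicit.
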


\begin{proof}
Equation~\eqref{eq:channel-correlation-bound} and the upper bounds imply $|\Xi(B)|\le\Gamma_B$. Therefore
\[
\mathcal A_s(E_L)\ge \ell_L-\Gamma_L,
\qquad
\mathcal A_s(E_R)\ge \ell_R-\Gamma_R,
\]
whereas
\[
\mathcal A_s(E_M)\le U_M+\Gamma_M.
\]
The two inequalities in \eqref{eq:checkable-double-deficit-conditions} give strict left--middle and right--middle separation. Division by the common positive normalizer $Z_s$ preserves these inequalities.
\end{proof}

The constants in Theorem~\ref{thm:channel-energy-double-deficit} can be estimated from finite-token channel vectors or bounded analytically. The criterion can be conservative when only the universal choice $\rho=1$ is available, but unlike Corollary~\ref{cor:exact-double-deficit}, it does not assume the desired regional differences themselves.

\subsection{Explicit counterexamples to unconditional shape claims}\label{subsec:counterexamples}
The following finite-dimensional examples isolate why each structural slogan needs additional hypotheses.
\begin{enumerate}[label=(\roman*),leftmargin=2em]
\item \textbf{Causal masking without primacy.}  Take a causally masked attention block with $W_V=0$ and a tokenwise identity residual path.  The causal mask is present, but $D\mathrm{Attn}=0$.  For a spatially uniform terminal adjoint, $P_0=P_T$ is uniform, so no left advantage occurs.
\item \textbf{Residual connections without recency.}  Take the identity residual network $X_{k+1}=X_k$ with a uniformly distributed teacher-forced loss.  Then $P_0=P_T$ and the influence profile is uniform.  The residual identity transmits terminal geometry but does not create a right bias.
\item \textbf{Equal observability without equal influence.}  With two positions, identity dynamics, and identical observation maps, $G_1=G_2=I$.  If the task covectors are $P_T(1)=2e_1$ and $P_T(2)=e_1$, then the influence energies are $4$ and $1$ despite exactly equal observability Gramians.
\item \textbf{Negative cross terms can destroy a U-shape.}  On three representative regions let
$P_{\rm res}=(1,\tfrac12,1)$ and
$P_{\rm cone}=(-1,\tfrac12,-1)$ in a scalar feature space.  Each channel separately has energies $(1,\tfrac14,1)$, but their sum has energy $(0,1,0)$.  Boundary anti-correlation reverses the channel-energy conclusion, demonstrating why the cross terms in \eqref{eq:exact-regional-influence-decomposition} cannot be discarded.
\end{enumerate}

\subsection{Gradient-flow remedies}\label{sec:remedies}
The deterministic formulation suggests remedies that act directly on the influence density rather than indirectly on retrieval scores. Let $\nu$ be a target positional density selected from the intended deployment query distribution, task semantics, or a constrained robustness objective.  The uniform choice $\nu\equiv1$ is a neutral diagnostic default. For the continuum squared-density penalty, assume \(\nu\in L^2(0,1)\), the $L^4$ adjoint-sensitivity hypothesis \eqref{eq:adjoint-l4-sensitivity-assumption}, and the nondegeneracy condition \eqref{eq:influence-normalization-away-from-zero}; Proposition \ref{prop:influence-l2-differentiability} then makes the following objective finite and differentiable.  Without these extra hypotheses, the squared penalty is asserted only at finite token length (or after a stated spatial smoothing). A direct penalty is
\begin{equation}\label{eq:gd-target-penalty}
R_{\rm bal}(\vartheta)
=
R(\vartheta)
+
\frac{\beta}{2}
\int_0^1
\left(
\frac{I_\vartheta(p)}{\int_0^1 I_\vartheta(q)\dd q}
-\nu(p)
\right)^2
\dd p .
\end{equation}
A finite-token implementation is the following influence-balancing procedure.
\smallskip
\noindent\textbf{Algorithm 1: target influence-density balancing.}
\begin{enumerate}[leftmargin=2em,label=\arabic*.]
\item Choose a target density \(\nu_i\approx\nu(p_i)\), a regularization strength \(\beta>0\), and a batch \(\mathcal B\).
\item For each \((X_0,Y)\in\mathcal B\), run the forward residual Transformer and compute the usual task loss \(\mathcal L_{\mathcal B}(\vartheta)\).
\item Backpropagate the terminal adjoint through depth and record the tokenwise input-adjoint energies
\[
E_b(p_i):=\|P_{0,b}^{\vartheta}(p_i)\|^2,
\qquad (X_0,Y)=b\in\mathcal B .
\]
\item Estimate and normalize the positional influence by
\[
\widehat I_\vartheta(p_i):=\frac1{|\mathcal B|}\sum_{b\in\mathcal B}E_b(p_i),
\qquad
\widehat m_\vartheta(p_i):=
\frac{\widehat I_\vartheta(p_i)}{\sum_j\Delta p_j\widehat I_\vartheta(p_j)+\epsilon_{\rm inf}},
\qquad \epsilon_{\rm inf}>0.
\]
\item Minimize the empirical balanced objective
\begin{equation}\label{eq:discrete-target-influence-penalty}
\widehat R_{\rm bal}(\vartheta)
:=\mathcal L_{\mathcal B}(\vartheta)
+\frac{\beta}{2}\sum_i\Delta p_i
\bigl(\widehat m_\vartheta(p_i)-\nu_i\bigr)^2 .
\end{equation}
\end{enumerate}
Differentiating the penalty through $\widehat m_\vartheta$ requires derivatives of an input gradient with respect to model parameters, hence second-order automatic differentiation or Hessian--vector products. If the gradient through $\widehat m_\vartheta$ is stopped, the penalty contributes no gradient to the same optimization objective. A stop-gradient version must therefore be implemented as an outer loop: first measure $\widehat m$, then update separate positional weights as in Algorithm 2, and only then optimize the task loss. The stabilizer $\epsilon_{\rm inf}$ prevents division by nearly vanishing total adjoint energy.

\begin{remark}
    The additional term in \eqref{eq:gd-target-penalty} generally introduces an intentional \emph{regularization bias} relative to the minimizer of the original task risk.  To make this precise, define
\begin{equation}\label{eq:influence-discrepancy-functional}
\Phi_\nu(\vartheta)
:=\frac12\int_0^1\bigl(m_\vartheta(p)-\nu(p)\bigr)^2\dd p,
\qquad
m_\vartheta(p):=\frac{I_\vartheta(p)}{\int_0^1I_\vartheta(q)\dd q},
\end{equation}
so that \(R_{\rm bal}=R+\beta\Phi_\nu\).  Let \(\vartheta_0\) be an isolated local minimizer of \(R\), assume that
\(H_0:=\nabla^2R(\vartheta_0)\) is nonsingular, and let \(\vartheta_\beta\) denote the nearby stationary point of \(R_{\rm bal}\).  The implicit-function theorem gives
\begin{equation}\label{eq:influence-regularization-bias-expansion}
\vartheta_\beta-\vartheta_0
=-\beta H_0^{-1}\nabla\Phi_\nu(\vartheta_0)
+O(\beta^2).
\end{equation}
Hence the balanced objective changes the fitted parameter at first order unless
\(\nabla\Phi_\nu(\vartheta_0)=0\), which occurs, for example, when the original optimum already matches the target influence profile locally.  Under local \(\mu\)-strong convexity of \(R\) and a bound \(\|\nabla\Phi_\nu\|\le G\), one obtains the stability estimate
\begin{equation}\label{eq:influence-regularization-parameter-shift-bound}
\|\vartheta_\beta-\vartheta_0\|
\le \frac{\beta G}{\mu},
\end{equation}
showing explicitly how the bias is controlled by \(\beta\).

This is objective bias, not bias of the population gradient: exact differentiation of \eqref{eq:gd-target-penalty} gives the true gradient of the modified risk.  A separate finite-batch estimation bias can arise in \eqref{eq:discrete-target-influence-penalty}, because normalization is nonlinear and generally
\[
\E\!\left[
\frac{\widehat I_\vartheta(p_i)}
{\sum_j\Delta p_j\widehat I_\vartheta(p_j)+\epsilon_{\rm inf}}
\right]
\ne
\frac{I_\vartheta(p_i)}
{\sum_j\Delta p_j I_\vartheta(p_j)+\epsilon_{\rm inf}}.
\]
Large batches, moving-average estimates, and an independently estimated denominator reduce this ratio-estimation effect.  In applications, \(\beta\) should therefore be selected by a task-performance-versus-balance tradeoff, annealed toward zero when asymptotic task-risk consistency is required, or replaced by a constrained formulation \(\Phi_\nu(\vartheta)\le\tau\) with a tolerable imbalance level \(\tau\).
\end{remark}

A mask-aware remedy is to compensate the causal-cone asymmetry by a position weight \(w(p)>0\) in the loss:
\begin{equation}\label{eq:weighted-cross-entropy}
\mathcal\ell_w(Y,H(X))
=
-\int_0^1 w(p)Y(p)^\top\log\operatorname{softmax}(H(X)(p))\dd p .
\end{equation}
The idealized choice is to increase \(w\) on positions whose influence density is too small. A practical finite-token update can be written as follows.
\smallskip
\noindent\textbf{Algorithm 2: mask-aware positional reweighting.}
\begin{enumerate}[leftmargin=2em,label=\arabic*.]
\item Fix a target density \(\nu_i\) and initialize positive weights \(w_i^{(0)}>0\), for example \(w_i^{(0)}=1\).
\item At training epoch \(n\), estimate \(\widehat m^{(n)}(p_i)\) using the adjoint-energy procedure of Algorithm 1.
\item Update the weights by a damped multiplicative rule
\begin{equation}\label{eq:mask-aware-weight-update}
\widetilde w_i^{(n+1)}
=\operatorname{clip}_{[w_{\min},w_{\max}]}
\left(
 w_i^{(n)}\exp\{\eta_w(\nu_i-\widehat m^{(n)}(p_i))\}
\right),
\qquad
w_i^{(n+1)}=
\frac{\widetilde w_i^{(n+1)}}{\sum_j\Delta p_j\widetilde w_j^{(n+1)}}.
\end{equation}
\item Train the next epoch using the discrete weighted loss
\begin{equation}\label{eq:discrete-weighted-cross-entropy}
\widehat{\mathcal\ell}_w
=-\sum_i\Delta p_i\,w_i^{(n+1)}Y_i^\top
\log\operatorname{softmax}(H(X)_i).
\end{equation}
\end{enumerate}
The update \eqref{eq:mask-aware-weight-update} increases the loss weight where the measured influence \(\widehat m^{(n)}\) is below the target and decreases it where influence is excessive. The renormalization in \eqref{eq:mask-aware-weight-update} fixes the total positional mass of the loss.
The update is an outer-loop response rule rather than a gradient step on \(\mathsf{LIM}_\delta\).  To express this distinction, let \(\vartheta^+(w)\) denote the parameter obtained after the next training stage using weights \(w\), and define the induced influence-response map
\begin{equation}\label{eq:influence-response-map}
\mathcal M(w):=m_{\vartheta^+(w)}^{\rm GD},
\qquad
\mathcal L_\delta(w):=\mathsf{LIM}_\delta(\vartheta^+(w)).
\end{equation}
Ignoring clipping for a first-order calculation and writing
\(d_i^{(n)}:=\nu_i-\widehat m^{(n)}(p_i)\), normalization of the multiplicative step gives
\begin{equation}\label{eq:normalized-multiplicative-first-order-step}
w_i^{(n+1)}-w_i^{(n)}
=\eta_w w_i^{(n)}
\left(
 d_i^{(n)}-
 \sum_j\Delta p_j w_j^{(n)}d_j^{(n)}
\right)
+O(\eta_w^2).
\end{equation}
Even though this direction increases relative weight where the current influence is below target, the next influence is \(\mathcal M(w^{(n+1)})\), not the current \(\widehat m^{(n)}\).  Consequently,
\begin{equation}\label{eq:lim-outer-loop-first-order-change}
\mathcal L_\delta(w^{(n+1)})-\mathcal L_\delta(w^{(n)})
=\nabla_w\mathcal L_\delta(w^{(n)})^\top
\bigl(w^{(n+1)}-w^{(n)}\bigr)
+O\!\left(\|w^{(n+1)}-w^{(n)}\|^2\right),
\end{equation}
and the sign of the leading term is not determined by \eqref{eq:mask-aware-weight-update}.  A monotone decrease would require an additional response-alignment condition, for example
\begin{equation}\label{eq:lim-response-alignment-condition}
\nabla_w\mathcal L_\delta(w^{(n)})^\top
\bigl(w^{(n+1)}-w^{(n)}\bigr)
\le -c\|w^{(n+1)}-w^{(n)}\|^2
\end{equation}
for some \(c>0\), or a suitable contraction property of \(\mathcal M\).  Neither convexity of the original task risk nor such an alignment property is assumed here.  Therefore Algorithm 2 is a diagnostic-driven heuristic: clipping, damping, and validation can stabilize it, but no monotone decrease of \(\mathsf{LIM}_\delta\) is claimed.

\begin{remark}
The weighting in \eqref{eq:weighted-cross-entropy} changes the positional measure under which prediction error is minimized.  If
\begin{equation}\label{eq:positionwise-risk-density}
r_\vartheta(p)
:=\E\!\left[-Y(p)^\top
\log\operatorname{softmax}(H_\vartheta(X)(p))\right],
\end{equation}
then the unweighted and weighted population risks are
\begin{equation}\label{eq:weighted-risk-change-of-measure}
R(\vartheta)=\int_0^1r_\vartheta(p)\dd p,
\qquad
R_w(\vartheta)=\int_0^1w(p)r_\vartheta(p)\dd p.
\end{equation}
Unless \(w\) is constant, the two objectives need not have the same minimizer when parameters are shared across positions or the model class is misspecified.  Thus positional reweighting generally introduces an intentional bias toward the positions receiving larger weight.  This effect is absent in the idealized unrestricted, well-specified setting in which the model can realize the Bayes conditional distribution separately at every position: since cross-entropy is a strictly proper scoring rule, multiplication by any positive \(w(p)\) leaves the pointwise Bayes predictor unchanged and changes only the relative statistical efficiency across positions.

The weighting is not biased relative to a desired deployment risk when it is used as importance weighting.  If training positions are distributed according to a density \(\pi(p)>0\) and the target deployment density is \(\tau(p)\), choosing
\begin{equation}\label{eq:positional-importance-weight}
w(p)=\frac{\tau(p)}{\pi(p)}
\end{equation}
makes \(R_w\) equal to the target-position risk, provided the usual support and integrability conditions hold.  For the original uniform-position risk, however, any nonconstant adaptive weight changes the estimand.  Clipping in \eqref{eq:mask-aware-weight-update} controls variance and extreme optimization pressure, while normalization controls scale but does not remove this change of estimand.  Accordingly, the weighted objective should be evaluated against the original unweighted validation risk as well as the positional-balance diagnostic.
\end{remark}

A residual-aware diagnostic can be formulated through a finite-token observability Gramian.  Stack the hidden states at layer \(k\) as a vector in \(\R^{Ld_x}\), and let
\begin{equation}\label{eq:finite-token-position-injection}
E_i:\R^{d_x}\longrightarrow\R^{Ld_x}
\end{equation}
be the canonical injection that places a feature perturbation in token block \(i\) and sets all other token blocks to zero.  If \(\Phi_{k\leftarrow0}:=D_{X_0}X_k\) denotes the tangent propagator of the finite residual network from the input to layer \(k\), define
\begin{equation}\label{eq:finite-token-position-jacobian}
J_{k\leftarrow0,i}:=\Phi_{k\leftarrow0}E_i
\in\mathcal L(\R^{d_x},\R^{Ld_x}).
\end{equation}
Thus \(J_{k\leftarrow0,i}\xi\) is the full layer-\(k\) hidden-state variation generated by an input perturbation \(\xi\) applied only at token \(i\).

Let \(C_k:\R^{Ld_x}\to\R^{r_k}\) be a chosen linear observation map.  It may, for example, select a collection of token coordinates, linearize an intermediate readout, or retain only the final-token representation.
The observation protocol must be fixed before positions are compared. A task-aligned choice is $C_k=DH_k(X_k)$ for an auxiliary readout $H_k$, or a fixed final-query selector when the evaluation task reads only the last token. Different $C_k$ produce different Gramians; observability equalization is therefore relative to the selected task output and is not an intrinsic model property. A non-task-aligned coordinate selector remains a legitimate structural diagnostic, but it should not be interpreted as equalization of predictive influence.
The positionwise finite-horizon observability Gramian is
\begin{equation}\label{eq:finite-token-observability-gramian}
G_i:=\sum_{k=0}^{M-1}\varepsilon\,
J_{k\leftarrow0,i}^*C_k^*C_kJ_{k\leftarrow0,i}
\in\R^{d_x\times d_x}.
\end{equation}
It is symmetric positive semidefinite and satisfies, for every \(\xi\in\R^{d_x}\),
\begin{equation}\label{eq:finite-token-observability-quadratic-form}
\xi^*G_i\xi
=\sum_{k=0}^{M-1}\varepsilon\,
\|C_kJ_{k\leftarrow0,i}\xi\|^2.
\end{equation}
Hence \(G_i\) measures how strongly perturbations inserted at token \(i\) remain visible through the selected observations over depth.  Equalizing the entire matrices leads to
\begin{equation}\label{eq:gd-observability-penalty}
\Omega_{\rm obs}
:=\sum_{i=1}^{L}\Delta p_i\|G_i-\overline G\|_{\rm HS}^2,
\qquad
\overline G:=\sum_{i=1}^{L}\Delta p_iG_i,
\qquad \Delta p_i=\frac1L.
\end{equation}
The matrix penalty controls directional observability, since
\(\|G_i-\overline G\|_{\rm HS}^2\) compares all eigen-directions, not only total energy.  Forming every \(G_i\), however, generally requires \(d_x\) tangent propagations per position and is therefore practical only for small hidden dimension or restricted token subsets.

For large models, use randomized trace estimation.  Let \(\xi_1,\ldots,\xi_{N_{\rm probe}}\) be independent isotropic probes satisfying
\begin{equation}\label{eq:isotropic-observability-probes}
\E\xi_r=0,
\qquad
\E[\xi_r\xi_r^*]=I_{d_x};
\end{equation}
standard Gaussian or Rademacher vectors are admissible.  Define
\begin{equation}\label{eq:discrete-observability-trace-estimator}
\widehat g_i
:=\frac1{N_{\rm probe}}\sum_{r=1}^{N_{\rm probe}}
\sum_{k=0}^{M-1}\varepsilon\,
\|C_kJ_{k\leftarrow0,i}\xi_r\|^2.
\end{equation}
Using \eqref{eq:finite-token-observability-quadratic-form} and the trace identity
\(\E[\xi_r^*G_i\xi_r]=\operatorname{tr}(G_i\E[\xi_r\xi_r^*])\), one obtains
\begin{equation}\label{eq:observability-trace-unbiasedness}
\E\widehat g_i=\operatorname{tr}G_i.
\end{equation}
Thus \(g_i:=\operatorname{tr}G_i\) measures the total observability of an isotropic perturbation at token \(i\).  A scalar trace-balancing target is
\begin{equation}\label{eq:true-trace-observability-penalty}
\Omega_{\rm obs}^{\rm tr}
:=\sum_{i=1}^{L}\Delta p_i(g_i-\overline g_{\rm true})^2,
\qquad
\overline g_{\rm true}:=\sum_{i=1}^{L}\Delta p_i g_i,
\end{equation}
and its computational surrogate is
\begin{equation}\label{eq:discrete-observability-penalty}
\widehat\Omega_{\rm obs}^{\rm tr}
:=\sum_{i=1}^{L}\Delta p_i(\widehat g_i-\overline g)^2,
\qquad
\overline g:=\sum_{i=1}^{L}\Delta p_i\widehat g_i.
\end{equation}
Although each \(\widehat g_i\) is unbiased, the squared empirical penalty is generally upward biased because of probe variance.  This does not prevent its use as a stochastic objective; using common probes across positions, increasing \(N_{\rm probe}\), or averaging estimates across iterations reduces its variance.  The trace penalty equalizes total local observability but cannot detect anisotropy between feature directions; the Hilbert--Schmidt penalty \eqref{eq:gd-observability-penalty} should therefore be used only when matrix-valued estimates are actually available.

\smallskip
\noindent\textbf{Algorithm 3: randomized observability balancing.}
\begin{enumerate}[leftmargin=2em,label=\arabic*.]
\item Choose observation maps \(C_k\), a set of monitored layers \(\mathcal K\subset\{0,\ldots,M-1\}\), monitored token positions \(\mathcal S\subset\{1,\ldots,L\}\), a probe count \(N_{\rm probe}\), and a regularization strength \(\lambda_{\rm obs}\ge0\).
\item Run the usual forward pass and retain the checkpoints needed to evaluate Jacobian--vector products through the residual layers.
\item Draw isotropic probes \(\xi_r\).  For each \(i\in\mathcal S\) and each probe, inject \(E_i\xi_r\) at the input and propagate it with forward-mode automatic differentiation to compute \(J_{k\leftarrow0,i}\xi_r\) for \(k\in\mathcal K\).
\item Accumulate
\[
\widehat g_i
=\frac1{N_{\rm probe}}\sum_{r=1}^{N_{\rm probe}}
\sum_{k\in\mathcal K}\varepsilon_k
\|C_kJ_{k\leftarrow0,i}\xi_r\|^2,
\]
form the weighted mean \(\overline g\), and evaluate \(\widehat\Omega_{\rm obs}^{\rm tr}\) from \eqref{eq:discrete-observability-penalty}.  If only a diagnostic is required, report the profile \(i\mapsto\widehat g_i\) and stop here.
\item For regularized training, minimize
\begin{equation}\label{eq:observability-balanced-training-objective}
\widehat R_{\rm obs}(\vartheta)
:=\mathcal L_{\mathcal B}(\vartheta)
+\frac{\lambda_{\rm obs}}2\widehat\Omega_{\rm obs}^{\rm tr}(\vartheta).
\end{equation}
Since \(J_{k\leftarrow0,i}\) itself depends on \(\vartheta\), differentiating the observability term requires differentiating through Jacobian--vector products, i.e. mixed second-order derivatives.  A stop-gradient implementation remains a diagnostic and does not regularize the current parameter update.
\item To control cost, subsample \(\mathcal S\) and \(\mathcal K\), reuse common probes across positions, and update the observability estimate less frequently than the task-loss gradient.  Validate both the original task risk and the positional influence profile, since equal observability is a structural surrogate rather than a guaranteed decrease of \(\mathsf{LIM}_\delta\).
\end{enumerate}

\begin{remark}[Bias induced by Algorithm 3]\label{rem:algorithm3-bias}
Algorithm 3 can introduce two distinct notions of bias, and they should not be conflated.  The first is \emph{objective bias}.  If the randomized observability term is used only as a diagnostic, the fitted parameter is unchanged.  If it is included in the training objective \eqref{eq:observability-balanced-training-objective}, however, the population target becomes
\[
R_{\lambda_{\rm obs}}(\vartheta)
:=R(\vartheta)+\frac{\lambda_{\rm obs}}2
\Omega_{\rm obs}^{\rm tr}(\vartheta),
\]
which generally has a different minimizer from the original task risk.  More precisely, suppose that \(\vartheta_0\) is an isolated minimizer of \(R\), that
\(H_0:=\nabla^2R(\vartheta_0)\) is positive definite, and that
\(\Omega_{\rm obs}^{\rm tr}\) is twice continuously differentiable near
\(\vartheta_0\).  The implicit-function theorem then gives, for sufficiently small
\(\lambda_{\rm obs}\),
\begin{equation}\label{eq:algorithm3-objective-bias-expansion}
\vartheta_{\lambda_{\rm obs}}-\vartheta_0
=-\frac{\lambda_{\rm obs}}2
H_0^{-1}\nabla\Omega_{\rm obs}^{\rm tr}(\vartheta_0)
+O(\lambda_{\rm obs}^2).
\end{equation}
Thus the leading displacement vanishes only when the unregularized solution is already stationary for the observability penalty, or when
\(\lambda_{\rm obs}=0\).  This is an intentional structural bias, analogous to the regularization bias discussed for Algorithm~1.

The second issue is \emph{estimation bias}.  Let
\(d_i:=\Delta p_i\), \(D:=\operatorname{diag}(d_1,\ldots,d_L)\),
\(d=(d_1,\ldots,d_L)^\top\), and
\(Q:=D-dd^\top\succeq0\).  Writing
\(g=(g_1,\ldots,g_L)^\top\) and
\(\widehat g=(\widehat g_1,\ldots,\widehat g_L)^\top\), the true and empirical trace penalties are
\[
\Omega_{\rm obs}^{\rm tr}=g^\top Qg,
\qquad
\widehat\Omega_{\rm obs}^{\rm tr}=\widehat g^\top Q\widehat g.
\]
If \(\E\widehat g=g\) and
\(\Sigma:=\operatorname{Cov}(\widehat g)\), then
\begin{equation}\label{eq:algorithm3-penalty-estimator-bias}
\E\widehat\Omega_{\rm obs}^{\rm tr}
=\Omega_{\rm obs}^{\rm tr}+\operatorname{tr}(Q\Sigma)
\ge \Omega_{\rm obs}^{\rm tr}.
\end{equation}
Hence unbiased trace estimates do not produce an unbiased squared balancing penalty.  Common probes across positions can reduce
\(\operatorname{tr}(Q\Sigma)\), but do not in general remove it.  An unbiased cross-fitted alternative uses two independent probe batches:
\begin{equation}\label{eq:algorithm3-crossfit-unbiased-penalty}
\widetilde\Omega_{\rm obs}^{\rm tr}
:=(\widehat g^{(a)})^\top Q\widehat g^{(b)},
\qquad
\E\widetilde\Omega_{\rm obs}^{\rm tr}
=\Omega_{\rm obs}^{\rm tr}.
\end{equation}
The cross-fitted estimator need not be nonnegative on an individual batch and can have substantially larger gradient variance because two independent noisy profiles are multiplied. It is therefore intended primarily as an unbiased diagnostic or theoretical comparison. For routine training, the nonnegative biased estimator \eqref{eq:discrete-observability-penalty}, with common probes and temporal averaging, is usually the safer objective; using the cross-fitted form as a training loss requires explicit variance control and validation.

Subsampling monitored positions or layers creates a further approximation bias unless the sampled terms are corrected by their inclusion probabilities.  Inverse-probability weighting preserves the desired trace sum when the sampling probabilities are known and positive; uncorrected subsampling instead optimizes the observability of the monitored subset.  Finally, stopping the gradient through the Jacobian--vector products does not create a biased gradient of the same objective; it removes the observability contribution altogether and leaves Algorithm~3 as a diagnostic.  Consequently, the regularized model should be evaluated using the original unweighted task risk, the true or high-probe observability profile, and the positional influence profile.
\end{remark}

\subsection{Computational complexity of Algorithms 1 and 3}\label{subsec:algorithm-complexity}
Let $L$ be sequence length, $M$ depth, $d_x$ hidden dimension, $S=|\mathcal S|$ monitored positions, $K=|\mathcal K|$ monitored layers, and $R=N_{\rm probe}$. Write $C_{\rm layer}(L,d_x)$ for the cost of one residual layer; for dense attention it is $O(L^2d_x+Ld_x^2)$ up to head constants.

Algorithm~1 obtains all positional input-adjoint energies from one ordinary reverse pass, so diagnostic evaluation costs $O(MC_{\rm layer})$ time and standard activation storage $O(MLd_x)$ without checkpointing. Training through the normalized influence penalty requires mixed second-order differentiation or Hessian--vector products. Its asymptotic order remains $O(MC_{\rm layer})$ with a larger constant and additional graph retention; checkpointing can reduce memory toward $O(Ld_x\log M)$ at the cost of recomputation. Importantly, computing the complete input-gradient profile does not require $L$ separate backward passes.

The forward-mode implementation of Algorithm~3 propagates $SR$ input tangent directions. Its naive time is
\[
O\!\left(SR\,M C_{\rm layer}(L,d_x)\right),
\]
with accumulation at the $K$ monitored layers costing an additional $O(SRKd_x)$ and with probes/positions batchable when memory permits. Exact matrix Gramians replace $R$ by $d_x$. Reverse mode is preferable when the total number of independent observation cotangents, approximately $R\sum_{k\in\mathcal K}r_k$, is smaller than the number $SR$ of input directions; forward mode is preferable in the opposite regime, especially for a small monitored position set. Subsampling positions or layers lowers cost in proportion to $S$ or $K$ but changes the estimand unless inclusion-probability correction is used.

In summary, the three proposed remedies are: a second-order influence penalty, an outer-loop position reweighting rule, and an observability diagnostic or regularizer. Their optimization effectiveness requires separate analysis and is not implied solely by the structural identities above.

\section{Controlled Finite-Token Simulations}\label{sec:simulations}

This section reports reproducible numerical experiments for the three remedies in
\Cref{sec:remedies}.  The experiments are deliberately small and mechanistic: they test whether each algorithm changes the mathematical quantity that it is designed to control.  They are not presented as a language-model benchmark or as evidence that the same hyperparameters transfer to a production Transformer.  In particular, Algorithm~2 and Algorithm~3 are evaluated together with the limitations already identified in
\eqref{eq:lim-response-alignment-condition} and Remark~\ref{rem:algorithm3-bias}.

\subsection{Simulation model and metrics}

We use \(L=48\) token positions, feature dimension \(d_x=4\), and
\(M=12\) residual steps with \(T=1\) and \(\varepsilon=T/M\).  Let
\begin{equation}\label{eq:simulation-causal-matrix}
C_{ij}:=\frac{\one_{\{j\le i\}}}{i},
\qquad 1\le i,j\le L,
\end{equation}
be the finite Ces\`aro causal-averaging matrix and let
\begin{equation}\label{eq:simulation-feature-mixing}
S=
\begin{pmatrix}
1 & 0.25 & 0 & 0\\
0.15 & 0.8 & 0.2 & 0\\
0 & 0.15 & 0.6 & 0.2\\
0 & 0 & 0.2 & 0.45
\end{pmatrix}.
\end{equation}
The linear residual propagator is
\begin{equation}\label{eq:simulation-residual-propagator}
B:=I_{Ld_x}+\varepsilon\alpha(C\otimes S),
\qquad \alpha=2,
\qquad \Phi:=B^M.
\end{equation}
Positive trainable positional gates are represented by
\begin{equation}\label{eq:simulation-input-gate}
D(u):=\operatorname{diag}(e^{u_1},\ldots,e^{u_L})\otimes I_{d_x}.
\end{equation}
They may be viewed as a simplified input-embedding or early residual scaling mechanism.  The terminal covector combines a final-token query with weak uniform supervision,
\begin{equation}\label{eq:simulation-terminal-covector}
r:=\left(\beta e_L+\frac{1-\beta}{L}\mathbf 1\right)\otimes v,
\qquad
\beta=0.85,
\qquad
v=(1,0.5,-0.3,0.2)^\top.
\end{equation}
The input adjoint and normalized positional influence masses are
\begin{equation}\label{eq:simulation-influence-profile}
p_0(u)=D(u)\Phi^*r,
\qquad
m_i(u)=
\frac{\|E_i^*p_0(u)\|^2}
{\sum_{j=1}^L\|E_j^*p_0(u)\|^2}.
\end{equation}
For Algorithm~2, the uniform part of the terminal positional covector is replaced by
\((1-\beta)L^{-1}w\), and the weights are updated by
\eqref{eq:mask-aware-weight-update}.

For Algorithm~3, every observation map selects the final token and retains all four features.  The exact finite-token Gramian is therefore
\begin{equation}\label{eq:simulation-observability-gramian}
G_i(u)=e^{2u_i}
\sum_{k=0}^{M-1}\varepsilon
E_i^*(B^k)^*C_{\rm out}^*C_{\rm out}B^kE_i,
\end{equation}
where \(C_{\rm out}\) extracts the final-token block.  We normalize its trace by
\begin{equation}\label{eq:simulation-normalized-observability}
q_i(u):=
\frac{\operatorname{tr}G_i(u)}
{\sum_j\operatorname{tr}G_j(u)}.
\end{equation}
The influence and observability imbalances are
\begin{equation}\label{eq:simulation-imbalance-metrics}
\mathcal E_{\rm inf}:=\frac1L\sum_{i=1}^L(Lm_i-1)^2,
\qquad
\mathcal E_{\rm obs}:=\frac1L\sum_{i=1}^L(Lq_i-1)^2.
\end{equation}
Regional averages use the discrete half-open sets corresponding to $E_L,E_M,E_R$. We set $\varepsilon_0=10^{-8}$ and report the discrete analogues of $\Delta_{0.2}$, $\mathsf{SC}_{0.2}$, and $\mathsf{LIM}_{0.2}$. All runs use NumPy and PyTorch in double precision with random seed $20260717$.

Since $G_i(u)=e^{2u_i}G_i(0)$, every eigenvalue is rescaled by the same positionwise factor and the condition number is exactly invariant:
\begin{equation}\label{eq:simulation-condition-number-invariance}
\kappa_0(G_i(u))=\kappa_0(G_i(0)).
\end{equation}
Thus this parameterization can balance traces while leaving directional anisotropy untouched.  For the stated $d_x=4$ matrices, the baseline regional means of the ordinary condition number are $171.1$, $36.7$, and $16.6$ on the left, middle, and right regions, respectively; the corresponding ranges are $[76.5,482.8]$, $[20.8,70.2]$, and $[1.03,20.2]$.  These values are unchanged after Algorithm~3.  This spectral diagnostic is more informative than the trace plot alone and confirms that the toy objective does not establish matrix observability balance.

\subsection{Optimization protocols}

Algorithm~1 minimizes
\begin{equation}\label{eq:simulation-algorithm1-objective}
J_1(u)=\frac{\gamma_1}{2L}\|u\|^2
+\frac{\lambda_1}{2}\mathcal E_{\rm inf}(u),
\qquad
\gamma_1=0.12,
\quad \lambda_1=15,
\end{equation}
for 400 Adam steps with learning rate \(0.04\).  The quadratic term is a task-fidelity proxy preventing unrestricted gate changes.  Algorithm~2 uses 160 outer-loop updates with
\(\eta_w=0.5\), clipping interval \([0.15,8]\), and unit-mean renormalization.  Algorithm~3 minimizes
\begin{equation}\label{eq:simulation-algorithm3-objective}
J_3(u)=\frac{\gamma_3}{2L}\|u\|^2
+\frac{\lambda_3}{2}\mathcal E_{\rm obs}(u),
\qquad
\gamma_3=1,
\quad \lambda_3=0.5,
\end{equation}
for 400 Adam steps with learning rate \(0.03\).  since \(d_x=4\) is small, this run uses the exact traces rather than stochastic probes; probe error is examined separately below.

\subsection{Running results}

The baseline exhibits both boundary channels: the left, middle, and right influence-density averages are respectively $3.2269$, $0.1417$, and $1.3479$.  The corresponding gap is $\Delta_{0.2}=1.2062$, the symmetric contrast is $\mathsf{SC}_{0.2}=0.8097$, and $\mathsf{LIM}_{0.2}=0.8949$ for $\varepsilon_0=10^{-8}$.  The table also reports the raw total adjoint energy $Z$, which is not determined by the normalized profile.

\begin{table}[htbp]
\centering
\caption{Algebraic finite-token results under the normalized discrete convention.  The raw energy is $Z=\sum_i\|E_i^*p_0\|^2$; the stabilizer is $\varepsilon_0=10^{-8}$.}
\label{tab:simulation-remedy-results}
\resizebox{\linewidth}{!}{%
\begin{tabular}{lrrrrrrrr}
\hline
Method & Left & Middle & Right & $\Delta_{0.2}$ & $\mathsf{SC}_{0.2}$ & $\mathsf{LIM}_{0.2}$ & $\mathcal E_{\rm inf}$ & $Z$\\
\hline
Baseline & 3.2269 & 0.1417 & 1.3479 & 1.2062 & 0.8097 & 0.8949 & 9.2503 & 4.1487\\
Algorithm 1 & 1.0163 & 0.9960 & 0.9958 & -0.0002 & -0.0001 & -0.0002 & 0.0002 & 9.7981\\
Algorithm 2 & 3.0303 & 0.1631 & 1.4804 & 1.3173 & 0.8015 & 0.8898 & 8.5348 & 3.8028\\
Algorithm 3 & 2.8192 & 0.6671 & 0.1794 & -0.4877 & -0.5761 & -2.7181 & 1.3952 & 1.7226\\
\hline
\end{tabular}}
\end{table}

Algorithm~1 nearly reaches the uniform target: $\mathcal E_{\rm inf}$ falls from $9.2503$ to $0.0002$, and the three regional averages become approximately equal.  Because a separate $u_i$ directly rescales each influence energy, this result only confirms the implementation on a directly controllable toy model; it does not verify removal of imbalance by shared Transformer parameters.

The raw normalization values reveal information hidden by the normalized profiles.  Algorithm~1 more than doubles $Z$ while flattening the shape, Algorithm~2 changes $Z$ only modestly, and Algorithm~3 reduces $Z$ to about $41.5\%$ of baseline.  A method can therefore appear balanced after normalization while substantially amplifying or suppressing total task sensitivity; both quantities must be interpreted together.

Algorithm~2 produces only a modest change.  Its influence imbalance decreases by approximately $7.7\%$, while $\mathsf{LIM}_{0.2}$ changes from $0.8949$ to $0.8898$.  The reason is visible in the response map: the adjustable uniformly supervised component has mass only \(1-\beta=0.15\), whereas the fixed final-token component and the causal propagator dominate the adjoint.  Thus the multiplicative rule points in the intended current-profile direction but encounters a poorly aligned model response.  This run numerically illustrates why
\eqref{eq:lim-response-alignment-condition}, rather than the sign of the weight discrepancy alone, is needed for a monotonic guarantee.

Algorithm~3 strongly reduces its scalar trace target in this directly rescalable parameterization: the observability imbalance falls from $39.4921$ to $0.3392$.  The normalized trace-density regional averages move from $(0.2905,0.0351,4.6041)$ to $(1.6292,0.8788,0.7343)$.  This is an algebraic check of trace rescaling, not evidence that training a shared-parameter Transformer equalizes observability matrices.  Equation~\eqref{eq:simulation-condition-number-invariance} shows that the directional condition numbers do not improve at all.  Its influence imbalance becomes $1.3952$, while the right average is pushed below the middle average and $\mathsf{LIM}_{0.2}=-2.7181$.  This is not a contradiction: Algorithm~3 equalizes a scalar structural surrogate, not the influence density or the full Gramian spectrum.
The result confirms the implementation of the scalar trace objective and the warning in Step~7 of Algorithm~3 that trace equalization need not monotonically improve $\mathsf{LIM}_\delta$.

\begin{figure}[htbp]
\centering
\includefigureorplaceholder[width=0.88\linewidth]{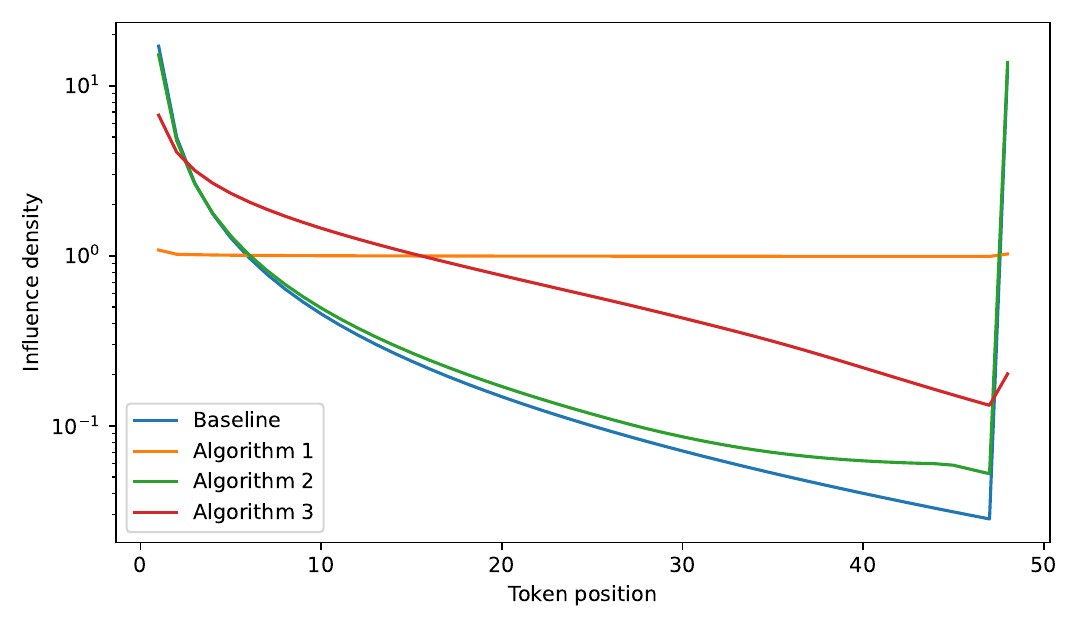}
\caption{Algebraic influence-density profiles.  Algorithm~1 directly rescales the profile toward uniformity, Algorithm~2 changes it only weakly, and Algorithm~3 over-corrects the right boundary.  The vertical scale is logarithmic.}
\label{fig:simulation-influence-profiles}
\end{figure}

\begin{figure}[htbp]
\centering
\includefigureorplaceholder[width=0.88\linewidth]{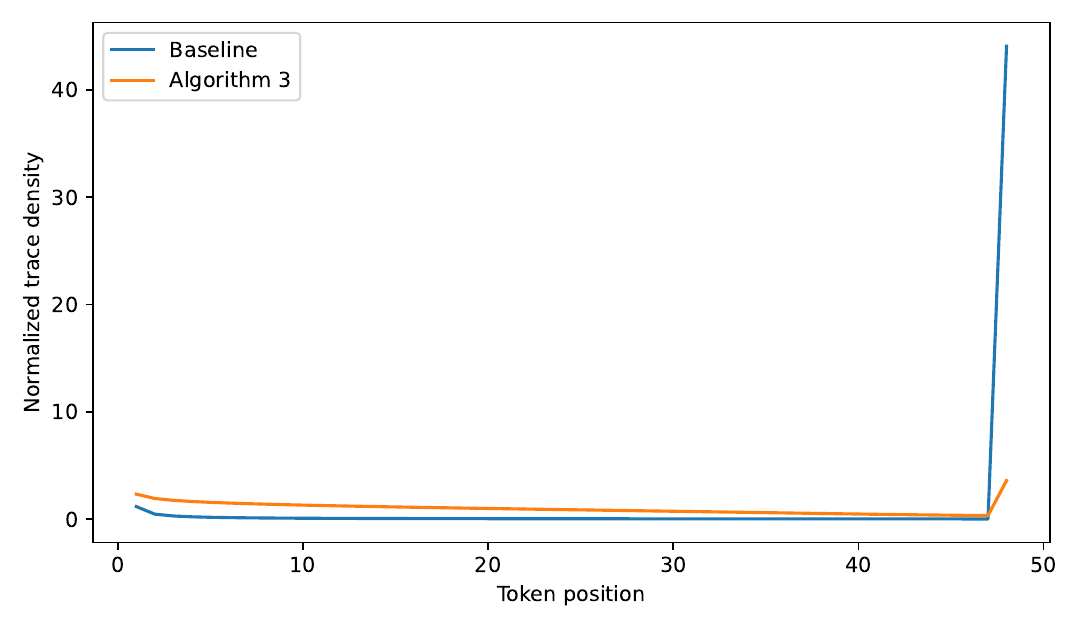}
\caption{Normalized trace-observability profile before and after the algebraic Algorithm~3 update.  Scalar trace balance does not imply eigenvalue or condition-number balance.}
\label{fig:simulation-observability-profiles}
\end{figure}

\begin{figure}[htbp]
\centering
\includefigureorplaceholder[width=0.88\linewidth]{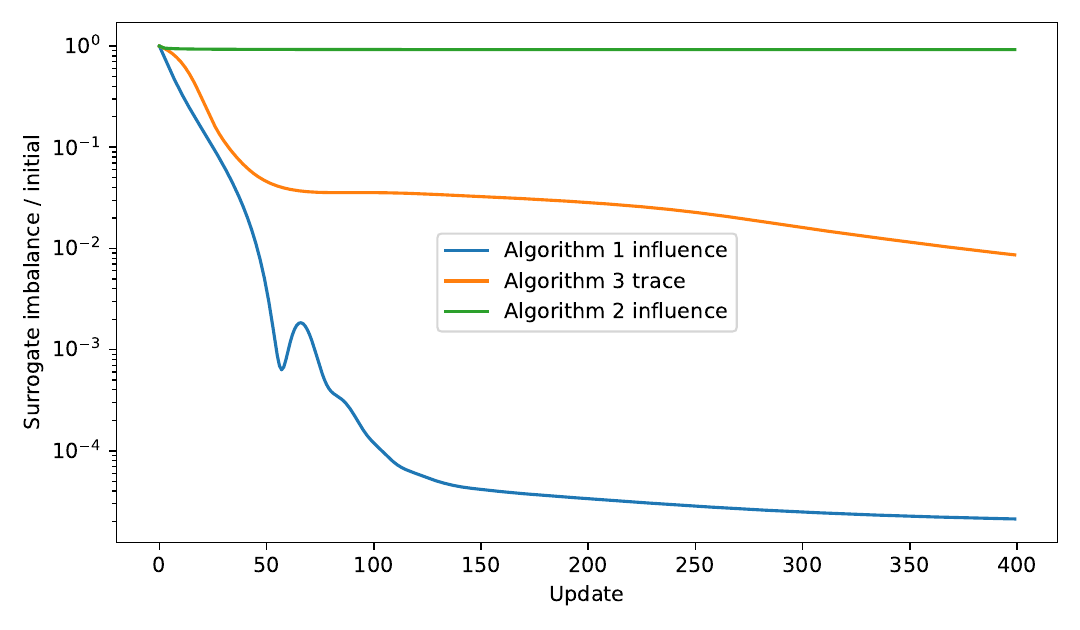}
\caption{Directly controlled surrogate imbalance divided by its initial value in the algebraic examples.}
\label{fig:simulation-convergence}
\end{figure}

\subsection{Probe bias and variance}

To check the stochastic implementation in Algorithm~3, we estimate the baseline trace profile with common Gaussian probes and repeat the experiment 1000 times.  For each probe count $N_{\rm probe}\in\{1,2,4,8,16,32\}$, we evaluate the empirical squared penalty in \eqref{eq:discrete-observability-penalty}.  The true unnormalized penalty is $0.3438$.  In the supplied reproducible run, the relative upward bias decreases from $44.0\%$ with one probe to $1.0\%$ with 32 probes, while the single-batch relative standard deviation decreases from $216.3\%$ to $24.4\%$.  The plotted dispersion is not the standard error of the 1000-repetition mean; that standard error is the displayed standard deviation divided by $\sqrt{1000}$.  These results agree with \eqref{eq:algorithm3-penalty-estimator-bias}: increasing the probe count reduces both the covariance term and stochastic variability, at the cost of more Jacobian--vector products.

\begin{figure}[htbp]
\centering
\includefigureorplaceholder[width=0.88\linewidth]{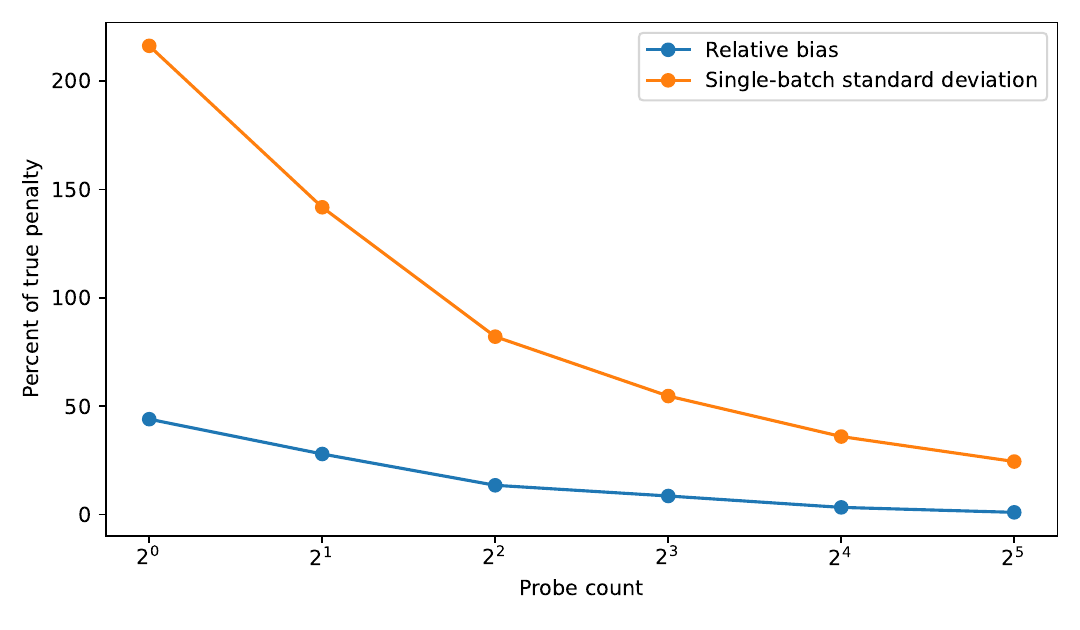}
\caption{Monte Carlo bias and single-batch standard deviation of the empirical squared observability penalty, expressed as percentages of the true penalty.  The plotted dispersion is not the standard error of the 1000-repetition mean.}
\label{fig:simulation-probe-bias}
\end{figure}

\subsection{Interpretation, reproducibility, and required Transformer validation}

The controlled examples support only three implementation-level statements: the directly differentiated influence penalty changes the directly rescaled influence profile; the outer-loop weighting rule can stall when its adjustable component is weak; and scalar trace balancing need not improve influence or directional observability.  They do not support claims about optimization through an actual Transformer, task preservation, or generalization.

A future empirical validation of the proposed training procedures should use at least a small causal Transformer on a synthetic long-context retrieval task and report: shared parameters rather than free positionwise gains; train and validation cross-entropy or retrieval accuracy; multiple random seeds with confidence intervals or standard errors; ablations over regularization strength, sequence length, depth, target density, and probe count; measured wall-clock and memory overhead; position-shift evaluation of both influence and retrieval performance; and direct empirical tests of the channel-energy sufficient conditions.  Until such an experiment is supplied, Algorithms~1--3 are proposals and the present section remains an algebraic controlled example.

The companion script \texttt{remedy\_simulations.py} constructs the stated matrices and regenerates the scalar tables and plots.  Reproducibility claims are limited to those algebraic outputs.

\section{Conclusion and limitations}\label{sec:conclusion}
The paper proves an analytic foundation for positional adjoint sensitivity in causal residual Transformers: well-posed masked depth dynamics, residual-to-flow and finite-token-to-Volterra approximation, readout-adjoint consistency under stated regularity, differentiability of the normalized influence density, its exact gradient-flow evolution, and an exact regional generator-term decomposition retaining covariance cross terms.

The paper does \emph{not} prove that causal masking and residual connections universally generate Lost-in-the-Middle. Primacy requires cross-data and cross-depth non-cancellation; recency requires a sufficiently right-biased terminal readout or task protocol and a moment-controlled interaction correction; a double deficit follows from Theorem~\ref{thm:channel-energy-double-deficit} only when measurable channel-energy and correlation bounds separate both boundaries from the middle. The regularizers are likewise proposals: their optimization response is not implied by the structural identities.

Open problems include deriving sharp correlation constants from architectural assumptions, replacing sufficient energy gaps by necessary-and-sufficient dynamical conditions, extending endpoint-aware finite-token estimates to weaker spatial and BV label regularity, obtaining moment bounds for heavy-tailed data, and testing whether the proposed diagnostics predict retrieval accuracy in production-scale long-context models. A further practical question is how to choose task-aligned observation maps that remain computationally tractable while preserving the causal interpretation of the readout.

\appendix

\section{Proof of the Residual-to-ODE Limit}\label{app:proof-resnet-ode}

\begin{proof}
Write
\[
G_\theta(t,X):=\zeta(t)\mathscr F_{\theta(t)}(X),
\qquad e_k:=\|X_k^\varepsilon-X_{t_k}^\theta\|_{\mathcal X}.
\]
The limiting flow and the discrete layer satisfy
\begin{align*}
X_{t_{k+1}}^\theta
&=X_{t_k}^\theta+
\int_{t_k}^{t_{k+1}}G_\theta(r,X_r^\theta)\dd r,\\
X_{k+1}^\varepsilon
&=X_k^\varepsilon+
\varepsilon\zeta(t_k)\mathscr F_{\theta_k^\varepsilon}
(X_k^\varepsilon)+d_k^\varepsilon.
\end{align*}
Using the state Lipschitz estimate from Proposition \ref{prop:transformer-lipschitz}, subtracting these identities gives
\begin{equation}\label{eq:convergent-control-error-recursion-pre}
e_{k+1}
\le(1+C\varepsilon)e_k
+\|d_k^\varepsilon\|_{\mathcal X}
+\tau_k,
\end{equation}
where
\[
\tau_k:=\int_{t_k}^{t_{k+1}}
\left\|
\zeta(t_k)\mathscr F_{\theta_k^\varepsilon}(X_{t_k}^\theta)
-\zeta(r)\mathscr F_{\theta(r)}(X_r^\theta)
\right\|_{\mathcal X}\dd r.
\]
Insert successively the terms with $(\zeta(r),\theta_k^\varepsilon,X_{t_k}^\theta)$ and $(\zeta(r),\theta(r),X_{t_k}^\theta)$.  Lipschitz continuity of $\zeta$, local Lipschitz dependence on $\theta$, and state Lipschitz continuity yield
\begin{align}
\tau_k
&\le C\varepsilon^2
+C\int_{t_k}^{t_{k+1}}
\|\theta_k^\varepsilon-\theta(r)\|\dd r
+C\int_{t_k}^{t_{k+1}}
\|X_r^\theta-X_{t_k}^\theta\|_{\mathcal X}\dd r\notag\\
&\le C\varepsilon^2
+C\int_{t_k}^{t_{k+1}}
\|\theta^\varepsilon(r)-\theta(r)\|\dd r.
\label{eq:convergent-control-local-defect}
\end{align}
The last inequality uses boundedness of the vector field, which gives
$\|X_r^\theta-X_{t_k}^\theta\|\le C|r-t_k|$.
Combining \eqref{eq:convergent-control-error-recursion-pre},
\eqref{eq:convergent-control-local-defect}, and
$\|d_k^\varepsilon\|\le C_{\rm split}\varepsilon^2$ gives
\[
e_{k+1}
\le(1+C\varepsilon)e_k+C\varepsilon^2
+C\int_{t_k}^{t_{k+1}}
\|\theta^\varepsilon(r)-\theta(r)\|\dd r.
\]
Discrete Gronwall and summation over $k$ imply
\begin{equation}\label{eq:convergent-control-grid-bound}
\max_{0\le k\le M}e_k
\le C_T\left(
\varepsilon+\|\theta^\varepsilon-\theta\|_{L^1(0,T)}
\right).
\end{equation}
For $t\in[t_k,t_{k+1})$, boundedness of the limiting vector field yields
$\|X_t^\theta-X_{t_k}^\theta\|\le C\varepsilon$.
Combining this with \eqref{eq:convergent-control-grid-bound} proves
\eqref{eq:resnet-ode-convergent-control-bound}.
The same argument with $\theta$ replaced by $\theta^\varepsilon$ removes the $L^1$ term and gives the stated shadowing interpretation, but the reference flow then depends on $\varepsilon$.
\end{proof}

\end{document}